\newtheorem{thm}{Theorem}
\newtheorem{lem}{Lemma}
\newtheorem{cor}{Corollary}
\newtheorem{prp}{Proposition}
\newtheorem{cdt}{Condition}
\newtheorem{asm}{Assumption}
\newtheorem{rmk}{Remark}
\newcounter{romancounter}
\newcommand{\rom}[1]{\setcounter{romancounter}{#1}\emph{(\roman{romancounter})}}
\newcommand{\algcomment}[1]{\hfill$\rhd\ $\text{#1}}
\def \rme {\mathrm{e}}
\def \gr {\nabla}
\def \bgr {\bm{\nabla}}
\def \reals {\mathbb{R}}
\def \transposed {\mathsf{T}}
\def \trs {\transposed}
\DeclareMathOperator*{\argmax}{arg\,max}
\DeclareMathOperator*{\argmin}{arg\,min}
\renewcommand{\tilde}{\widetilde}
\renewcommand{\hat}{\widehat}
\let\mid\undefined
\def \mid {\,\vert\,}
\newcommand \abs[1] {\left| #1 \right|}
\newcommand \norm[1] {\left\| #1 \right\|}
\newcommand \tnorm[1] {\| #1 \|}
\newcommand \norme[1] {\norm{#1}_{2}}
\newcommand \normf[1] {\norm{#1}_{\operatorname{F}}}
\newcommand \tnormf[1] {\tnorm{#1}_{\operatorname{F}}}
\newcommand \inner[2] {\left\langle #1, #2 \right\rangle}
\newcommand \innerf[2] {\inner{#1}{#2}_{\operatorname{F}}}
\newcommand \sbr[1] {\left( #1 \right)}
\newcommand \mbr[1] {\left[ #1 \right]}
\newcommand \lbr[1] {\left\{ #1 \right\}}
\newcommand \ttr[1] {\operatorname{tr}(#1)}
\newcommand \tdet[1] {\operatorname{det}(#1)}
\newcommand \diag[1] {\operatorname{diag}\!\sbr{#1}}
\newcommand \expectation[2] {\mathop{\mathbb{E}}_{#1}\!\mbr{#2}}
\newcommand \probability[1] {\mathop{\mathbb{P}}\!\sbr{#1}}
\def \a {\mathbf{a}}
\def \b {\mathbf{b}}
\def \e {\mathbf{e}}
\def \u {\mathbf{u}}
\def \v {\mathbf{v}}
\def \w {\mathbf{w}}
\def \x {\mathbf{x}}
\def \y {\mathbf{y}}
\def \z {\mathbf{z}}
\def \bdelta {\bm{\delta}}
\def \eps {\epsilon}
\def \veps {\varepsilon}
\def \BB {\mathcal{B}}
\def \DD {\mathcal{D}}
\def \LL {\mathcal{L}}
\def \SS {\mathcal{S}}
\def \WW {\mathcal{W}}
\def \XX {\mathcal{X}}
\def \YY {\mathcal{Y}}
\def \ftx {f_t(\x)}
\def \ftxt {f_t(\x_t)}
\def \ftu {f_t(\u)}
\def \gty {g_t(\y)}
\def \gtyt {g_t(\y_t)}
\def \reg {\normalfont\textsc{Reg}}
\def \runtime {\normalfont\textsc{Runtime}}
\def \EP {\normalfont\texttt{EP}}
\def \MP {\normalfont\texttt{MP}}
\def \lightons {\mbox{\text{LightONS}}\xspace}
\def \lightonscore {\mbox{\text{LightONS.Core}}\xspace}
\def \lightson {\mbox{\text{LightONS.Sketch}}\xspace}
\def \fastproj {\mbox{\text{FastProj}}\xspace}
\def \Yes {\textcolor{red}{\ding{51}}}
\def \No {\textcolor{blue}{\ding{55}}}
\crefname{thm}{Theorem}{Theorems}
\crefname{lem}{Lemma}{Lemmas}
\crefname{cor}{Corollary}{Corollaries}
\crefname{prp}{Proposition}{Propositions}
\crefname{dfn}{Definition}{Definitions}
\crefname{cdt}{Condition}{Conditions}
\crefname{asm}{Assumption}{Assumptions}
\crefname{rmk}{Remark}{Remarks}
\crefname{pbl}{Problem}{Problems}
\crefname{prfskt}{Proof Sketch}{Proof Sketches}
\crefname{als}{Analysis}{Analysis}
\crefname{equation}{Eq.}{Eqs.}
\crefname{table}{Table}{Tables}
\crefname{section}{Section}{Sections}
\crefname{subsection}{Section}{Sections}
\crefname{subsubsection}{Section}{Sections}
\crefname{appendix}{Appendix}{Appendices}
\crefname{algorithm}{Algorithm}{Algorithms}
\crefname{figure}{Figure}{Figures}
\title[A Simple, Optimal and Efficient Algorithm for Online Exp-Concave Optimization]{A Simple, Optimal and Efficient Algorithm for \\ Online Exp-Concave Optimization}
\begin{document}

\maketitle

\begin{abstract}
  Online eXp-concave Optimization (OXO) is a fundamental problem in online learning, where the goal is to minimize regret when loss functions are exponentially concave.
  The standard algorithm, Online Newton Step (ONS), guarantees an optimal $O(d \log T)$ regret, where $d$ is the dimension and $T$ is the time horizon.
  Despite its simplicity, ONS may face a computational bottleneck due to the \emph{Mahalanobis projection} at each round.
  This step costs $\tilde{O}(d^\omega)$ arithmetic operations for bounded domains, even for simple domains such as the unit ball, where $\omega \in (2,3]$ is the matrix-multiplication exponent.
  As a result, the total runtime can reach $\tilde{O}(d^\omega T)$, particularly when iterates frequently oscillate near the domain boundary.
  This paper proposes a simple variant of ONS, called \lightons, which reduces the total runtime to $O(d^2 T + d^\omega \sqrt{T \log T})$ while preserving the optimal regret.
  Deploying \lightons with the online-to-batch conversion implies a method for stochastic exp-concave optimization with runtime $\tilde{O}(d^3/\varepsilon)$, thereby answering an open problem posed by~\citet{koren2013open}.
  The design leverages domain-conversion techniques from parameter-free online learning and defers expensive Mahalanobis projections until necessary, thereby preserving the elegant structure of ONS and enabling \lightons to act as an efficient plug-in replacement in broader scenarios, including gradient-norm adaptivity, parametric stochastic bandits, and memory-efficient OXO.
\end{abstract}

\section{Introduction}
\label{sec:intro}

Online Convex Optimization (OCO) provides a versatile framework for online learning, with deep connections to stochastic optimization, game theory, and information theory~\citep{book/Cambridge/cesa2006prediction,hazan2016introductionOCO}.
Online eXp-concave Optimization (OXO) is an important instance, where loss functions are exponentially concave (exp-concave), i.e., $\exp(- \alpha f(\cdot))$ is concave for some $\alpha>0$ with $f$ denoting the loss function.
Exp-concavity naturally arises in many machine learning applications such as linear/logistic regression~\citep{foster1991OLR,vovk1997OLR,COLT18:improper-LR}, portfolio selection~\citep{cover1991universal}, linear-quadratic regulator control~\citep{ICML20:log-control}, and so on.
From a theoretical perspective, it introduces rich structures beyond convexity, allowing algorithms to exploit the local geometry of the loss landscape, often through local norms induced along the optimization trajectory.
Such structures yield sharper statistical guarantees:
Exp-concave losses admit a minimax-optimal regret of $O(d \log T)$~\citep{ordentlich-cover-1998-portfolio}, an exponential improvement over the $\Omega(\sqrt{T})$ lower bound for convex losses~\citep{COLT08:lower-bound}.

Online Newton Step (ONS)~\citep{journals/ml/HazanAK07} is the de facto standard for OXO, achieving an optimal regret bound of $O(d \log T)$ with constant time and space per round.
ONS exhibits remarkable simplicity, which has driven many advances in diverse optimization settings~\citep{aistats12:exp-concave-smooth,NIPS16:sketch-ONS,COLT18:black-box-reduction} and in machine learning applications even beyond regret minimization, such as generalized linear bandits~\citep{ICML16:Zhang-one-bit,NIPS17:online-GLB,ZYJ2025LogB}.
The important insight of ONS is to leverage exp-concavity by maintaining a \emph{Hessian-related matrix} that captures the local geometry of the optimization trajectory, which is crucial for a balance between statistical optimality and computational practicality.
Indeed, all known optimal OXO algorithms that avoid ONS-like Hessian maintenance, such as Exponential Weight Online Optimization (EWOO)~\citep{journals/ml/HazanAK07}, incur prohibitive time complexity $O(T^{24})$ due to integrating over the domain~\citep{bubeck2018sampling}.

However, the time-varying Hessian-related matrix in ONS necessitates a \emph{Mahalanobis projection} at each round to ensure feasibility, which introduces the computational bottleneck.
Specifically, the Mahalanobis projection solves the quadratic program $\Pi_\XX^M[\y] = \argmin_{\x \in \XX} (\x-\y)^\trs M (\x-\y)$ for some positive-definite and symmetric matrix $M$ and compact convex domain $\XX \subseteq \reals^d$.
When the domain $\XX$ is bounded, best-known implementations require $\tilde{O}(d^\omega)$ arithmetic operations due to matrix factorizations such as matrix square root~\citep{golub-loan-mat-compute}, where $\omega \in (2,3]$ is the matrix-multiplication exponent.
\footnote{The factorization underlying Mahalanobis projections is related to eigendecomposition, equivalent to finding roots of a $d$-degree polynomial, which is not exactly solvable by finitely many arithmetic operations when $d \ge 5$.}
Even for simple domains such as the unit ball and the probability simplex, Mahalanobis projection requires $\tilde{O}(d^\omega)$ time (see \cref{apd:alg-impl}) and $\tilde{O}(d^{\omega+0.5})$ time (via interior-point methods~\citep{interior-point-method}), respectively.
Thus, the total runtime of ONS can reach $\tilde{O}(d^\omega T)$, particularly when iterates frequently oscillate near the domain boundary.
Although \citet{mat-mul-exponent-best2024} establish $\omega < 2.3714$ theoretically, linear algebra libraries typically operate with $\omega=3$, yielding $\tilde{O}(d^3 T)$ time in practice.
In contrast, for convex and strongly convex online optimization, Online Gradient Descent (OGD)~\citep{ICML03:zinkvich} requires only Euclidean projections and achieves minimax-optimal regret with a runtime of $O(dT)$ for these simple domains.

Similar computational challenges arise in Stochastic eXp-concave Optimization (SXO).
As highlighted by a COLT'13 open problem of~\citet{koren2013open}, ONS remains the default algorithm for SXO when equipped with the online-to-batch conversion.
The optimal regret $O(d \log T)$ translates into an optimal sample complexity $T = \tilde{O}(d/\veps)$ for an excess risk of $\veps$, where $\tilde{O}(\cdot)$ hides poly-logarithmic factors in $d/\veps$.
Consequently, solving SXO with ONS incurs a total runtime of $\tilde{O}(d^{\omega+1}/\veps)$, which evaluates to $\tilde{O}(d^4/\veps)$ in practice.
The open problem asks for an SXO algorithm with runtime below $\tilde{O}(d^4/\veps)$, i.e., one that achieves both statistical optimality and computational efficiency.

\paragraph{Related work.}
The pursuit of computationally efficient OXO algorithms can be divided into two main research lines.
\rom{1} The first, which we focus on in this paper, aims to minimize runtime while preserving optimal regret.
The most closely related work is by~\citet{COLT23:OQNS}, who proposed OQNS (Online Quasi-Newton Steps), achieving the optimal regret $O(d \log T)$ with runtime $O(d^2 T \log T + d^\omega \sqrt{T \log T})$.
They employ a log-barrier to eliminate Mahalanobis projections, transferring the computational burden to Hessian-inverse evaluations under log-barrier regularization, which is similar to interior-point methods~\citep{HM10:EfficientIPMOCO}.
However, their method departs from canonical algorithmic frameworks such as online mirror descent (OMD) and Follow-the-Regularized-Leader (FTRL)~\citep{book22:FO-book}, making it difficult to accommodate diverse local norms and thereby limiting its applicability beyond standard OXO.
A concrete comparison between our method and OQNS is deferred to \cref{sec:app}.
\rom{2} The second line aims to reduce regret within a time or space budget, for example via matrix sketching techniques~\citep{NIPS16:sketch-ONS}.
These methods achieve linear-in-$d$ runtime and working memory, albeit under additional assumptions on the loss functions and domains.
This line also includes projection-free methods that trade statistical optimality for computational gains, leading to suboptimal regret bounds like $O(T^{2/3})$~\citep{colt23garber,JMLR22:ProjectionFreeDistributed}.

\begin{table}[t]
  \caption{Algorithmic upper bounds on regret and total runtime of OXO algorithms with respect to $d$ and $T$ over a simple domain of the unit ball. ``ONS-Like'' indicates whether the method can be integrated into other settings where ONS serves as the backbone, including gradient-norm adaptivity, parametric stochastic bandits, and memory-efficient OXO.}
  \label{tab:compare}
  \centering
  \begin{tabular}{cccc}
    \toprule
    \textbf{Algorithm} & \textbf{Regret} & \textbf{Total Runtime} & \textbf{ONS-Like} \\
    \midrule
    OGD~\citep{ICML03:zinkvich} & $\sqrt{T}$ & $d T$  & -- \\
    ONS~\citep{journals/ml/HazanAK07} & $d \log{T}$ & $d^2 T + d^\omega T \log T$ & -- \\
    OQNS~\citep{COLT23:OQNS} & $d \log{T}$ & $d^2 T \log T + d^\omega \sqrt{T \log T}$ & \No \\
    \midrule
    \lightons (This Paper) & $d \log{T}$ & $d^2 T + d^\omega \sqrt{T \log T}$ & \Yes \\
    \bottomrule
  \end{tabular}
\end{table}

\paragraph{Contributions.}
We propose \lightons (Light Online Newton Step), a simple variant of ONS that substantially reduces the runtime while retaining the optimal regret.
Our method preserves the elegant structure of ONS and inherits its applicability to various scenarios.
Our contributions include:
\begin{itemize}
  \item \textbf{An optimal and efficient algorithm for OXO.}~
    As summarized in \cref{tab:compare}, \lightons attains the best-known total runtime $O(d^2 T + d^\omega \sqrt{T \log T})$ to achieve the minimax-optimal regret $O(d \log T)$ for OXO.
    In terms of regret, \lightons matches ONS's dependence on all problem parameters in OXO ($T$, $d$, $D$, $G$, $\alpha$), whereas the prior method~\citep{COLT23:OQNS} suffers from large multiplicative constants.
    Empirical validations in \cref{apd:experiment} also corroborate the theoretical superiority of our method.
  \item \textbf{An optimal and efficient algorithm for SXO.}~
    Equipped with the online-to-batch conversion, \lightons yields (up to poly-logarithmic factors) the optimal sample complexity $T = \tilde{O}(d/\veps)$ for an excess risk of $\veps$, thus reducing the total runtime to $\tilde{O}(d^3/\veps)$ when $\veps=O(1/d)$.
    Our result answers a COLT'13 open problem of~\citet{koren2013open}.
    We further provide evidence that the runtime $\tilde{O}(d^3/\veps)$ is unlikely to be improved in practice.
  \item \textbf{Applicability across various scenarios.}~
    Importantly, \lightons preserves the online mirror descent (OMD) framework of ONS and inherits ONS's structural flexibility, especially in accommodating diverse local norms.
    In \cref{sec:app}, we demonstrate \lightons smoothly replaces ONS and reduces runtime in various scenarios, including gradient-norm adaptivity, parametric stochastic bandits, and memory-efficient OXO.
    In contrast, the prior method~\citep{COLT23:OQNS}, tailored for OXO, lacks the flexibility to fit these scenarios.
\end{itemize}

\paragraph{Outline.}
The remainder of this paper is organized as follows.
\cref{sec:prel} presents preliminaries.
\cref{sec:alg} introduces our method and key analytical ingredients.
\cref{sec:stochastic} discusses the implications of \lightons to SXO, which answers a COLT'13 open problem.
\cref{sec:app} demonstrates the broad applicability of \lightons inherited from ONS.
\cref{sec:conclusion} concludes the paper and discusses future directions.
Omitted technical details are deferred to the appendices.

\section{Preliminaries}
\label{sec:prel}

In this section, we introduce notations used throughout this paper, formalize the problem setting of Online eXp-concave Optimization (OXO), and review key relevant prior work.

\paragraph{Notations.}
Let $[a]_+ = \max\{0, a\}$, $[N] = \{1, \dots, N\}$, and $\BB(R) = \{ \x \mid \norme{\x} \leq R \}$.
For a positive-definite and symmetric matrix $M$, let $\lambda_i(M)$ be its $i$-th greatest eigenvalue and let $\norm{\x}_M = \sqrt{ \x^\trs M \x }$ be the Mahalanobis norm.
Let $\Pi_\XX^M[\y] = \argmin_{\x \in \XX} \norm{\x-\y}_M^2$ be the Mahalanobis projection, which exists and is unique for a convex and compact domain $\XX$~\citep{Book04:CvxOpt}, and let $\Pi_\XX[\y] = \Pi^I_\XX[\y] = \argmin_{\x \in \XX} \norme{\x-\y}^2$ be the Euclidean projection.
We write $\EP_\XX$ and $\MP_\XX$ for the runtime of Euclidean and Mahalanobis projection onto $\XX$, respectively.

\subsection{Problem Setting}
\label{sec:prel-formal}

Online Convex Optimization (OCO) unfolds as a game between a learner and an environment over $T$ rounds.
At each round $t \in [T]$, the learner selects a decision $\x_t$ from a compact convex domain $\XX \subseteq \reals^d$, and the environment simultaneously reveals a convex loss function $f_t: \XX \to \reals$;
then the learner incurs a loss $\ftxt$ and observes a gradient $\gr\ftxt$ for updates.
The performance of the learner is measured by its \emph{regret} against some comparator $\u \in \XX$, which is defined as:
\begin{equation*}
  \reg_T(\u) = \sum_{t=1}^{T} \ftxt - \sum_{t=1}^{T} \ftu .
\end{equation*}

The definition of exp-concavity~\citep{kivinen1999averaging,book/Cambridge/cesa2006prediction} and standard regularity assumptions of OXO~\citep{hazan2016introductionOCO,journals/ml/HazanAK07,COLT23:OQNS} are formally stated below.

\begin{asm}[bounded domain]
  \label{asm:bounded-domain}
  The domain $\XX \subseteq \reals^d$ is convex and compact, and has a diameter of $D$, i.e., $\max_{(\x,\y)\in\XX^2} \norme{\x-\y} \leq D$.
  For technical convenience, we further assume that $\max_{\x\in\XX} \norme{\x} \leq D/2$.
\end{asm}

\begin{asm}[bounded gradient]
  \label{asm:bounded-gradient}
  For any $t \in [T]$, the loss function $f_t: \XX \to \reals$ is differentiable and $G$-Lipschitz on $\XX$, i.e., $\max_{\x \in \XX} \norme{\gr\ftx} \leq G$.
\end{asm}

\begin{asm}[exp-concave loss]
  \label{asm:exp-concave}
  For any $t \in [T]$, the loss function $f_t: \XX \to \reals$ is $\alpha$-exp-concave on $\XX$, i.e., $\exp\sbr{- \alpha f_t(\cdot)}$ is concave on $\XX$.
\end{asm}

\subsection{Important Progress}
\label{sec:prel-progress}

In this subsection, we review two important OXO algorithms, namely Online Newton Step (ONS) \citep{journals/ml/HazanAK07} and Online Quasi-Newton Step (OQNS) \citep{COLT23:OQNS}.

\paragraph{Online Newton Step (ONS).}
\cref{alg:ons} summarizes the update of ONS and enjoys the following theoretical guarantees on its regret and runtime.

\begin{prp}[Theorem~2 of~\citet{journals/ml/HazanAK07}]
  \label{prp:ons-regret}
  Under \cref{asm:bounded-domain,asm:bounded-gradient,asm:exp-concave}, ONS (\cref{alg:ons}) with $\gamma_0 = \frac{1}{2} \min \{ \frac{1}{DG}, \alpha \}$ satisfies that, for any $\u \in \XX$,
  \begin{subequations}
    \begin{align}
      \label{eq:ons-regret}
      \reg_T(\u) & \leq \frac{d}{2\gamma_0} \log \bigg(  1 + \frac{G^2}{d \eps} T  \bigg) + \frac{\gamma_0 \eps D^2}{8} , \\
      \label{eq:ons-time}
      \runtime & \leq O( (d^2 + \MP_\XX) T ) .
    \end{align}
  \end{subequations}
\end{prp}

ONS follows the classical framework of online mirror descent (OMD)~\citep{book22:FO-book}, equipped with a time-varying Mahalanobis norm induced by the Hessian-related matrix $A_t$.
The computational bottleneck of ONS lies in Line~6 of \cref{alg:ons}, the Mahalanobis projection $\x_{t+1} =\Pi^{A_t}_{\XX}[\hat{\x}_{t+1}]$.
Once the decision $\hat{\x}_{t+1}$ exits the domain $\XX$, it must be projected back immediately.
Since each Mahalanobis projection takes $\tilde{O}(d^\omega)$ arithmetic operations and ONS projects in $O(T)$ rounds in the worst case, the crippling total runtime $\tilde{O}(d^\omega T)$ emerges, which evaluates to $\tilde{O}(d^3 T)$ in practice.

\citet{journals/ml/HazanAK07} introduce the following lemma to characterize the curvature induced by exp-concavity, which is crucial to ONS and other practical OXO algorithms.
This lemma is essentially Lemma~3 of~\citet{journals/ml/HazanAK07}, with improved constants.
Its proof is deferred to \cref{apd:alg-proof-exp-concave}.

\begin{lem}
  \label{lem:exp-concave}
  If a function $f: \XX \to \reals$ is $\alpha$-exp-concave and differentiable, then for any $(\x,\u) \in \XX^2$, $D \geq \norme{\x-\u}$, $G \geq \norme{\gr f(\x)}$, $\gamma \leq \gamma_0 = \frac{1}{2} \min \{ \frac{1}{DG}, \alpha \}$, it holds that
  \begin{equation}
    \label{eq:exp-concave-taylor}
    f(\x) - f(\u) \leq \gr f(\x)^\trs (\x-\u) - \frac{\gamma}{2} \sbr{ \gr f(\x)^\trs (\x-\u) }^2 .
  \end{equation}
\end{lem}

We remark that the preceding lemma necessitates the bounded domain assumption (\cref{asm:bounded-domain}) to achieve the optimal regret $O(d \log T)$.
In particular, a finite diameter $D$ is always required to obtain a non-zero curvature parameter $\gamma$, which reflects the curvature induced by exp-concavity.

\paragraph{Online Quasi-Newton Step (OQNS).}
OQNS~\citep{COLT23:OQNS} effectively reduces the runtime to reach the optimal asymptotic regret $O(d \log T)$, as the following proposition states.

\begin{prp}[Theorem~9 of~\citet{COLT23:OQNS}]
  \label{prp:oqns-regret}
  Under \cref{asm:bounded-domain,asm:bounded-gradient,asm:exp-concave}, OQNS (Algorithm 3 of~\citet{COLT23:OQNS}) with $\gamma = \frac{1}{2} \min \{ \frac{1}{2DG}, \alpha \}$ satisfies that, for any $\u \in \XX$,
  \begin{subequations}
    \begin{align}
      \label{eq:oqns-regret}
      \reg_T(\u) & \leq \frac{5d}{\gamma} \log\sbr{ d + T } + \frac{11DGd}{2} \log T + 3DGd , \\
      \label{eq:oqns-time}
      \runtime & \leq O \big(  (\EP_\XX + d^2 \log T) T + d^\omega \sqrt{T \log T}  \big) .
    \end{align}
  \end{subequations}
\end{prp}

OQNS eliminates Mahalanobis projections with a log-barrier, such as $\log \frac{1}{1 - \norme{\x}^2}$ for $\XX = \BB(1)$, shifting the computational burden to Hessian-inverse evaluations.
Key components of OQNS are illustrated below, with $\XX = \BB(1)$ and $G = 1$ as in~\citep{COLT23:OQNS}.
\begin{equation}
  \label{eq:oqns-update-objective}
  \begin{aligned}
    \x_{t+1} & = \x_t - \textrm{Approx}\big(  (\gr^2 \Phi_t(\x_t))^{-1} \gr \Phi_t(\x_t)  \big) , ~~ \text{where} \\
    \Phi_t(\x) & \triangleq \eta d \log \frac{1}{1 - \norme{\x}^2} + \frac{d + \eta}{2} \norme{\x}^2 + \sum_{s=1}^{t} \bigg(  \gr f_s(\x_s)^\trs \x  +  \frac{\gamma}{2} \big(\gr f_s(\x_s)^\trs (\x - \x_s)\big)^2  \bigg) .
  \end{aligned}
\end{equation}
Evaluating the Hessian-inverse still takes $O(d^\omega)$ time and OQNS mitigates this issue by approximating the Hessian-inverse gradient product with incremental updates.
The approximation procedure, $\textrm{Approx}(\cdot)$, returns within $O(d^2 \log T)$ time under proper conditions.
OQNS controls the number of exact Hessian-inverse evaluations to $O(\sqrt{T \log T})$, leading to a total runtime of $O(d^2 T \log T + d^\omega \sqrt{T \log T})$, while achieving the optimal regret $O(d \log T)$.

However, their improvement introduces large constant factors.
Specifically, when $\alpha \ge \frac{1}{DG}$, the leading term of ONS, $\log T$, carries coefficient $DGd$ in \cref{eq:ons-regret}, whereas OQNS's leading coefficient is $\frac{51}{2}DGd$ in \cref{eq:oqns-regret}.
Moreover, OQNS departs from the OMD framework of ONS, limiting its adaptability to broader scenarios, which will be discussed in \cref{sec:app}.

\section{Our Algorithm: \lightons}
\label{sec:alg}

Prior progress naturally raises a question:
Can we retain the optimality and simplicity of ONS, while achieving a total runtime that is competitive with, or even superior to, the state-of-the-art?
Our work is motivated by answering this question in the affirmative.

In this spirit, we present our algorithm \lightons.
In \cref{sec:alg-core}, to illustrate the key idea, we introduce the core algorithm that only differs from ONS by one line of code.
However, the core algorithm is essentially \emph{improper learning}.
To address this issue, we introduce the improper-to-proper conversion in \cref{sec:alg-conversion}, which yields the complete version of \lightons.

Furthermore, we empirically validate the superiority of our method, which corroborates theoretical guarantees.
Details of the experiments are deferred to \cref{apd:experiment}.

\begin{figure*}[t]
  \begin{minipage}[t]{0.44\linewidth}
    \begin{algorithm}[H]
      \caption{ONS~\citep{journals/ml/HazanAK07}}
      \label{alg:ons}
      \begin{algorithmic}[1]
        \REQUIRE preconditioner coefficient $\eps$.
        \STATE Initialize $\gamma_0 = \frac{1}{2} \min \{ \frac{1}{DG}, \alpha \}$, \\ $A_0 = \eps I$, $\x_1 = \bm{0}$.
        \vspace{0.30em}
        \FOR {$t = 1, \dots, T$}
        \STATE Observe $\gr\ftxt$.
        \STATE $A_t = A_{t-1} + \gr\ftxt \gr\ftxt^\trs$.
        \STATE $\hat{\x}_{t+1} = \x_t - \frac{1}{\gamma_0} A_t^{-1} \gr\ftxt$.
        \STATE $\x_{t+1} = \Pi^{A_t}_{\XX}[\hat{\x}_{t+1}]$.
        \vspace{1.33em}
        \ENDFOR
      \end{algorithmic}
    \end{algorithm}
  \end{minipage}
  \begin{minipage}[t]{0.52\linewidth}
    \begin{algorithm}[H]
      \caption{\lightonscore}
      \label{alg:lightonscore}
      \begin{algorithmic}[1]
        \REQUIRE preconditioner coef. $\eps$, deferral coef. $k$.
        \STATE Initialize $\gamma = \frac{1}{2} \min \{ \frac{2}{(k+1) DG}, \alpha \}$, $A_0 = \eps I$, \\ $\x_1 = \bm{0}$.
        \FOR {$t = 1, \dots, T$}
        \STATE Observe $\gr\ftxt$.
        \STATE $A_t = A_{t-1} + \gr\ftxt \gr\ftxt^\trs$.
        \STATE $\hat{\x}_{t+1} = \x_t - \frac{1}{\gamma} A_t^{-1} \gr\ftxt$.
        \STATE $\x_{t+1} =
        \begin{cases}
          \hat{\x}_{t+1} & \text{if} ~ \norme{\hat{\x}_{t+1}} \leq k D/2 \\
          \Pi^{A_t}_{\XX}[\hat{\x}_{t+1}] & \text{otherwise}
        \end{cases}$.
        \ENDFOR
      \end{algorithmic}
    \end{algorithm}
  \end{minipage}
\end{figure*}

\subsection{Amortizing Deferred Projections}
\label{sec:alg-core}

We first introduce the \lightonscore in \cref{alg:lightonscore}, which amortizes the costly Mahalanobis projections with a \emph{deferred-projection} mechanism.
\lightonscore only differs from ONS by one line of code, as shown in Line~6 of \cref{alg:ons,alg:lightonscore}.
While ONS projects onto $\XX$ immediately when the decision exits $\XX$, \lightonscore continues to update without projection outside $\XX$, and projects only when the decision exits an \emph{expanded} domain $\tilde{\XX}_k \subseteq \reals^d$ defined as
\begin{equation}
  \label{eq:expanded-domain}
  \tilde{\XX}_k \triangleq \BB(kD/2) = \{ \x \in \reals^d \mid \norm{\x}_2 \leq kD/2\},
\end{equation}
where $k > 1$ is the deferral coefficient.
Essentially, $\tilde{\XX}_k$ is obtained by scaling a Euclidean ball that contains $\XX$ by a factor of $k > 1$.

Clearly, a larger $k$ implies lower runtime, as it invokes fewer Mahalanobis projections.
The following lemma quantifies the relationship between the deferral coefficient $k$ and the number of Mahalanobis projections $N$.
Its proof is deferred to \cref{apd:alg-proof-proj-count}.

\begin{lem}
  \label{lem:proj-count}
  Under \cref{asm:bounded-domain}, and that the loss functions $\lbr{f_t}_{t=1}^T$ are $\alpha$-exp-concave, differentiable and $G$-Lipschitz on $\tilde{\XX}_k$, let $N$ denote the number of Mahalanobis projections in \lightonscore (\cref{alg:lightonscore}) over $T$ rounds, then
  \begin{equation}
    \label{eq:proj-count}
    N \leq \left\lfloor \frac{2}{(k - 1) D \gamma} \sqrt{\frac{d}{\eps} T} \right\rfloor .
  \end{equation}
\end{lem}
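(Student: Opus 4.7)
The plan is a displacement-budget argument: each Mahalanobis projection must consume at least $(k-1)D/2$ units of cumulative Euclidean displacement, while the total displacement produced by the $T$ updates is controlled by a telescoping trace potential; dividing the two upper bounds $N$. To set it up, let $t_1 < \dots < t_N$ enumerate the rounds on which \lightonscore projects, and set $t_0 := 0$ so that $\x_{t_0 + 1} = \x_1 = \bm{0} \in \XX$. For each consecutive pair, the post-projection iterate $\x_{t_i + 1}$ lies in $\XX$ and hence satisfies $\norme{\x_{t_i + 1}} \leq D/2$, while the triggering rule of Line~6 of \cref{alg:lightonscore} forces $\norme{\hat{\x}_{t_{i+1} + 1}} > k D / 2$. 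Because no projection is invoked on the intervening rounds, the per-step displacement $\hat{\x}_{s+1} - \x_s = - \gamma^{-1} A_s^{-1} \gr f_s(\x_s)$ telescopes to
\begin{equation*}
  \hat{\x}_{t_{i+1} + 1} - \x_{t_i + 1} = - \frac{1}{\gamma} \sum_{s = t_i + 1}^{t_{i+1}} A_s^{-1} \gr f_s(\x_s),
\end{equation*}
and the reverse triangle inequality gives $\norme{\hat{\x}_{t_{i+1} + 1} - \x_{t_i + 1}} \geq (k-1) D / 2$.

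Next, I would quantify the aggregate displacement through a telescoping \emph{trace-inverse} potential. Applying the Sherman--Morrison formula to the rank-one update $A_s = A_{s-1} + \gr f_s(\x_s) \gr f_s(\x_s)^\trs$ yields, after a short computation that also uses $A_s^{-1} \gr f_s(\x_s) = (1 + \norm{\gr f_s(\x_s)}_{A_{s-1}^{-1}}^2)^{-1} A_{s-1}^{-1} \gr f_s(\x_s)$, the exact identity
\begin{equation*}
  \ttr{A_{s-1}^{-1}} - \ttr{A_s^{-1}} = \sbr{ 1 + \norm{\gr f_s(\x_s)}_{A_{s-1}^{-1}}^2 } \norme{A_s^{-1} \gr f_s(\x_s)}^2 \geq \norme{A_s^{-1} \gr f_s(\x_s)}^2,
\end{equation*}
so telescoping from $s = 1$ to $T$ and using $A_0 = \eps I$ gives $\sum_{s=1}^T \norme{A_s^{-1} \gr f_s(\x_s)}^2 \leq \ttr{A_0^{-1}} = d / \eps$.

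Combining the two pieces via the triangle inequality across each interval and then Cauchy--Schwarz across all $T$ rounds delivers
\begin{equation*}
  \frac{N (k - 1) D}{2} \leq \frac{1}{\gamma} \sum_{s=1}^{T} \norme{A_s^{-1} \gr f_s(\x_s)} \leq \frac{1}{\gamma} \sqrt{T \sum_{s=1}^{T} \norme{A_s^{-1} \gr f_s(\x_s)}^2} \leq \frac{1}{\gamma} \sqrt{\frac{d T}{\eps}},
\end{equation*}
after which rearranging and taking the floor (since $N$ is a non-negative integer) yields the claim. I expect the main obstacle to be the second step: the familiar log-determinant potential used in the classical ONS regret analysis only bounds $\sum_s \norm{\gr f_s(\x_s)}_{A_s^{-1}}^2 \leq d \log\sbr{1 + G^2 T / (d \eps)}$, which after Cauchy--Schwarz would leak a spurious $\sqrt{\log T}$ factor into the projection-frequency budget. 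Switching to the trace-inverse potential $\ttr{A_s^{-1}}$, whose decrement matches $\norme{A_s^{-1} \gr f_s(\x_s)}^2$ up to a multiplicative factor $\geq 1$, is precisely what buys the tight $\sqrt{d T / \eps}$ rate.
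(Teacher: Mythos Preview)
Your proof is correct and rests on the same two ingredients as the paper's: the per-projection displacement bound $\norme{\hat{\x}_{t_{i+1}+1}-\x_{t_i+1}}>(k-1)D/2$ and the trace-inverse potential estimate $\sum_{s}\norme{A_s^{-1}\gr f_s(\x_s)}^2\le d/\eps$ (the paper packages the latter as \cref{lem:neg-tr}). The only difference is in how the two pieces are glued: you use a per-interval triangle inequality followed by one global Cauchy--Schwarz, whereas the paper applies Cauchy--Schwarz within each interval and then the AM--HM inequality $N^2/\sum_i(\tau_i-\tau_{i-1})\le\sum_i 1/(\tau_i-\tau_{i-1})$; your route is marginally more direct and lands on the identical bound.
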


On the other hand, a larger $k$ exacerbates the deviation of \lightonscore from ONS, potentially harming the regret guarantee.
In particular, increasing $k$ degrades the curvature parameter $\gamma$, jeopardizing the curvature benefit of exp-concavity.
In the extreme case where $k$ approaches infinity, $\gamma$ collapses to zero, and exp-concavity degenerates to mere convexity.
The following theorem establishes the theoretical guarantees of \lightonscore, revealing the trade-off between regret and runtime induced by the deferral coefficient $k$.
Its proof is deferred to \cref{apd:alg-proof-improper-theorem}.

\begin{thm}
  \label{thm:lightonscore}
  Under \cref{asm:bounded-domain}, and that the loss functions $\lbr{f_t}_{t=1}^T$ are $\alpha$-exp-concave, differentiable and $G$-Lipschitz on $\tilde{\XX}_k$, with $\gamma = \frac{1}{2} \min \{ \frac{2}{(k+1) DG}, \alpha \}$, \lightonscore (\cref{alg:lightonscore}) satisfies that, for any $\u \in \XX$,
  \begin{subequations}
    \begin{align}
      \label{eq:lightonscore-regret}
      \reg_T(\u) & \leq \frac{d}{2\gamma} \log \bigg(  1 + \frac{G^2}{d \eps} T  \bigg) + \frac{\gamma \eps D^2}{8} , \\
      \label{eq:lightonscore-time}
      \runtime & \leq O \big(  d^2 T + (k - 1)^{-1} \sqrt{dT/\eps} \cdot \MP_\XX  \big) .
    \end{align}
  \end{subequations}
\end{thm}

\paragraph{Trade-off of the deferral coefficient $k$.}
When $k$ increases, the regret bound in~\cref{eq:lightonscore-regret} grows as $O(\frac{1}{\gamma}) = O(k+1)$, whereas the time spent on Mahalanobis projections in~\cref{eq:lightonscore-time} decreases as $O(N) = O(\frac{1}{k-1})$.
With $k = 2$, \lightonscore already achieves a significant runtime improvement over ONS without sacrificing the optimal regret.
Specifically, the regret bound grows by at most a factor of $\frac{3}{2}$, while the number of projections is substantially reduced from $O(T)$ to $O(\sqrt{T})$.

\paragraph{Improper learning issue.}
Unfortunately, \lightonscore falls in the scope of \emph{improper learning} \citep{shaishaiuml}, as the algorithm's decisions $\x_t \in \tilde{\XX}_k \supset \XX$ may reside beyond the domain while the comparator $\u \in \XX$ is strictly constrained to the domain.
Moreover, \lightonscore requires additional assumptions that the Lipschitzness and exp-concavity of the loss functions extend to the expanded domain $\tilde{\XX}_k$.
Vitally, improper learning suppresses the theoretical performance limits of its proper counterparts.
A notable illustration is online logistic regression, where an improper learner achieves the regret bound of $O(d \log(GT))$~\citep{COLT18:improper-LR}, while proper learners are limited to a regret lower bound of $\Omega(d \rme^G \log T)$~\citep{COLT14:LR}.

\begin{algorithm}[t]
  \caption{\lightons}
  \label{alg:lightons}
  \begin{algorithmic}[1]
    \REQUIRE preconditioner coefficient $\eps$, deferral coefficient $k$.
    \STATE Initialize $\gamma' = \frac{1}{2} \min \{ \frac{1}{DG}, \alpha, \frac{4}{(k+1) c_f c_g DG} \}$, $A_0 = \eps I$, $\x_1 = \y_1 = \bm{0}$.
    \FOR {$t = 1, \dots, T$}
    \STATE Observe $\gr\ftxt$; and construct $\gr\gtyt$ satisfying \cref{cdt:domain-conversion}.
    \STATE $A_t = A_{t-1} + \bgr_t \bgr_t^\trs$, where $\bgr_t = c_f \gr\gtyt$.
    \STATE $\hat{\y}_{t+1} = \y_t - \frac{1}{\gamma'} A_t^{-1} \bgr_t$.
    \STATE $\y_{t+1} =
    \begin{cases}
      \hat{\y}_{t+1} & \text{if} ~ \norme{\hat{\y}_{t+1}} \leq k D/2 \\
      \Pi^{A_t}_{\BB(D/2)}[\hat{\y}_{t+1}] & \text{otherwise}
    \end{cases}$.
    \STATE $\x_{t+1} = \Pi_{\XX}[\y_{t+1}]$.
    \ENDFOR
  \end{algorithmic}
\end{algorithm}

\subsection{The Improper-to-Proper Conversion}
\label{sec:alg-conversion}

We introduce \lightons in \cref{alg:lightons}, which inherits the favorable regret-runtime trade-off of \lightonscore while ensuring proper learning over the domain $\XX$, thanks to domain-conversion techniques from parameter-free online learning~\citep{COLT18:black-box-reduction,ICML20:Ashok}.
The improper-to-proper conversion is conceptually straightforward yet technically subtle, as the surrogate loss preserves the curvature benefit of exp-concavity even though it is not exp-concave.

The conversion works by constructing surrogate losses and projecting surrogate decisions.
At each round, \lightons constructs a surrogate loss $g_t: \reals^d \to \reals$ and supplies $g_t$ to an underlying \lightonscore;
then the underlying \lightonscore outputs a surrogate decision $\y_t \in \tilde{\XX}_k$, which \lightons maps to a proper true decision $\x_t \in \XX$ via a Euclidean projection.

Any surrogate loss satisfying the condition below guarantees a valid conversion.

\begin{cdt}
  \label{cdt:domain-conversion}
  For some $c_f \geq 1$ and $c_g \geq 1$, the surrogate loss function $g_t: \reals^d \to \reals$ satisfies that, for any $\u \in \XX$, $\norme{\gr\gtyt} \leq c_g \norme{\gr\ftxt}$ and $\gr\ftxt^\trs (\x_t-\u) \leq c_f \gr\gtyt^\trs (\y_t-\u)$.
\end{cdt}

We note that two such conversions have been proposed in the literature.
The first, by~\citet{COLT18:black-box-reduction}, achieves $c_f = 2$ and $c_g = 1$.
Later, an improved conversion by~\citet{ICML20:Ashok} achieves $c_f = c_g = 1$.
We prefer the latter which yields smaller constants and tighter regret.

\begin{lem}[Theorem~2 of~\citet{ICML20:Ashok}]
  \label{lem:domain-conversion}
  Under \cref{asm:bounded-domain,asm:bounded-gradient}, let $\x_t = \Pi_\XX[\y_t]$, the surrogate loss function $g_t: \reals^d \to \reals$ and its subgradient at $\y_t$ are defined as follows:
  \begin{subequations}
    \begin{align}
      \label{eq:func-g}
      \gty & \triangleq \gr\ftxt^\trs \y  + \frac{[- \gr\ftxt^\trs (\y_t-\x_t)]_{+}}{\norme{\y_t-\x_t}} \norme{ \y-\Pi_\XX[\y] } ,
      \\
      \label{eq:grad-g}
      \gr\gtyt & = \gr\ftxt + \frac{[- \gr\ftxt^\trs (\y_t-\x_t)]_{+}}{\norme{\y_t-\x_t}^2} \sbr{ \y_t-\x_t } ,
    \end{align}
  \end{subequations}
  then for any $\u \in \XX$, $\norme{\gr\gtyt} \leq \norme{\gr\ftxt}$, and $\gr\ftxt^\trs (\x_t-\u) \leq \gr\gtyt^\trs (\y_t-\u)$.
\end{lem}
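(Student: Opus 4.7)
The plan is to verify the claim in three steps: validate that the displayed expression for $\gr\gtyt$ is a legitimate subgradient of $g_t$ at $\y_t$ via subdifferential calculus, then establish the two inequalities by direct expansion together with the variational characterization of Euclidean projection.

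First, I would decompose $\gty = \gr\ftxt^\trs \y + c \cdot \norme{\y - \Pi_\XX[\y]}$ with the nonnegative scalar $c = [- \gr\ftxt^\trs (\y_t-\x_t)]_+ / \norme{\y_t-\x_t}$. The linear part contributes $\gr\ftxt$ to the subdifferential at every point. The distance-to-convex-set function $\y \mapsto \norme{\y - \Pi_\XX[\y]}$ is convex and $1$-Lipschitz, and at $\y_t \notin \XX$ its subdifferential contains the outward unit vector $(\y_t - \x_t)/\norme{\y_t-\x_t}$. Summing and scaling yields the stated formula for $\gr\gtyt$; in the boundary case $\y_t \in \XX$, the formula consistently collapses to $\gr\ftxt$ since $c = 0$.

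Next, writing $\gr\gtyt = \gr\ftxt + \lambda(\y_t-\x_t)$ with $\lambda = [-\gr\ftxt^\trs(\y_t-\x_t)]_+ / \norme{\y_t-\x_t}^2 \geq 0$, I would prove the norm bound by case analysis on the sign of $\gr\ftxt^\trs(\y_t-\x_t)$. When the sign is nonnegative, $\lambda = 0$ and equality holds. Otherwise, substituting the explicit value of $\lambda$ into $\norme{\gr\gtyt}^2$ yields a cross term $2\lambda \gr\ftxt^\trs(\y_t-\x_t)$ equal to twice the negative of the additional quadratic term $\lambda^2 \norme{\y_t-\x_t}^2$, so their sum is nonpositive and the bound $\norme{\gr\gtyt} \leq \norme{\gr\ftxt} \leq G$ follows, recovering the last inequality from Assumption~\ref{asm:bounded-gradient}.

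The main step is the inner-product inequality. I would rearrange the difference as
\begin{equation*}
  \gr\gtyt^\trs(\y_t-\u) - \gr\ftxt^\trs(\x_t-\u) = \gr\ftxt^\trs(\y_t-\x_t) + \lambda (\y_t-\x_t)^\trs(\y_t-\u) ,
\end{equation*}
which is immediately nonnegative when $\gr\ftxt^\trs(\y_t-\x_t) \geq 0$ (since $\lambda = 0$). In the opposite case, substituting $\lambda$ and algebraically factoring reduces the right-hand side to
\begin{equation*}
  \frac{\gr\ftxt^\trs(\y_t-\x_t) \cdot (\y_t-\x_t)^\trs(\u-\x_t)}{\norme{\y_t-\x_t}^2} ,
\end{equation*}
a product of two nonpositive quantities: the first by the case assumption, and the second by the variational inequality $(\y_t - \Pi_\XX[\y_t])^\trs(\u - \Pi_\XX[\y_t]) \leq 0$ valid for every $\u \in \XX$. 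The main (though mild) obstacle is spotting this factorization that exposes the projection variational principle; once obtained, the sign argument closes the claim immediately.
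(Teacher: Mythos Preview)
Your proof is correct. The subgradient identification, the norm bound via expansion of $\norme{\gr\gtyt}^2$, and the factorization that isolates the projection variational inequality $(\y_t-\x_t)^\trs(\u-\x_t)\leq 0$ all go through exactly as you describe.

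There is nothing to compare against in this paper: \cref{lem:domain-conversion} is stated as Theorem~2 of~\citet{ICML20:Ashok} and is invoked without proof here. The paper relies on it as a black box and never reproduces the argument. Your write-up is essentially the standard proof from the cited source, so in that sense the approaches coincide; if anything, you have filled in a derivation that the present paper deliberately omits. One cosmetic point: the boundary case $\y_t\in\XX$ involves a $0/0$ in the displayed formula rather than a literal $c=0$, so you may want to phrase it as ``interpreted by convention as $\gr\ftxt$'' rather than asserting $c=0$.
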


\paragraph{Side effects of the conversion.}
The computational overhead of the conversion is negligible.
In each round, \cref{eq:grad-g} constructs the surrogate gradient $\gr\gtyt$ with only $O(d)$ time apart from the Euclidean projection, dominated by the $O(d^2)$ cost of updating $A_t$ and $A_t^{-1}$ (see \cref{apd:alg-impl}).

A notable observation is that the surrogate loss $g_t$ in \cref{lem:domain-conversion} is \emph{not} exp-concave, seemingly precluding the curvature benefit.
Fortunately, the following lemma shows that the surrogate loss $g_t$ inherits the curvature in a form closely mirroring \cref{lem:exp-concave}, with proof in~\cref{apd:alg-proof-surrogate-taylor}.

\begin{lem}
  \label{lem:surrogate-taylor}
  Under \cref{asm:bounded-domain,asm:bounded-gradient,asm:exp-concave}, with $\gamma' = \frac{1}{2} \min \{ \frac{1}{DG}, \alpha, \frac{4}{(k+1) c_f c_g DG} \}$, for any $\u \in \XX$ and any surrogate loss function $g_t$ satisfying \cref{cdt:domain-conversion}, let $\bgr_t = c_f \gr\gtyt$, then
  \begin{equation}
    \label{eq:surrogate-taylor}
    \begin{aligned}
      \ftxt - \ftu & \leq \gr\ftxt^\trs (\x_t-\u) - \frac{\gamma_0}{2} \sbr{ \gr\ftxt^\trs (\x_t-\u) }^2 \\
      & \leq \bgr_t^\trs (\y_t-\u) - \frac{\gamma'}{2} \sbr{ \bgr_t^\trs (\y_t-\u) }^2 .
    \end{aligned}
  \end{equation}
\end{lem}

We remark that, when the deferral coefficient $k \leq 3$ and $c_f = c_g = 1$ (e.g., in \cref{lem:domain-conversion}), the surrogate curvature parameter $\gamma'$ in \cref{lem:surrogate-taylor} is \emph{unimpaired} relative to the original $\gamma_0$ in \cref{lem:exp-concave}.

It is worth noting that prior work has also employed this domain-conversion technique to address projection-related issues, though for different purposes, including non-stationary online learning~\citep{JMLR25:efficient-nonstationary} and universal online learning~\citep{NeurIPS24:universal-1-projection}.

The following theorem establishes the theoretical guarantees of \lightons, and an immediate corollary specifies its regret and runtime by instantiating deferral coefficient $k=2$ and the surrogate loss in \cref{lem:domain-conversion}.
Its proof is deferred to \cref{apd:alg-proof-main-theorem}.

\begin{thm}
  \label{thm:lightons}
  Under \cref{asm:bounded-domain,asm:bounded-gradient,asm:exp-concave}, and that each surrogate gradient $\gr\gtyt$ takes $O(\EP_\XX + d)$ time, with $\gamma' = \frac{1}{2} \min \{ \frac{1}{DG}, \alpha, \frac{4}{(k+1) c_f c_g DG} \}$, \lightons (\cref{alg:lightons}) satisfies that, for any $\u \in \XX$,
  \begin{subequations}
    \begin{align}
      \label{eq:lightons-regret}
      \reg_T(\u) & \leq \frac{d}{2\gamma'} \log\bigg( 1 + \frac{c_f^2 c_g^2 G^2}{d \eps} T \bigg) + \frac{\gamma' \eps D^2}{8} , \\
      \label{eq:lightons-time}
      \runtime & \leq O \big(  (\EP_\XX + d^2) T + (k - 1)^{-1} d^{\omega + 0.5} \sqrt{T/\eps} \log T  \big) .
    \end{align}
  \end{subequations}
\end{thm}

\begin{cor}
  \label{cor:lightons}
  Under \cref{asm:bounded-domain,asm:bounded-gradient,asm:exp-concave}, using the surrogate loss function in \cref{lem:domain-conversion}, $k=2$, and $\eps = 1 + d \log T$, with $\gamma_0 = \frac{1}{2} \min \{ \frac{1}{DG}, \alpha \}$, \lightons (\cref{alg:lightons}) satisfies that, for any $\u \in \XX$,
  \begin{subequations}
    \begin{align}
      \label{eq:lightons-regret-specified}
      \reg_T(\u) & \leq \frac{d}{2\gamma_0} \log \bigg( 1 + \frac{G^2}{d \eps} T \bigg) + \frac{\gamma_0 \eps D^2}{8}, \\
      \label{eq:lightons-time-specified}
      \runtime & \leq O \big(  (\EP_\XX + d^2) T + d^{\omega} \sqrt{T \log T}  \big) .
    \end{align}
  \end{subequations}
\end{cor}

\begin{rmk}
  \lightons achieves a runtime better than both ONS and OQNS, at the cost of a regret slightly worse than ONS, as seen in the second inequality in \cref{eq:surrogate-taylor}.
  Nonetheless, this degradation is dominated by the problem-dependent parameters of OXO, namely $T$, $d$, $D$, $G$, $\alpha$.
  In their dependence on these parameters, \lightons's regret in \cref{eq:lightons-regret-specified} exactly matches ONS's regret in \cref{eq:ons-regret}.
\end{rmk}

\begin{rmk}
  \lightons retains the flexible OMD framework of ONS.
  In fact, \lightons differs from ONS only in two aspects: \rom{1} the deferred-projection mechanism (introduced in \lightonscore to boost efficiency) and \rom{2} the improper-to-proper conversion (introduced here to ensure proper learning), both of which are largely \emph{orthogonal} to the mirror-descent update in the standard ONS.
  Consequently, \lightons applies to various scenarios where ONS is essential, particularly when its mirror-descent update plays a critical role.
  We illustrate these applications in \cref{sec:app}.
\end{rmk}

\paragraph{Numerical implementation.}
The runtime in \cref{eq:lightons-time-specified} arises from efficient numerical primitives, detailed in \cref{apd:alg-impl}.
The term $O(d^2 T)$ reflects rank-one updates of $A_t^{-1}$, which avoids the $O(d^\omega)$ cost of inverting $A_t$ from scratch.
The term $O(d^\omega \sqrt{T \log T})$ accounts for the infrequent Mahalanobis projections onto $\BB(D/2)$:
each projection reduces to a one-dimensional root-finding problem solvable by bisection in $O(d^\omega \log T)$ time, and the deferred-projection mechanism caps the total number of such projections, so that their cumulative cost remains sublinear in $T$.

\section{Answering a COLT'13 Open Problem}
\label{sec:stochastic}

Via the online-to-batch conversion, our method applies to Stochastic eXp-concave Optimization (SXO), where the optimal OXO regret translates into the (near) optimal SXO sample complexity while substantially reducing the computational cost.
This extension answers a COLT'13 open problem posed by~\citet{koren2013open}, demonstrating our method's significance beyond OXO.

\subsection{Restatement of the Open Problem}
\label{sec:stochastic-restate}

SXO seeks to minimize an exp-concave function $F: \XX \to \reals$ over a convex domain $\XX \subseteq \reals^d$, and the learner has access to $F$ only through some stochastic oracle.
In this section, we consider the case where $F$ is the expectation of a random function $f: \XX \times \Xi \to \reals$ over a distribution on $\Xi$, and the learner has access to the gradient of $f(\cdot; \xi)$ with $\xi$ drawn from $\DD_\xi$.
We are interested in the sample complexity and total runtime required to find an $\veps$-optimal solution $\x_\veps$, i.e.,
\begin{equation*}
  F(\x_\veps) - \min_{\x \in \XX} F(\x) \leq \veps, \quad \text{where} ~ F(\x) = \expectation{\xi \sim \DD_\xi}{f(\x; \xi)}.
\end{equation*}
The regularity of the random function is formally stated below.

\begin{asm}
  \label{asm:exp-concave-stochastic}
  The loss function $F: \XX \to \reals$ is the expectation of a random function $f: \XX \times \Xi \to \reals$ over an unknown distribution $\DD_\xi$ on $\Xi$, i.e., $F(\x) = \expectation{\xi \sim \DD_\xi}{f(\x; \xi)}$.
  For any $\xi \in \Xi$, the random loss function $f(\cdot; \xi)$ is $\alpha$-exp-concave, differentiable and $G$-Lipschitz over $\XX$.
  For any query point $\x \in \XX$, the stochastic gradient oracle returns $\nabla f(\x; \xi)$ with $\xi$ i.i.d. drawn from $\DD_\xi$.
\end{asm}

\citet{koren2013open} notes that, with the online-to-batch conversion, ONS's $O(d \log T)$ regret for OXO implies a sample complexity of $\tilde{O}(d/\veps)$ and a total runtime of $\tilde{O}(d^4/\veps)$ for SXO,
\footnote{From a theoretical perspective, the total runtime of applying ONS to SXO is $\tilde{O}(d^{\omega+1}/\veps)$, as discussed in \cref{sec:intro}.
We adopt $\omega=3$ following the statement of the open problem~\citep{koren2013open}, with emphasis on implementability.}
where $\tilde{O}(\cdot)$ hides poly-logarithmic factors in $d/\veps$.
This quartic dependence on $d$ is even more pronounced in comparison to stochastic strongly convex optimization, where Online Gradient Descent (OGD) implies a total runtime $\tilde{O}(d/\veps)$~\citep{hazan2014stochasticstronglyconvex}, motivating the following open problem.

\begin{center}
  \shadowbox{
    \begin{minipage}[t]{0.9\columnwidth}
      \paragraph{Open problem~\citep{koren2013open}.}
      Under \cref{asm:exp-concave-stochastic} and that $\XX=\BB(1)$,
      \vspace{1mm}
      \begin{enumerate}[noitemsep,topsep=0pt]
        \item[(a)]
          Is it possible to find an SXO algorithm that attains the sample complexity of $\tilde{O}(d/\veps)$ with only linear-in-$d$ runtime per iteration, i.e., $\tilde{O}(d^2/\veps)$ runtime overall?
          \vspace{1mm}
        \item[(b)]
          Is it possible to perform any better than $\tilde{O}(d^4/\veps)$ runtime overall?
      \end{enumerate}
    \end{minipage}
  }
\end{center}

The first part of the open problem remains open.
In particular, \citet{COLT15:exp-concave} establish an information-theoretic sample complexity lower bound of $\Omega(d/\veps)$ for SXO, which implies a runtime lower bound of $\Omega(d^2/\veps)$ since each gradient query costs $\Omega(d)$ time.

\cref{sec:stochastic-guarantee} answers the second part in the affirmative by combining \lightons with the online-to-batch conversion, obtaining total runtime $\tilde{O}(d^3/\veps)$ while maintaining the optimal $\tilde{O}(d/\veps)$ sample complexity.
In \cref{sec:stochastic-contribution}, we further present evidence that this $\tilde{O}(d^3/\veps)$ runtime is likely unimprovable.
If confirmed, it would refute the first part and settle the full open problem.

\subsection{Answering the Open Problem with \lightons}
\label{sec:stochastic-guarantee}

Built upon the online-to-batch conversion of~\citet{AISTATS17:exp-concave-high-prob}, the \lightons-based SXO method reduces the total runtime to $\tilde{O}(d^3/\veps)$ for $\veps = O(1/d)$.
It is reasonable to only consider $\veps = O(1/d)$. Indeed, when $\veps = \Omega(1/d)$, the OGD-based SXO method achieves both a better sample complexity of $T = \tilde{O}(1/\veps^2)$ and a better runtime of $\tilde{O}(d/\veps^2)$.
The \lightons-based SXO method also achieves the (near) optimal convergence rate both with high probability and in expectation, as the following theorem shows.
Its proof is deferred to \cref{apd:stochastic-proof-lightons}.

\begin{thm}
  \label{thm:lightons-stochastic}
  Under \cref{asm:exp-concave-stochastic} and that $\XX=\BB(1)$,
  let \lightons (\cref{alg:lightons}) run for $T$ rounds with gradients $\{ \nabla f(\x_t; \xi_t) \}_{t=1}^{T}$ where $\{ \x_t \}_{t=1}^{T}$ are decisions,
  let $\bar{\x}_T = \frac{1}{T} \sum_{t=1}^{T} \x_t$, then
  \vspace{1mm}
  \begin{itemize}[noitemsep,topsep=0pt]
    \item For any $\delta \in (0,1)$, with probability at least $1-\delta$, when $T = \Theta(\frac{d}{\veps} \log \frac{d}{\veps} \log \frac{1}{\delta})$, \\ $F(\bar{\x}_T) - \min_{\x \in \XX} F(\x) \leq O(\veps)$ and $\runtime \leq \tilde{O}(\frac{d^3}{\veps} + \frac{d^{3.5}}{\sqrt{\veps}})$. \vspace{1mm}
    \item When $T' = \Theta(\frac{d}{\veps} \log \frac{d}{\veps})$, $\expectation{}{F(\bar{\x}_{T'}) - \min_{\x \in \XX} F(\x)} \leq O(\veps)$ and $\runtime \leq \tilde{O}(\frac{d^3}{\veps} + \frac{d^{3.5}}{\sqrt{\veps}})$.
  \end{itemize}
\end{thm}

\subsection{Discussions on SXO}
\label{sec:stochastic-contribution}

The \lightons-based SXO method in \cref{thm:lightons-stochastic} matches the best-known SXO runtime up to poly-logarithmic factors.
An online-to-batch conversion of OQNS also yields a total runtime of $\tilde{O}(d^3/\veps)$ but carries significant practical overheads due to its large constant factors, as discussed in \cref{sec:alg}.
Our experiments in \cref{apd:experiment}, conducted under SXO settings, confirm this difference.

We conjecture that no SXO algorithm can asymptotically beat total runtime $\tilde{O}(d^3/\veps)$.
A natural alternative approach to SXO is empirical risk minimization (ERM)~\citep{NIPS15:fast-rate-exp-concave,AISTATS17:exp-concave-high-prob}, which reduces SXO to offline exp-concave optimization and admits offline acceleration techniques.
The following two observations indicate that ERM-based SXO methods are unlikely to surpass the $\tilde{O}(d^3/\veps)$ runtime barrier, even when these methods use more working memory $\Omega(d^2 + d/\veps)$ to store all samples, whereas the OXO-based methods require only $O(d^2)$ memory.
\begin{itemize}
  \item \textbf{Fast matrix multiplication.}~
    Linear regression with random design, a special case of SXO, reduces to solving a linear system with $d$ variables and $\tilde{O}(d/\veps)$ equations.
    Although fast matrix multiplication accelerates this to $\tilde{O}(d^\omega/\veps)$ runtime~\citep{Ibarra-Moran-Hui-Reduction}, this runtime does not plausibly extend beyond linear regression and reverts to $\tilde{O}(d^3/\veps)$ in practice.
  \item \textbf{Cutting-plane methods.}~
    The best-known cutting-plane methods of~\citet{lee2015cuttingplane,jiang2020cuttingplane}, as detailed in \cref{apd:stochastic-proof-erm}, solve the offline exp-concave optimization problem to $O(\veps)$-accuracy in $\tilde{O}(d^3/\veps)$ time.
    We remark that cutting-plane methods further assume well-roundedness of the domain, which may bring additional computational overhead.
\end{itemize}

Moreover, we give an intuitive explanation of the runtime barrier $\tilde{O}(d^3/\veps)$ in \cref{apd:stochastic-proof-erm}.

\begin{table}[t]
  \caption{Regret and total runtime comparison between ONS and \lightons on three benchmark applications. ``OQNS?'' indicates whether OQNS supports the same application.}
  \label{tab:app-compare}
  \centering
  \small
  \begin{tabular}{cccccc}
    \toprule
    \textbf{Application}
    & \makecell{\textbf{Regret}\\\textbf{(ONS)}}
    & \makecell{\textbf{Runtime}\\\textbf{(ONS)}}
    & \makecell{\textbf{Regret}\\\textbf{(\lightons)}}
    & \makecell{\textbf{Runtime}\\\textbf{(\lightons)}}
    & \textbf{OQNS?} \\
    \midrule
    \makecell{Gradient-norm\\adaptivity}
    & $O(d \log G_T)$
    & $\tilde{O}(d^\omega T)$
    & same
    & $O(d^2 T + d^\omega \sqrt{T} \log T)$
    & hardly \\ \midrule
    \makecell{Generalized\\linear bandits}
    & $\tilde{O}(d\sqrt{T} + \kappa d^2)$
    & \makecell{$\tilde{O}(d^2KT\,+$\\$d^\omega T)$}
    & same order
    & \makecell{$\tilde{O}(d^2KT + d^\omega\,\cdot$\\$\min\{\sqrt{\kappa dT}\log\kappa,T\})$}
    & hardly \\ \midrule
    \makecell{Memory-efficient\\OXO}
    & $O(d' \log T)$
    & $\tilde{O}(d^\omega T)$
    & same
    & $\tilde{O}(d' d T + d^\omega \sqrt{T})$
    & intricate \\
    \bottomrule
  \end{tabular}
\end{table}

\section{Applications to Various Problems}
\label{sec:app}

ONS extends its influence far beyond the classical settings of OXO and SXO.
Since \lightons is designed to retain core updates of ONS with minimal modifications, it preserves its elegant structure.
This enables seamless integration into a range of applications where ONS serves as the computational core, not only maintaining its statistical advantages but also significantly enhancing efficiency.

We highlight three representative applications of ONS (gradient-norm adaptivity, parametric stochastic bandits, and memory-efficient OXO) where \lightons seamlessly fits, preserving statistical benefits while requiring minimal additional analytical effort.
In contrast, deploying OQNS in these settings is either infeasible or would require substantial and non-trivial modifications.
The main results are summarized in \cref{tab:app-compare}, comparing ONS with \lightons and OQNS.
Due to space constraints, we defer details and proofs to \cref{apd:app-proof}.

\subsection{Gradient-Norm Adaptivity}

As shown by~\citet{aistats12:exp-concave-smooth}, ONS achieves the following \emph{problem-dependent} regret bound that scales with the accumulated squared gradient norms $G_T$ instead of the time horizon $T$:
\begin{equation}
  \label{eq:gradient-norm-adaptive-oxo}
  \reg_T(\u) = O\sbr{ d \log G_T } , \quad
  \text{where} ~ G_T \triangleq \sum_{t=1}^T \norme{\gr\ftxt}^2 ,
\end{equation}
where $G_T$ can be $o(T)$ in benign environments, yielding regret far below worst-case bounds that scale with $T$.
Yet, since $G_T \leq G^2 T$, the gradient-norm adaptive bound safeguards the optimal regret $O(d \log T)$ against worst cases.
Prior work has leveraged \cref{eq:gradient-norm-adaptive-oxo} to achieve strong guarantees in various settings, including small-loss adaptivity for OXO with smoothness and comparator-norm adaptivity for unbounded OCO, as discussed below.

\paragraph{OXO with smoothness.}
\citet{aistats12:exp-concave-smooth} show that the gradient-norm adaptivity in~\cref{eq:gradient-norm-adaptive-oxo} can be enhanced to small-loss adaptivity~\citep{NIPS10:smooth,NIPS20:sword} in~\cref{eq:small-loss-quantity} under smoothness assumptions on loss functions $\lbr{f_t}_{t=1}^T$,
\begin{equation}
  \label{eq:small-loss-quantity}
  \reg_T = O\sbr{ d \log L_T } , \quad
  \text{where} ~ L_T \triangleq \min_{\u \in \XX} \sum_{t=1}^T \sbr{ f_t(\u) - \min_{\x \in \XX} f_t(\x) } ,
\end{equation}
where the regret scales with the cumulative loss of the best comparator in hindsight.

\paragraph{Unbounded OCO.}
For OCO with unbounded domains, i.e., $\XX = \reals^d$, \citet{COLT18:black-box-reduction} achieve a comparator-norm adaptive regret bound without prior knowledge of the comparator norm $\norme{\u}$ based on the gradient-norm adaptivity in~\cref{eq:gradient-norm-adaptive-oxo},
\begin{equation}
  \label{eq:unbounded-comparator-norm}
  \reg_T(\u) = \tilde{O}\sbr{\norme{\u} \sqrt{d G_T}} , \quad
  \text{for any unbounded comparator} ~ \u \in \reals^d .
\end{equation}
Such guarantees for unbounded comparators are unachievable via classical OCO algorithms that explicitly depend on the domain diameter $D$, for example, OGD demands an explicit $D$ to obtain $O(D G \sqrt{T})$ regret~\citep{ICML03:zinkvich,COLT08:lower-bound}.

\paragraph{Improvements by \lightons.}
Nevertheless, both~\citep{aistats12:exp-concave-smooth,COLT18:black-box-reduction} incur the worst-case $\tilde{O}(d^\omega T)$ runtime bottleneck of ONS.
Replacing ONS in these pipelines, \lightons preserves the same regret bounds while improving the runtime to $O(d^2 T + d^\omega \sqrt{T} \log T)$.

In contrast, OQNS can hardly achieve gradient-norm adaptivity, let alone the small-loss adaptivity and comparator-norm adaptivity.
Its computational efficiency relies on the log-barrier, which inevitably introduces an $O(\log T)$ term which precludes adaptation to $G_T$.
The OMD framework permits a flexible trade-off between stability and bias in the regret decomposition.
While \lightons retains this flexibility, the log-barrier in OQNS overly suppresses the stability term, leading to a large bias term that does not scale with the gradient norms.
Details are deferred to \cref{apd:app-proof-norm}.

\subsection{Parametric Stochastic Bandits}

Parametric stochastic bandits model decision-making problems with partial feedback, and generalized linear bandits (GLB) is a fundamental instance where the expected loss depends on an unknown parameter through a known link function.
This formulation introduces non-linearities into linear bandits, enhancing their expressivity while retaining tractable solutions.

\paragraph{Current results.}
A notable challenge of GLB is the dependence on the condition number $\kappa \propto \exp(D)$ in regret bounds.
Directly applying ONS to GLB yields an $\tilde{O}(\kappa d \sqrt{T})$ regret~\citep{NIPS17:online-GLB}, which becomes vacuous for $\kappa = \Omega(\sqrt{T})$.
Recently, \citet{ZYJ2025LogB} propose the first \emph{jointly efficient} GLB algorithm, which achieves an $\tilde{O}(d \sqrt{T} + \kappa d^2)$ regret (statistically efficient) and is one-pass (computationally efficient).
Their key technique is an ONS-based subroutine with carefully designed non-monotonic local norms, which enjoys the one-pass efficiency of ONS and eliminates $\kappa$ from leading terms in regret.

\paragraph{Improvements by \lightons.}
By replacing the ONS-based subroutine with a \lightons-based counterpart, we retain the same regret bound and improve the runtime from $\tilde{O}((d^2 K + d^\omega) T)$ to $\tilde{O}(d^2 K T + d^\omega \cdot \min\{\sqrt{\kappa dT} \log\kappa, T\})$, where $K$ is the number of arms for bandits.
When the regret is sublinear in $T$, i.e., $\kappa = o(T)$, our improvement yields a strictly lower runtime order.

OQNS can hardly be adapted to the method of~\citet{ZYJ2025LogB}.
The reasons are two-fold:
\rom{1} The GLB analysis treats the ONS-based subroutine as a white-box, exploiting negative terms in the OMD analysis, while OQNS fails to provide such analytical properties due to its deviation from OMD;
\rom{2} The GLB method relies on non-monotonic local norms, while OQNS is tailored for monotonic local norms as in \cref{eq:oqns-update-objective}, to which OQNS's analysis is deeply coupled.
On the contrary, the key ingredients of \lightons (deferred projection and domain conversion) are largely orthogonal to the core mirror-descent updates, seamlessly accommodating customized local norms and preserving necessary analytical properties for GLB.
Details are deferred to \cref{apd:app-proof-bandits}.

\subsection{Memory-Efficient OXO}

Another computational challenge of ONS is its $O(d^2)$ working memory, in contrast to OGD's $O(d)$ working memory.
\citet{NIPS16:sketch-ONS} propose Sketched Online Newton Step (SON), which mitigates this issue by incorporating matrix sketching into ONS.

\paragraph{Current results.}
SON achieves linear-in-$d$ runtime and working memory when the sketched dimension $d' \ll d$.
The value of $d'$ typically depends on the problem's intrinsic dimensionality, such as the number of non-zero eigenvalues in the Hessian-related matrix.
However, SON demands additional assumptions.
Its domain $\XX_t$ must be an intersection of two parallel half-spaces, onto which Mahalanobis projections admit closed-form with $O(d^2)$ time.
Under the standard OXO setting (\cref{asm:bounded-domain,asm:bounded-gradient,asm:exp-concave}), SON loses its computational advantage and reverts to the high computational cost of ONS.
The domain restriction further imposes a stronger assumption on loss functions.
The curvature parameter $\gamma_0 = \frac{1}{2} \min \{\frac{1}{DG}, \alpha\}$ in \cref{lem:exp-concave} collapses to zero for the unbounded intersection-of-parallel-half-spaces domain, reducing exp-concavity to convexity.
Consequently, SON assumes an explicit quadratic property mirroring \cref{lem:exp-concave}, rather than the standard \cref{asm:exp-concave}.

\paragraph{Improvements by \lightons.}
We propose \lightson (\cref{alg:lightson} in \cref{apd:app-proof-sketch}), which replaces \lightons's accesses to the Hessian-related matrix $A_t$ with the sketching primitives as in SON.
\lightson combines the projection efficiency of \lightons with the memory efficiency of SON.
This hybrid method extends linear-in-$d$ runtime and working memory to the standard OXO setting, while retaining SON's regret $O(d' \log T)$.
Theoretical guarantees of the hybrid method are summarized in the following theorem.

\begin{thm}[\lightons's improvement for memory-efficient OXO]
  \label{thm:lightson}
  Under \cref{asm:bounded-domain,asm:bounded-gradient,asm:exp-concave}, \\ with $\gamma_0 = \frac{1}{2} \min \{ \frac{1}{DG}, \alpha \}$, \lightson (\cref{alg:lightson}) satisfies that, for any $\u \in \XX$,
  \begin{subequations}
    \begin{align}
      \label{eq:lightson-regret}
      \reg_T(\u) & \leq \frac{d'}{\gamma_0} \log \sbr{1 + \frac{G^2}{2d' \eps} T} + \frac{\gamma_0 \eps D^2}{8} + \frac{\Delta_{1:T}}{2\gamma_0} , \\
      \label{eq:lightson-time}
      \runtime & \leq O\sbr{  \sbr{\EP_\XX + d' d \log T} T + d^\omega \sqrt{(d + \Delta_{1:T}) T/\eps} \log T  }  ,
    \end{align}
  \end{subequations}
  and the working memory is $O(d'd)$.
  With the surrogate gradient $\gr\gtyt$ as in \cref{lem:domain-conversion}, the cumulative sketching error $\Delta_{1:T}$ can be bounded as
  \begin{equation}
    \label{eq:lightson-delta}
    \Delta_{1:T} \leq \min_{j \in [d']} \frac{2d'}{(d'-j+1)\eps} \sum_{i=j}^{d} \lambda_i\sbr{\sum_{t=1}^{T} \gr\gtyt \gr\gtyt^\trs} .
  \end{equation}
\end{thm}

Although \citet{COLT23:OQNS} suggest integrating sketching into OQNS, their algorithm's considerable structural complexity may pose significant obstacles to algorithmic design and theoretical analysis.
Details are deferred to \cref{apd:app-proof-sketch}.

\section{Conclusion}
\label{sec:conclusion}

For online exp-concave optimization (OXO), we propose \lightons, a simple yet powerful variant of ONS.
By combining the deferred-projection mechanism with the improper-to-proper conversion, \lightons achieves significant reductions in worst-case total runtime while preserving the regret optimality of ONS.
These gains extend to the stochastic optimization setting, answering a COLT'13 open problem on efficient and optimal SXO~\citep{koren2013open}.
Moreover, due to its fidelity to the mirror-descent update of ONS, \lightons serves as an efficient drop-in replacement of ONS across diverse applications, including gradient-norm adaptivity, parametric stochastic bandits, and memory-efficient settings, all without compromising statistical guarantees.

Several important directions remain open.
First, while \lightons adapts to gradient norms, achieving adaptivity to \emph{gradient variation} for OXO within $\tilde{O}(d^2 T)$ runtime remains an open challenge~\citep{COLT12:variation-Yang,JMLR24:Sword++official}.
Second, applying the \lightons technique to other preconditioned online learning algorithms may not improve their (asymptotic) runtime if the bottleneck lies in other operations, such as explicit matrix factorization rather than Mahalanobis projections (e.g., in AdaGrad~\citep{duchi2011adaptive}).
Accelerating these algorithms remains unclear.
These future directions motivate the development of broader, general-purpose acceleration techniques for Hessian-related online learning algorithms.

\acks{This work was supported by National Science and Technology Major Project (2022ZD0114800), NSFC (62361146852), the Fundamental and Interdisciplinary Disciplines Breakthrough Plan of the Ministry of Education of China (No. JYB2025XDXM118), and the ``111 Center'' (No. B26023).}

\bibliography{./online_learning.bib}

\newpage
\appendix
\crefalias{section}{appendix}
\crefalias{subsection}{appendix}
\crefalias{subsubsection}{appendix}
\section{Empirical Validation}
\label{apd:experiment}

We conduct numerical experiments to validate the theoretical guarantees of \lightons, especially its negligible statistical gap from ONS and its non-asymptotic statistical advantages over OQNS.

\paragraph{Setup.}
We evaluate on two fundamental tasks, linear and logistic regression, over the Euclidean ball domain $\XX = \BB(D/2) \subset \reals^d$ over a time horizon of $T$.
With $\{ (\x_t, y_t) \}_{t=1}^T$ sampled i.i.d. from the folded standard Gaussian distribution, i.e., each entry of $\x_t$ and $y_t$ is the absolute value of a standard Gaussian random variable, the (online) loss functions are designed as
\begin{equation*}
  \ell^{\text{linear}}_t(\w) = \frac{1}{2} \sbr{ \sqrt{\frac{G}{D}} \x_t^\trs \w + \frac{\sqrt{DG}}{2} y_t }^2 , \quad
  \ell^{\text{logistic}}_t(\w) = \log \sbr{ 1 + \exp \sbr{ G \x_t^\trs \w } } .
\end{equation*}
Both loss functions are $G$-Lipschitz and $\alpha$-exp-concave with $\alpha = 1/(DG)$ and $\alpha = \exp(-DG)$, respectively.
Our logistic regression setup slightly differs from the standard formulation, as we omit the binary label $y_t \in \{0,1\}$ for simplicity, which does not affect exp-concavity.

Three algorithms, \lightons, ONS, and OQNS, are configured with their theoretically optimal parameters.
The implementation of ONS and \lightons follows \cref{alg:ons,alg:lightons}, respectively;
OQNS is implemented in accordance with Algorithm~3 of~\citet{COLT23:OQNS}.

\paragraph{Results.}
\cref{fig:experiment} displays all results averaged over 5 independent runs with the same seeds, where the averaged performance is shown as a dark line, while individual runs are in transparent lines.

Our experiments confirm that computational gains of \lightons incur negligible loss in statistical performance.
As shown in \cref{fig:lin-ir,fig:log-ir,fig:lin-regret,fig:log-regret}, curves of \lightons (red dotted) overlap with those of ONS (green solid).
This provides empirical evidence that \lightons retains the sharp regret constants of ONS.
In contrast, OQNS (blue dashed) shows noticeably higher regret, indicating that its theoretical bounds may carry greater constants.

\begin{figure}[!h]
  \centering

  \subfigure[Instantaneous regret.\label{fig:lin-ir}]{
    \includegraphics[width=0.3\textwidth]{./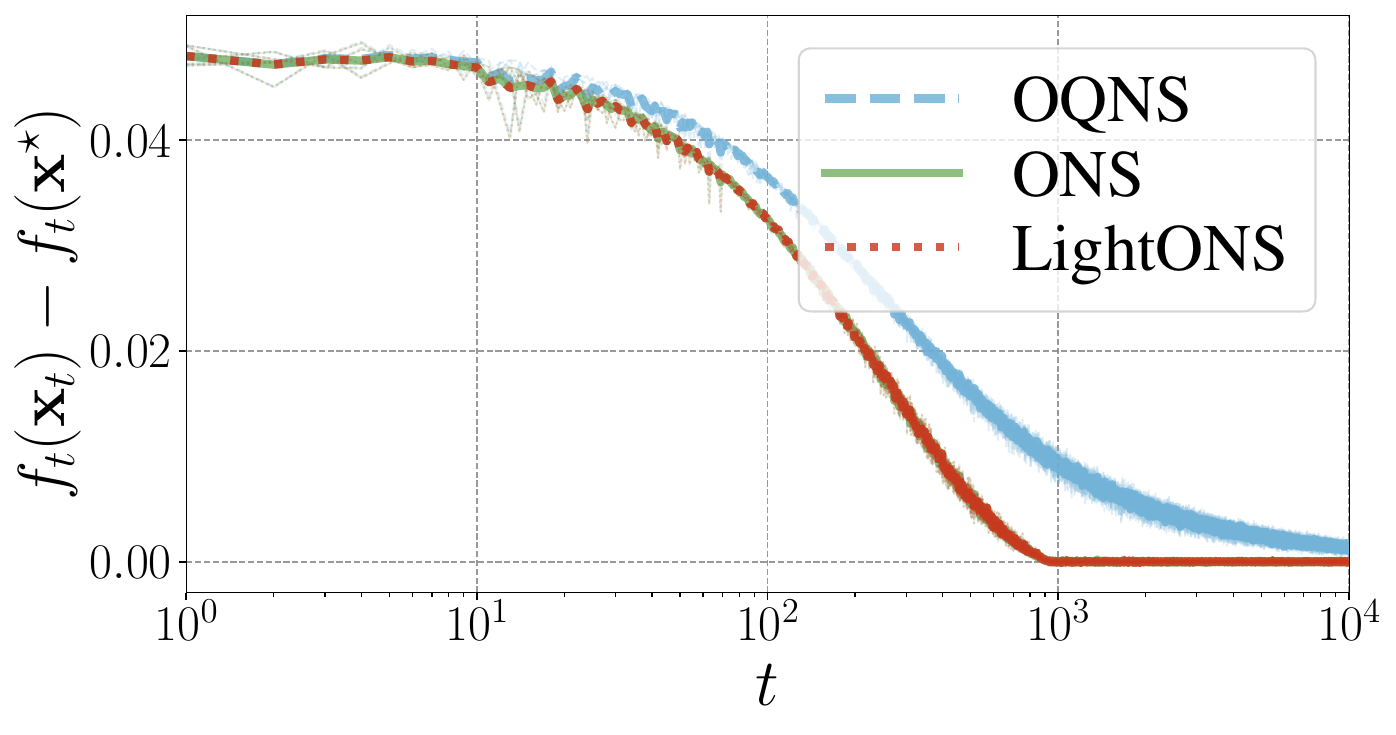}
  }
  \subfigure[Regret.\label{fig:lin-regret}]{
    \includegraphics[width=0.3\textwidth]{./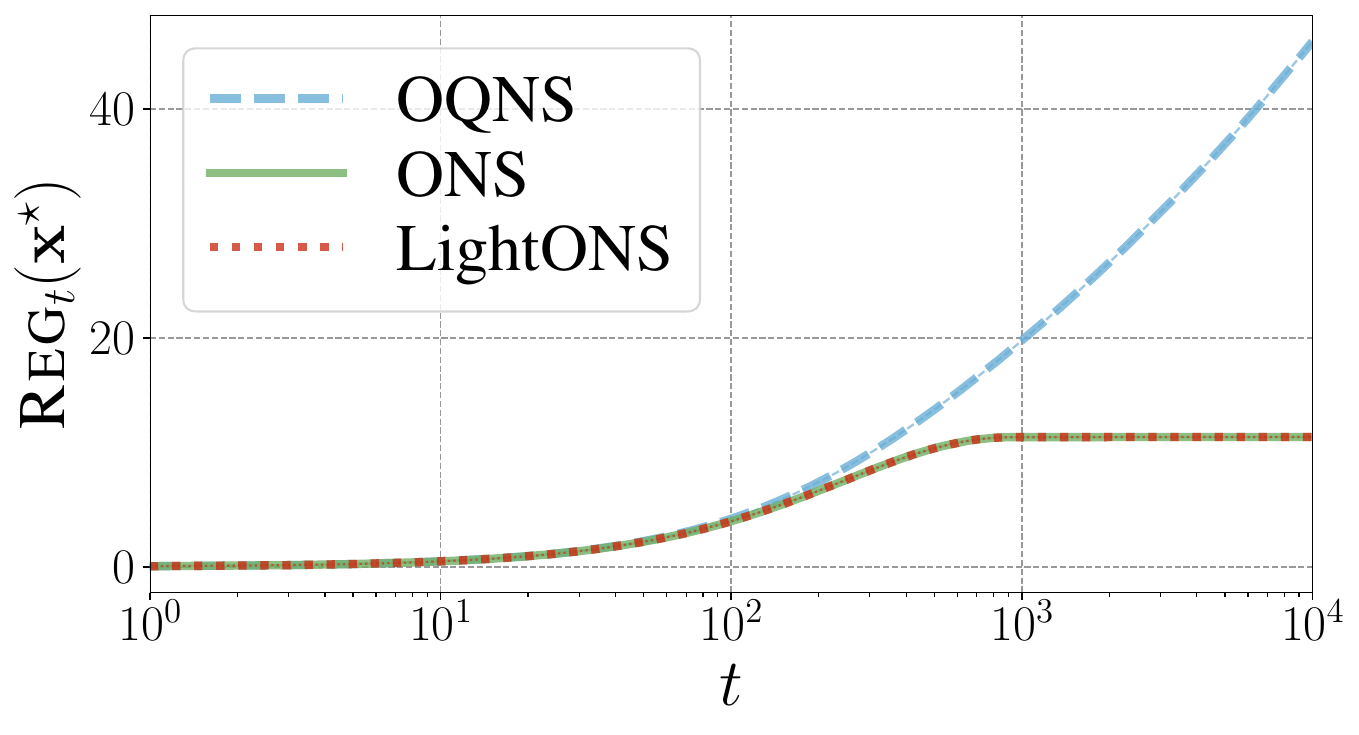}
  }
  \subfigure[Norm.\label{fig:lin-norm}]{
    \includegraphics[width=0.3\textwidth]{./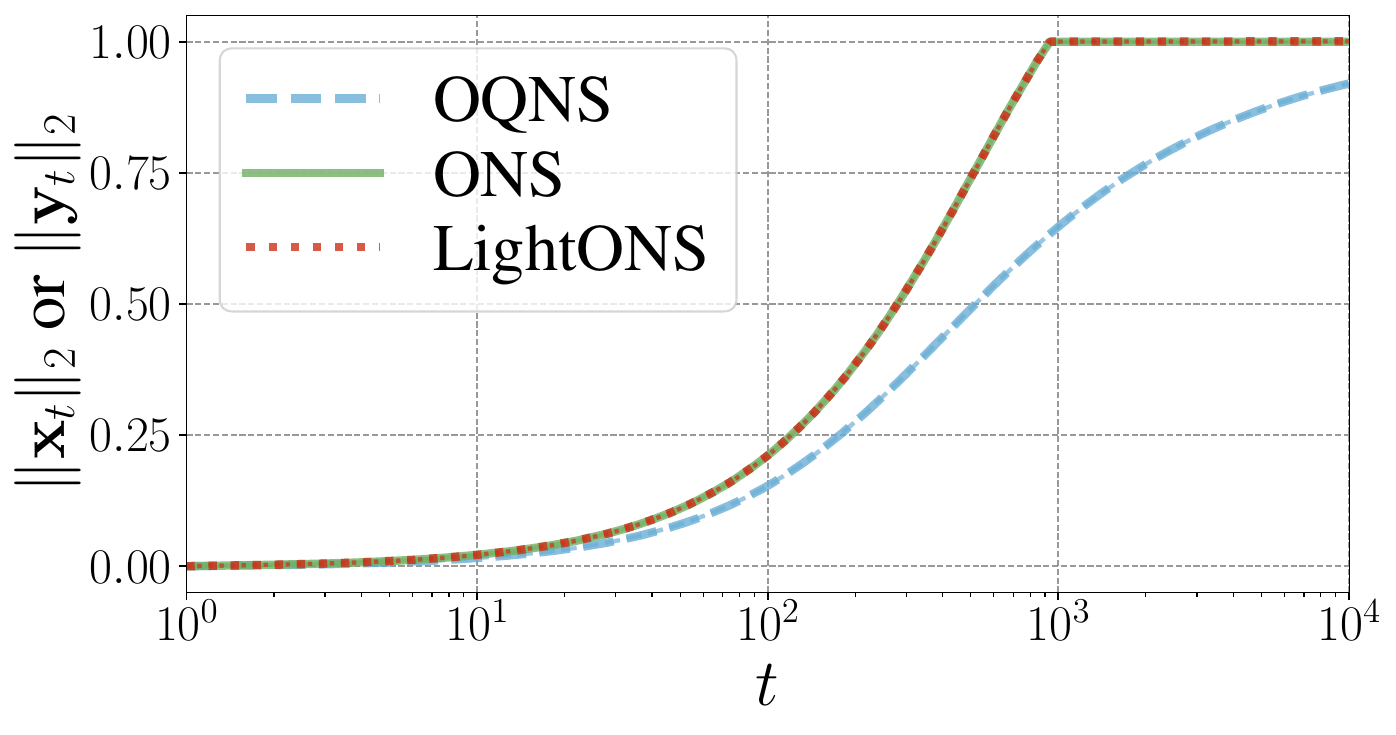}
  }

  \subfigure[Instantaneous regret.\label{fig:log-ir}]{
    \includegraphics[width=0.3\textwidth]{./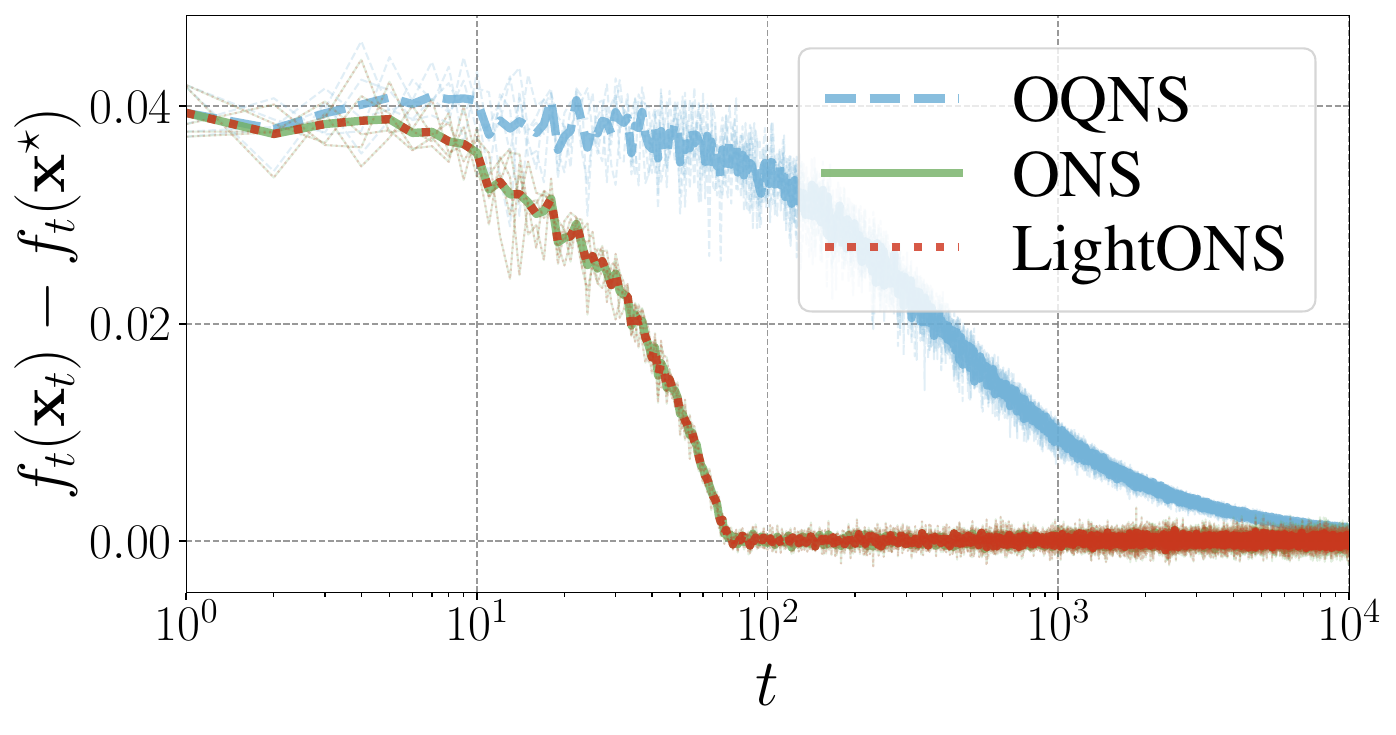}
  }
  \subfigure[Regret.\label{fig:log-regret}]{
    \includegraphics[width=0.3\textwidth]{./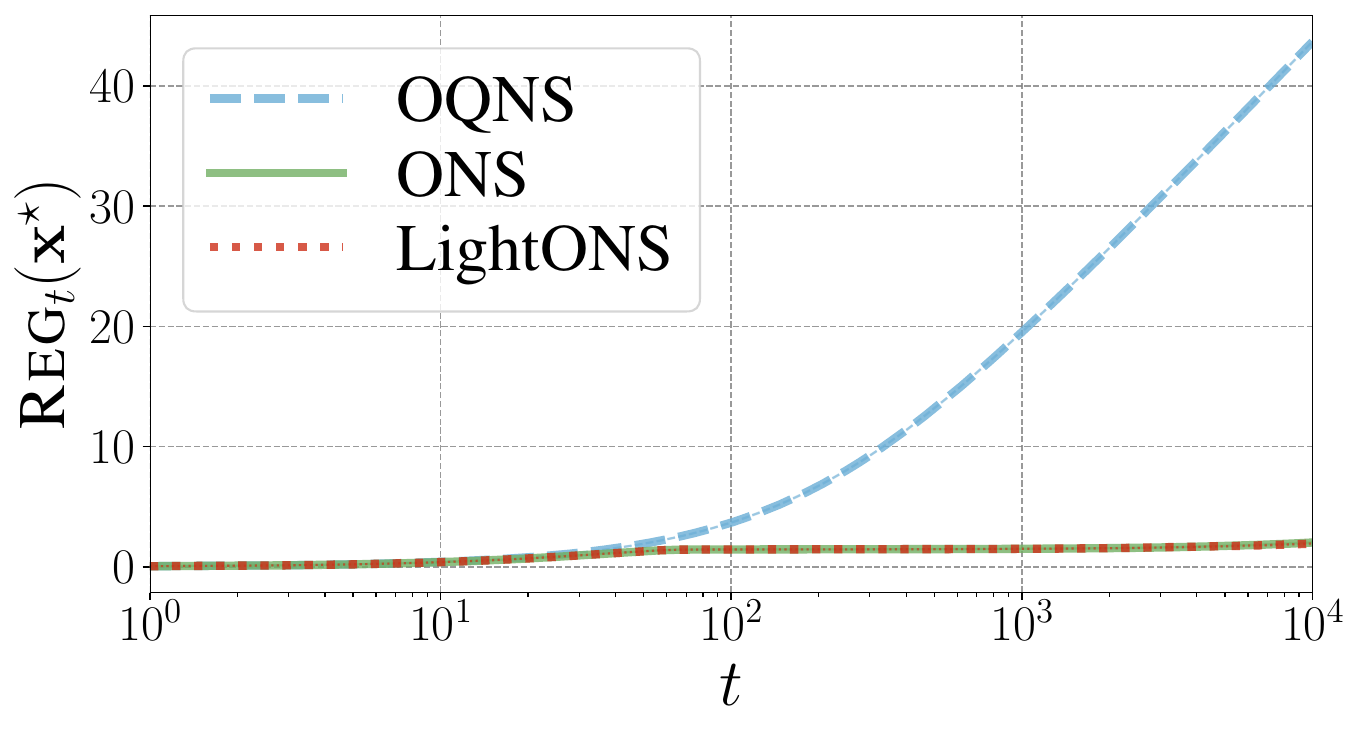}
  }
  \subfigure[Norm.\label{fig:log-norm}]{
    \includegraphics[width=0.3\textwidth]{./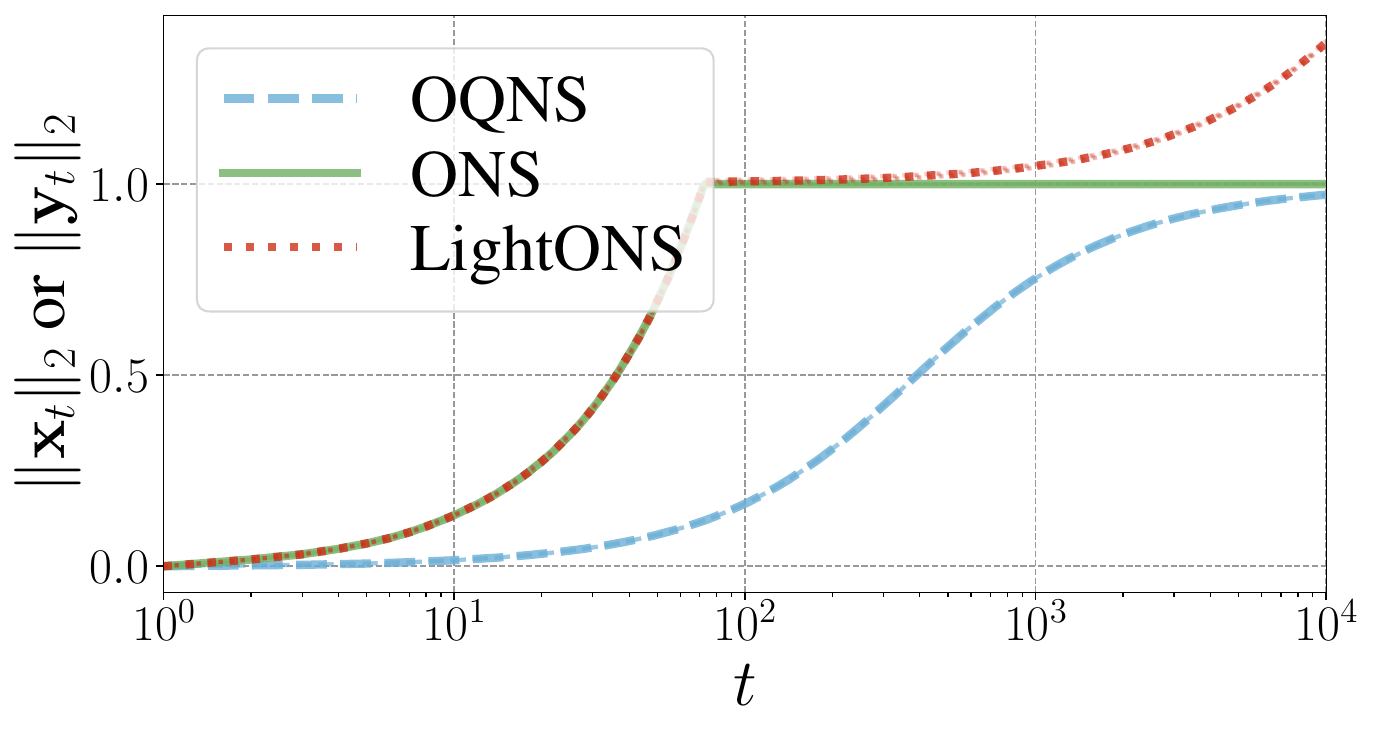}
  }

  \caption{Numerical results with $T=10^4$, $d=10$, and $D=2$. The first row shows linear regression results with $G=1/10$, $\alpha=5$ and the second row shows logistic regression results with $G=1/10$, $\alpha=\exp\sbr{-1/5}$.}
  \label{fig:experiment}
\end{figure}

The key advantage of \lightons, its computational efficiency, is evident in \cref{fig:lin-norm,fig:log-norm}.
After an initial convergence phase (e.g., the first $10^2$ rounds in \cref{fig:log-norm}), both ONS and \lightons stabilize near the offline optimal solution.
Then ONS frequently pushes decisions outside the domain, triggering costly Mahalanobis projections.
In contrast, \lightons consistently outputs decisions with $\norm{\y_t} \leq kD/2 = 2$ and avoids Mahalanobis projections.

\section{Technical Lemmas}
\label{apd:technical-lemmas}

In this section, we present technical lemmas used in the subsequent analysis.

\subsection{Proof of \texorpdfstring{\cref{lem:exp-concave}}{Lemma~\ref{lem:exp-concave}}}
\label{apd:alg-proof-exp-concave}

\begin{proof}
  Since $g(\x) = \rme^{- \alpha f(\x)}$ is concave and positive, we have that $h(\x) = \rme^{- 2 \gamma_0 f(\x)}$ is also concave.
  Specifically, for any $\lambda \in [0,1]$ and any $(\x,\y) \in \XX^2$, it holds that
  \begin{equation*}
    \begin{aligned}
      h(\lambda \x + (1-\lambda) \y) & = g(\lambda \x + (1-\lambda) \y)^{2\gamma_0/\alpha} \geq \sbr{\lambda g(\x) + (1-\lambda) g(\y)}^{2\gamma_0/\alpha} \\
      & \geq \lambda g(\x)^{2\gamma_0/\alpha} + (1-\lambda) g(\y)^{2\gamma_0/\alpha} = \lambda h(\x) + (1-\lambda) h(\y) ,
    \end{aligned}
  \end{equation*}
  where the first inequality follows from the concavity of $g$ and the second from the concavity of $a \mapsto a^{2\gamma_0/\alpha}$ (where $2\gamma_0/\alpha \leq 1$).
  Next, by the concavity of $h$, for any $(\x,\u) \in \XX^2$, we have
  \begin{equation*}
    \rme^{-2\gamma_0 f(\u)} \leq \rme^{-2\gamma_0 f(\x)} + \sbr{ -2\gamma_0 \rme^{-2\gamma_0 f(\x)} \gr f(\x) }^\trs (\u-\x) .
  \end{equation*}
  Rearranging the preceding inequality yields
  \begin{equation*}
    f(\x)-f(\u) \leq \frac{1}{2\gamma_0} \log\sbr{ 1 - 2\gamma_0 \gr f(\x)^\trs (\u-\x) } .
  \end{equation*}
  To prove \cref{eq:exp-concave-taylor}, it suffices to apply that $\log(1+a) \leq a - a^2/4$ (which holds for any $\abs{a} \leq 1$) to the right-hand side of the preceding inequality.
  We note that $\abs{2\gamma_0 \gr f(\x)^\trs (\x-\u)} \leq 1$ due to the selection of the curvature parameter $\gamma_0 = \frac{1}{2} \min \{ \frac{1}{DG}, \alpha \}$.
\end{proof}

\subsection{Elliptical Potential Lemmas}
\label{apd:alg-proof-elliptical-potential}

We introduce two lemmas tackling Hessian-related matrices, which are essential for both regret and runtime analysis.
\cref{lem:log-det} is standard in the ONS literature (e.g., \citep{journals/ml/HazanAK07,NIPS16:sketch-ONS}).
The proof of \cref{lem:neg-tr} mirrors that of \cref{lem:log-det}, differing only in the potential function: $F(A) = -\ttr{A^{-1}}$ versus $F(A) = \log \tdet{A}$.

\begin{lem}
  \label{lem:log-det}
  Let $A_i = \lambda I + \sum_{j=1}^{i} \v_j \v_j^\trs$, if $\norme{\v_i} \leq L$ for any $i \in [n]$, then
  \begin{equation}
    \label{eq:log-det}
    \sum_{i=1}^{n} \v_i^\trs A_i^{-1} \v_i \leq \log\tdet{A_n} - \log\tdet{A_0} \leq d \log\sbr{ 1 + \frac{L^2}{d \lambda} n }.
  \end{equation}
\end{lem}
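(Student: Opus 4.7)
The plan is to prove the lemma in two halves via standard elliptical-potential arguments, combining the matrix determinant lemma, Sherman--Morrison, and AM-GM.

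For the first inequality, I would isolate the per-step increment $\v_i^\trs A_i^{-1} \v_i$ and relate it to the log-determinant increment $\log\tdet{A_i} - \log\tdet{A_{i-1}}$. By the matrix determinant lemma applied to the rank-one update $A_i = A_{i-1} + \v_i \v_i^\trs$,
\begin{equation*}
  \tdet{A_i} = \tdet{A_{i-1}} \bigl( 1 + \v_i^\trs A_{i-1}^{-1} \v_i \bigr) .
\end{equation*}
Setting $u_i = \v_i^\trs A_{i-1}^{-1} \v_i \geq 0$, Sherman--Morrison gives
\begin{equation*}
  \v_i^\trs A_i^{-1} \v_i = \frac{u_i}{1 + u_i} \leq \log(1 + u_i) = \log\tdet{A_i} - \log\tdet{A_{i-1}} ,
\end{equation*}
where the middle inequality uses the elementary bound $x/(1+x) \leq \log(1+x)$ for $x \geq 0$. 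Telescoping from $i=1$ to $n$ then yields the first inequality.

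For the second inequality, I would bound $\tdet{A_n}$ from above by the AM-GM / trace inequality $\tdet{A_n} \leq (\ttr{A_n}/d)^d$, noting that $A_n$ is positive definite and symmetric. Since $\ttr{A_n} = d\lambda + \sum_{i=1}^n \norme{\v_i}^2 \leq d\lambda + nL^2$ and $\tdet{A_0} = \lambda^d$,
\begin{equation*}
  \log\tdet{A_n} - \log\tdet{A_0} \leq d \log\!\sbr{ \lambda + \frac{L^2}{d} n } - d \log \lambda = d \log\!\sbr{ 1 + \frac{L^2}{d \lambda} n } .
\end{equation*}

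There is no real obstacle here; the lemma is standard and the only subtleties are (i) invoking the right form of $\log(1+x) \geq x/(1+x)$ to match the shifted index on $A_i$ (rather than $A_{i-1}$) in the summand, and (ii) keeping track of the initial matrix $A_0 = \lambda I$ so that both sides of the log-det telescope start at $d \log \lambda$. Both are routine algebra once the Sherman--Morrison and matrix-determinant identities are in place.
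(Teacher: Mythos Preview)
Your proof is correct. For the first inequality, however, the paper takes a different route: instead of Sherman--Morrison and the matrix determinant lemma, it observes that $\v_i^\trs A_i^{-1}\v_i = \innerf{\gr F(A_i)}{A_i - A_{i-1}}$ for the concave potential $F(A) = \log\tdet{A}$, and then applies the concavity inequality $\innerf{\gr F(A_i)}{A_i - A_{i-1}} \leq F(A_i) - F(A_{i-1})$ before telescoping. This abstraction buys the paper a unified proof of both this lemma and the companion result $\sum_i \v_i^\trs A_i^{-2}\v_i \leq \ttr{A_0^{-1}} - \ttr{A_n^{-1}}$ simply by swapping in $F(A) = -\ttr{A^{-1}}$. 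Your argument is more elementary and self-contained, avoiding matrix calculus, but it is tailored to the $\log\det$ case and would need a separate computation for the trace-inverse bound. The second inequality is handled identically in both.
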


\begin{lem}
  \label{lem:neg-tr}
  Let $A_i = \lambda I + \sum_{j=1}^{i} \v_j \v_j^\trs$, then
  \begin{equation}
    \label{eq:neg-tr}
    \sum_{i=1}^{n} \v_i^\trs A_i^{-2} \v_i \leq \ttr{A_0^{-1}}-\ttr{A_n^{-1}} \leq \frac{d}{\lambda} .
  \end{equation}
\end{lem}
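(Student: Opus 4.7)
The plan is to mimic the standard elliptical-potential argument but replace the log-determinant potential with the trace potential $F(A) = -\ttr{A^{-1}}$. Concretely, I would set up the telescoping sum
\begin{equation*}
  \ttr{A_0^{-1}} - \ttr{A_n^{-1}} = \sum_{i=1}^{n} \bigl( \ttr{A_{i-1}^{-1}} - \ttr{A_i^{-1}} \bigr),
\end{equation*}
and show that each summand dominates $\v_i^\trs A_i^{-2} \v_i$. The second inequality is then immediate from $A_n^{-1} \succ 0$ (hence $\ttr{A_n^{-1}} > 0$) and $A_0^{-1} = \lambda^{-1} I$, which yields $\ttr{A_0^{-1}} = d/\lambda$.

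The per-step inequality is obtained from the Sherman–Morrison formula applied to the rank-one update $A_i = A_{i-1} + \v_i \v_i^\trs$. Inverting in the direction $A_{i-1} = A_i - \v_i \v_i^\trs$ gives
\begin{equation*}
  A_{i-1}^{-1} = A_i^{-1} + \frac{A_i^{-1} \v_i \v_i^\trs A_i^{-1}}{1 - \v_i^\trs A_i^{-1} \v_i},
\end{equation*}
so taking traces (and using the cyclic property) yields
\begin{equation*}
  \ttr{A_{i-1}^{-1}} - \ttr{A_i^{-1}} = \frac{\v_i^\trs A_i^{-2} \v_i}{1 - \v_i^\trs A_i^{-1} \v_i}.
\end{equation*}
For this to exceed $\v_i^\trs A_i^{-2} \v_i$, I need the denominator to lie in $(0, 1]$, i.e., $\v_i^\trs A_i^{-1} \v_i \in [0, 1)$. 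This follows from applying Sherman–Morrison in the forward direction: $\v_i^\trs A_i^{-1} \v_i = \v_i^\trs A_{i-1}^{-1} \v_i / (1 + \v_i^\trs A_{i-1}^{-1} \v_i)$, which is always in $[0,1)$ since $A_{i-1} \succ 0$.

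The main (only) subtle step is checking that $\v_i^\trs A_i^{-1} \v_i < 1$ so that the Sherman–Morrison expansion of $A_{i-1}^{-1}$ in terms of $A_i^{-1}$ is valid and the denominator is positive; once that is in hand the inequality is a one-line consequence. Notice that unlike \cref{lem:log-det}, this bound makes no use of the Lipschitz constant $L$, and in fact the hypothesis $\norme{\v_i}\le L$ is not actually needed—what matters is only that $A_0 = \lambda I$ so that $\ttr{A_0^{-1}} = d/\lambda$. I would simply state and prove the cleanest version and note this in a remark.
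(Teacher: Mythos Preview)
Your proof is correct. Both you and the paper use the same potential $F(A) = -\ttr{A^{-1}}$ and the same telescoping structure, but you justify the per-step inequality differently. The paper argues abstractly: since $F$ is concave on positive-definite matrices with $\gr F(A) = A^{-2}$, one has $\innerf{\gr F(A_i)}{A_i - A_{i-1}} \le F(A_i) - F(A_{i-1})$, and the left side is exactly $\v_i^\trs A_i^{-2}\v_i$. This yields both \cref{lem:log-det} and \cref{lem:neg-tr} from a single template, at the cost of appealing to matrix-calculus facts (concavity of $-\ttr{A^{-1}}$ and its gradient). Your route via Sherman--Morrison is more elementary and even gives the exact identity $\ttr{A_{i-1}^{-1}} - \ttr{A_i^{-1}} = \v_i^\trs A_i^{-2}\v_i / (1 - \v_i^\trs A_i^{-1}\v_i)$, from which the inequality is immediate once you check the denominator lies in $(0,1]$; this avoids any appeal to operator concavity but is specific to rank-one updates. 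Your observation that the bound $\norme{\v_i}\le L$ is unused is also correct---the paper's proof does not invoke it for this lemma either.
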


\begin{proof}\textbf{of \cref{lem:log-det,lem:neg-tr}}
  Let $\mathbb{S}_{++}$ denote the set of $d$-dimensional positive-definite and symmetric matrices, and let $\innerf{\cdot}{\cdot}$ denote the inner product induced by the Frobenius matrix norm.
  Then for any concave function $F: \mathbb{S}_{++} \to \reals$, we have
  \begin{equation}
    \label{eq:general-elliptical-potential-telescope}
    \sum_{i=1}^{n} \v_i^\trs \gr{F}(A_i) \v_i = \sum_{i=1}^{n} \innerf{\gr{F}(A_i)}{A_i-A_{i-1}} \leq \sum_{i=1}^{n} \sbr{F(A_i) - F(A_{i-1})} = F(A_n) - F(A_0) .
  \end{equation}
  The first equality follows from $A_i - A_{i-1} = \v_i \v_i^\trs$ (by the definition of $A_i$) and $\u^\trs A \v = \innerf{A}{\v\u^\trs}$ (by the definition of the $\innerf{\cdot}{\cdot}$);
  The second inequality from the concavity of $F$;
  The last equality from telescoping.
  Substituting the respective potentials into \cref{eq:general-elliptical-potential-telescope} yields \cref{lem:log-det,lem:neg-tr}.
  Concretely,
  \begin{itemize}
    \item For \cref{lem:log-det}, $F(A) = \log \tdet{A}$ and $\gr{F}(A) = A^{-1}$.
    \item For \cref{lem:neg-tr}, $F(A) = -\ttr{A^{-1}}$ and $\gr{F}(A) = A^{-2}$.
  \end{itemize}
  Concavity of $F$ is standard~\citep{Book04:CvxOpt} and gradients of $F$ follow from matrix calculus~\citep{Book12:Matrix}.
\end{proof}

\begin{algorithm}[t]
  \caption{\fastproj onto Euclidean ball $\BB(R)$ with Mahalanobis norm $\norm{\cdot}_A$}
  \label{alg:mp}
  \begin{algorithmic}[1]
    \REQUIRE point $\y \notin \BB(R)$, error tolerance $\zeta$, range of $A$'s eigenvalues $[\underline{\lambda}, \overline{\lambda}]$.
    \ENSURE approximate Mahalanobis projection $\x \approx \Pi^A_{\BB(R)}[\y]$.
    \STATE (Choice 1.) $\v = A \y$, let $\rho(\mu) = \| \sbr{A + \mu I}^{-1} \v \|_2^2 - R^2$.
    \STATE (Choice 2.) Tridiagonalize $Q C Q^\trs = A$, $\w = C Q^\trs \y$, let $\rho(\mu) = \| \sbr{C + \mu I}^{-1} \w \|_2^2 - R^2$.
    \STATE $a_1 = (\frac{\norme{\y}}{R}-1)\underline{\lambda}$, $b_1 = (\frac{\norme{\y}}{R}-1)\overline{\lambda}$, $n = \big\lceil \log_2 ( \frac{1}{\zeta}(\frac{~\overline{\lambda}~}{~\underline{\lambda}~}-1)\norme{\y}(\frac{\norme{\y}}{R}-1) ) \big\rceil$. \\
    (Without loss of generality, we assume $n \geq 1$.)
    \FOR{$i = 1, \dots, n$}
    \STATE $(a_{i+1}, b_{i+1}) =
    \begin{cases}
      (\frac{a_i + b_i}{2}, b_i), & ~~ \text{if} ~ \rho(\frac{a_i + b_i}{2}) \geq 0 ; \\
      (a_i, \frac{a_i + b_i}{2}), & ~~ \text{otherwise.} \\
    \end{cases}$
    \ENDFOR
    \STATE $\x = \min\{ 1, \frac{ R }{ \norme{\tilde{\x}} } \} \cdot \tilde{\x}$, where  $\tilde{\x} = (A + \frac{a_{n+1} + b_{n+1}}{2} I)^{-1} A \y$.
  \end{algorithmic}
\end{algorithm}

\subsection{Numerical Lemmas}
\label{apd:alg-impl}

The overall efficiency of \lightons hinges on two key operations, namely the matrix inversion $A_t^{-1}$ and the (infrequent) Mahalanobis projection $\Pi^{A_t}_{\BB(D/2)}[\hat{\y}_{t+1}]$.
In this subsection, we detail efficient numerical approaches for the two operations, respectively.

\paragraph{Matrix inversion.}
The per-round update of the matrix $A_t$ is a rank-one update.
Instead of inverting from scratch at a cost of $O(d^\omega)$, we can update $A_{t-1}^{-1}$ to obtain $A_t^{-1}$ in only $O(d^2)$ time with the Sherman-Morrison-Woodbury formula.
The following equation ensures $V_t = A_t^{-1}$ for any $t \in [T]$.
\begin{equation}
  \label{eq:rank-one-update-inversion}
  V_0 = \frac{1}{\eps} I; \quad
  V_t = V_{t-1} - \frac{1}{1 + \norm{\bgr_t}_{V_{t-1}}^2} V_{t-1} \bgr_t \bgr_t^\trs V_{t-1} , ~~
  \text{where} \bgr_t = c_f \gr\gtyt.
\end{equation}

\paragraph{Mahalanobis projection onto Euclidean ball.}
In \lightons, all Mahalanobis projections are onto the Euclidean ball $\BB(D/2)$ rather than the potentially complex domain $\XX$.
This geometry enables customized numerical approaches faster than generic solvers~\citep{lee2015cuttingplane,jiang2020cuttingplane}.
Specifically, the dual problem of the Mahalanobis projection reduces to a one-dimensional root-finding problem (Exercise 4.22 of~\citet{Book04:CvxOpt}), solvable via bisection.

We propose \fastproj in \cref{alg:mp}, which offers two implementation choices for different purposes.
Choice 1 attains superior theoretical dependence on $d$ by exploiting fast matrix multiplication with exponent $\omega < 2.3714$~\citep{mat-mul-exponent-best2024}.
Choice 2 offers stronger practical performance via tridiagonalization tricks~\citep{parlett1998symmetriceigen,golub-loan-mat-compute}, which are more favorable in practice where $\omega=3$.
The following lemma characterizes the error and runtime of \fastproj.

\begin{lem}
  \label{lem:mp}
  Let $\x$ be the output of \fastproj (\cref{alg:mp}), and $\x^\ast = \Pi^A_{\BB(R)}[\y]$ be the exact Mahalanobis projection, then $\x \in \BB(R)$ and $\norme{\x - \x^\ast} \leq \zeta$.
  The number of bisections is $O( \log (\frac{1}{\zeta}\frac{~\overline{\lambda}~}{~\underline{\lambda}~}) )$.
  With choice 1, the runtime is $O(d^\omega n)$, and with choice 2, the runtime is $O(d^3 + d n)$.
\end{lem}

In the context of \lightons (\cref{alg:lightons}), it suffices to set the error tolerance as $\zeta_t = O(1/t^2)$ at the $t$-th round.
The accumulated truncation error contributes only a negligible additive constant to the regret of \lightons.
A detailed discussion is deferred to \cref{lem:regret-decompose-inexact} in \cref{apd:alg-proof-main-theorem}.
Moreover, with $\zeta_t = \Omega(1/t^3)$, $\underline{\lambda}_t = \eps$ and $\overline{\lambda}_t = \eps + c_f^2 c_g^2 G^2 t$, we have $n_t = O(\log t)$ at the $t$-th round.

To prove \cref{lem:mp}, we first present two supporting lemmas.
\cref{lem:mp-kkt} reduces the Mahalanobis projection onto a Euclidean ball to a one-dimensional root-finding problem.
\cref{lem:mp-error} bridges the truncation error of root-finding and the truncation error of Mahalanobis projection.

\begin{lem}
  \label{lem:mp-kkt}
  $\Pi^A_{\BB(R)}[\y] = (A + \mu^\ast I)^{-1} A \y$, where $\mu^\ast$ is the only positive zero of the following function
  \begin{equation}
    \label{eq:mp-kkt}
    \rho(\mu) = \norme{\sbr{A + \mu I}^{-1} A \y}^2 - R^2 = \sum_{i=1}^d \frac{v_i^2}{\sbr{1+\frac{\mu}{\lambda_i}}^2} - R^2 ,
  \end{equation}
  and $\mu^\ast$ satisfies that
  \begin{equation}
    \label{eq:mp-kkt-range}
    \sbr{\frac{\norme{\y}}{R}-1}\lambda_d \leq \mu^\ast \leq \sbr{\frac{\norme{\y}}{R}-1}\lambda_1 ,
  \end{equation}
  where $v_i = \e_i^\trs Q^\trs \y$, $\e_i$ is the $i$-th standard basis vector, and $A = Q \Lambda Q^\trs$ is the eigendecomposition, $\Lambda = \diag{\lambda_1, \dots, \lambda_d}$ with $\lambda_1 \geq \dots \geq \lambda_d > 0$.
\end{lem}

\begin{proof}\textbf{of \cref{lem:mp-kkt}}
  The Mahalanobis projection onto a Euclidean ball is formulated as
  \begin{equation*}
    \begin{aligned}
      \min_{\x \in \mathbb{R}^d} ~~ & (\x-\y)^\trs A (\x-\y) \\
      \text{s.t.} ~~ & \x^\trs \x \leq R^2 \\
    \end{aligned}
  \end{equation*}
  The Lagrangian of this quadratic program is $\LL = (\x-\y)^\trs A (\x-\y) + \mu (\x^\trs \x - R^2)$.
  According to the KKT conditions, $\gr\LL = 2 A (\x-\y) + 2 \mu \x = \bm{0}$ with $\mu > 0$.
  (Otherwise, $\mu = 0$ implies $\y \in \BB(R)$ and the projection is trivial.)
  Rearranging $\gr\LL = \bm{0}$ yields $\x = (A + \mu I)^{-1} A \y$, thus $\x^\trs\x = R^2$ is equivalent to $\rho(\mu) = 0$, which proves \cref{eq:mp-kkt}.

  To prove \cref{eq:mp-kkt-range}, since $\rho$ monotonically decreases on $[0,\infty)$, it suffices to bound $\rho(\mu)$.
  By the orthogonality of $Q$, we have $\sum_{i=1}^{d} v_i^2 = \norme{Q^\trs \y}^2 = \norme{\y}^2$.
  Therefore,
  \begin{equation*}
    \frac{\norme{\y}^2}{(1+\frac{\mu}{\lambda_d})^2} - R^2 \leq \rho(\mu) = \sum_{i=1}^d \frac{v_i^2}{(1+\frac{\mu}{\lambda_i})^2} - R^2 \leq \frac{\norme{\y}^2}{(1+\frac{\mu}{\lambda_1})^2} - R^2 .
  \end{equation*}
  Then it is straightforward that $\rho((\frac{\norme{\y}}{R}-1)\lambda_d) \geq 0$ and $\rho((\frac{\norme{\y}}{R}-1)\lambda_1) \leq 0$, yielding that $\mu^\ast \in [ (\frac{\norme{\y}}{R}-1)\lambda_d, (\frac{\norme{\y}}{R}-1)\lambda_1 ]$.
  Since $\rho$ is strictly decreasing on $[0, \infty)$ and $\mu^\ast$ is its unique positive zero.
\end{proof}

\begin{lem}
  \label{lem:mp-error}
  Let $\x^\ast = (A + \mu^\ast I)^{-1} A \y$, where $\rho(\mu^\ast) = 0$ as in \cref{lem:mp-kkt}.
  If $\tilde{\mu} \geq 0$ and $\abs{\tilde{\mu} - \mu^\ast} \leq \frac{\lambda_d}{\norme{\y}} \zeta$, then $\x \in \BB(R)$ and $\norme{\x - \x^\ast} \leq \zeta$, where $\x$ is constructed from $\tilde{\mu}$ as
  \begin{equation}
    \label{eq:enforce-feasibility-of-fastprojection}
    \x = \Pi_{\BB(R)} \mbr{\tilde{\x}} = \min\lbr{ 1, \frac{ R }{ \norme{\tilde{\x}} } } \cdot \tilde{\x} ,
    \quad
    \tilde{\x} = \sbr{A + \tilde{\mu} I}^{-1} A \y .
  \end{equation}
\end{lem}

\begin{proof}\textbf{of \cref{lem:mp-error}}
  \cref{eq:enforce-feasibility-of-fastprojection} immediately implies $\x \in \BB(R)$.

  To prove $\norme{\x - \x^\ast} \leq \zeta$, since $\norme{\x - \x^\ast} \leq \norme{\tilde{\x} - \x^\ast}$ by \cref{lem:folklore}, it suffices to prove $\norme{\tilde{\x} - \x^\ast} \leq \zeta$.
  By the definition of $v_i$ in \cref{lem:mp-kkt}, we have
  \begin{equation*}
    \begin{aligned}
      \norme{\tilde{\x} - \x^\ast}^2
      & = \norme{\sbr{A + \tilde{\mu} I}^{-1} A \y - \sbr{A + \mu^\ast I}^{-1} A \y}^2
      = \sum_{i=1}^{d} v_i^2 \sbr{  \frac{1}{1 + \frac{\tilde{\mu}}{\lambda_i}}  -  \frac{1}{1 + \frac{\mu^\ast}{\lambda_i}}  }^2 \\
      & \leq \sum_{i=1}^{d} v_i^2 \sbr{  1  -  \frac{1}{1 + \frac{\abs{\tilde{\mu}-\mu^\ast}}{\lambda_i}}  }^2
      \leq \sum_{i=1}^{d} v_i^2 \sbr{\frac{\tilde{\mu}-\mu^\ast}{\lambda_i}}^2
      \leq \norme{\y}^2 \sbr{\frac{\tilde{\mu}-\mu^\ast}{\lambda_d}}^2 \leq \zeta^2 .
    \end{aligned}
  \end{equation*}
  The first inequality uses that $\lambda_i > 0$, $\tilde{\mu} \geq 0$ and $\mu^\ast \geq 0$;
  The second inequality comes from the following inequality, $(1-\frac{1}{1+a})^2 = \frac{a^2}{(a+1)^2} \leq a^2$ for any $a \geq 0$;
  The third inequality uses that $\sum_{i=1}^{d} v_i^2 = \norme{\y}^2$;
  The last inequality uses that $|\tilde{\mu} - \mu^\ast| \leq \frac{\lambda_d}{\norme{\y}} \zeta$.
\end{proof}

With the help of these lemmas, we can prove \cref{lem:mp}.

\begin{proof}\textbf{of \cref{lem:mp}}
  First, by \cref{lem:mp-kkt} and the supplied eigenvalue bounds, the wanted zero satisfies
  \begin{equation*}
    a_1 \leq \sbr{\frac{\norme{\y}}{R}-1}\lambda_d
    \leq \mu^\ast
    \leq \sbr{\frac{\norme{\y}}{R}-1}\lambda_1
    \leq b_1,
  \end{equation*}
  which justifies the initial bisection interval (Line~3 in \cref{alg:mp}).
  Furthermore, $\rho$ monotonically decreases on $[0, \infty)$, which implies the selection of $a_{i+1}$ and $b_{i+1}$ (Lines~4--6 in \cref{alg:mp}).

  Next, we show the convergence of the bisection.
  With the interval length halving each iteration, let $\tilde{\mu}=(a_{n+1}+b_{n+1})/2$.
  Based on the value of $n$ (Line~3 in \cref{alg:mp}), we have
  \begin{equation*}
    |\tilde{\mu} - \mu^\ast|
    \leq \frac{b_1 - a_1}{2^n}
    = \frac{(\frac{\norme{\y}}{R}-1)(\overline{\lambda}-\underline{\lambda})}{2^n}
    \leq \frac{\underline{\lambda}}{\norme{\y}} \zeta
    \leq \frac{\lambda_d}{\norme{\y}} \zeta
    .
  \end{equation*}
  Then by \cref{lem:mp-error}, \cref{alg:mp} achieves an error $\norme{\x - \x^\ast} \leq \zeta$.

  Finally, we bound the runtime.
  We note that choice 1 and 2 are equivalent, since
  \begin{equation*}
    \norme{\sbr{A + \mu I}^{-1} A \y}^2 = \norme{Q \sbr{C + \mu I}^{-1} Q^\trs Q C Q^\trs \y }^2 = \norme{\sbr{C + \mu I}^{-1} \w }^2 .
  \end{equation*}
  With choice 1, each iteration requires $d^\omega$ arithmetic operations, where the computational bottleneck lies in the matrix inversion $(A + \mu I)^{-1}$, resulting in a runtime of $O( d^\omega n )$.
  With choice 2, the tridiagonalization of $A$ requires $O(d^3)$ arithmetic operations~\citep{golub-loan-mat-compute}, and each iteration can evaluate $(C + \mu I)^{-1} \w$ via the Thomas algorithm with only $O(d)$ arithmetic operations~\citep{golub-loan-mat-compute}, resulting in a runtime of $O( d^3 + d n )$.
\end{proof}

\section{Proofs for \texorpdfstring{\cref{sec:alg}}{Section~\ref{sec:alg}}}
\label{apd:alg-proof}

In this section, we prove the theoretical guarantees of \lightons.

\subsection{Proof of \texorpdfstring{\cref{lem:proj-count}}{Lemma~\ref{lem:proj-count}}}
\label{apd:alg-proof-proj-count}

\begin{proof}
  Let $\Phi_T$ denote the sum of squared norms of updates, as in the left-hand side of \cref{eq:neg-tr} in \cref{lem:neg-tr}.
  Note that $A_t = \eps I + \sum_{i=1}^{t} \gr\ftxt \gr\ftxt^\trs$ in \cref{alg:lightonscore} and $\norme{\gr\ftxt} \leq G$ in \cref{asm:bounded-gradient}.
  Then by \cref{lem:neg-tr} we have
  \begin{equation*}
    \Phi_T \triangleq \sum_{t=1}^{T} \norme{\frac{1}{\gamma} A_t^{-1} \gr\ftxt}^2 \leq \frac{1}{\gamma^2} \frac{d}{\eps} .
  \end{equation*}

  We prove \cref{lem:proj-count} by bounding $N$ with $\Phi_T$.
  Let $\{ \tau_i \}_{i=1}^{N} \subseteq \{2, \dots, T+1\}$ denote the indices of all decisions produced by a Mahalanobis projection.
  With $\tau_0 = 1$, we have that for any $i \in [N]$,
  \begin{equation*}
    - \sum_{t=\tau_{i-1}}^{\tau_i-1} \frac{1}{\gamma} A_t^{-1} \gr\ftxt = \hat{\x}_{\tau_i} - \x_{\tau_{i-1}} .
  \end{equation*}
  Since $\norme{\hat{\x}_{\tau_i}} > k D/2$ and $\norme{\x_{\tau_{i-1}}} \leq D/2$ (because $\x_{\tau_0}=\x_1=\bm{0}$ and every later $\x_{\tau_{i-1}}$ is a projection onto $\XX$), we have
  \begin{equation*}
    \norme{ \sum_{t=\tau_{i-1}}^{\tau_i-1} \frac{1}{\gamma} A_t^{-1} \gr\ftxt } = \norme{\hat{\x}_{\tau_i} - \x_{\tau_{i-1}}} > (k - 1) \frac{D}{2} .
  \end{equation*}
  By applying the inequality $n \sum_{i=1}^{n} \norme{\v_i}^2 \geq \norme{\sum_{i=1}^{n}\v_i}^2$, we obtain
  \begin{equation*}
    (\tau_i-\tau_{i-1}) \sum_{t=\tau_{i-1}}^{\tau_i-1} \norme{\frac{1}{\gamma} A_t^{-1} \gr\ftxt}^2 \geq \norme{\sum_{t=\tau_{i-1}}^{\tau_i-1} \frac{1}{\gamma} A_t^{-1} \gr\ftxt}^2 > (k - 1)^2 \frac{D^2}{4} .
  \end{equation*}
  Rearranging the preceding inequality yields
  \begin{equation*}
    \frac{1}{\tau_i-\tau_{i-1}} < \frac{4}{(k - 1)^2 D^2} \sum_{t=\tau_{i-1}}^{\tau_i-1} \norme{\frac{1}{\gamma} A_t^{-1} \gr\ftxt}^2 .
  \end{equation*}
  Since $\tau_i-\tau_{i-1} \geq 1$, applying the ``harmonic mean $\leq$ arithmetic mean'' inequality $\frac{n}{\sum_{i=1}^{n} \frac{1}{a_i}} \leq \frac{\sum_{i=1}^{n} a_i}{n}$ for $a_i = \frac{1}{\tau_i - \tau_{i-1}} > 0$, we have
  \begin{equation*}
    \frac{N^2}{T} \leq \frac{N^2}{\tau_N-\tau_0} \leq \sum_{i=1}^{N} \frac{1}{\tau_i-\tau_{i-1}} .
  \end{equation*}
  Combining the preceding two inequalities, we obtain
  \begin{equation*}
    \frac{N^2}{T} \leq \sum_{i=1}^{N} \frac{1}{\tau_i-\tau_{i-1}} < \frac{4}{(k - 1)^2 D^2} \sum_{t = 1}^{\tau_N-1} \norme{\frac{1}{\gamma} A_t^{-1} \gr\ftxt}^2 \leq \frac{4}{(k - 1)^2 D^2} \Phi_T .
  \end{equation*}
  Finally, rearranging the preceding inequality yields the desired result
  \begin{equation*}
    N < \frac{2\sqrt{T\Phi_T}}{(k-1) D} \leq \frac{2}{(k-1) D \gamma} \sqrt{\frac{d}{\eps}T} .
  \end{equation*}
\end{proof}

We remark that \cref{lem:proj-count} is tight in terms of $T$ up to logarithmic factors.
Consider $d=1$ and $\gr\ftxt = 1/\sqrt{T}$, then $A_t^{-1} \gr \ftxt = \Omega(1 / \sqrt{T})$ and $\| \sum_{t=1}^{T} A_t^{-1} \gr \ftxt \|_2 = \Omega(\sqrt{T})$.
We notice that Theorem~2.3 of~\citet{NIPS20:max-control} corroborates this point.

\subsection{Proof of \texorpdfstring{\cref{thm:lightonscore}}{Theorem~\ref{thm:lightonscore}}}
\label{apd:alg-proof-improper-theorem}

\begin{proof}
  The proof of \cref{thm:lightonscore} follows as a specialization of the proof of \cref{thm:lightons} given in \cref{apd:alg-proof-main-theorem}.
  The only substantive difference is that, in the present setting, the analysis is carried out directly on the original loss functions $f_t$, rather than on surrogate losses $g_t$.

  We therefore omit the repetitive details and highlight only the key intermediate arguments.
  \begin{itemize}
    \item \textbf{Choice of the expanded curvature parameter $\gamma$ in \lightonscore.}~
      The selection of $\gamma = \frac{1}{2} \min \{ \frac{2}{(k+1) DG}, \alpha \}$ directly follows from \cref{lem:exp-concave}.
    \item \textbf{Regret analysis with selective projections.}~
      The regret analysis follows the same structure as \cref{lem:regret-decompose-exact}, which accounts for projections triggered only when the iterate exits the expanded domain $\tilde{\XX}_k$.
      Details are given in \cref{apd:alg-proof-main-theorem}.
    \item \textbf{Runtime analysis with selective projections.}~
      The runtime bound is derived in the same manner as in the proof of \cref{thm:lightons} in \cref{apd:alg-proof-main-theorem}.
  \end{itemize}
\end{proof}

\subsection{Proof of \texorpdfstring{\cref{lem:surrogate-taylor}}{Lemma~\ref{lem:surrogate-taylor}}}
\label{apd:alg-proof-surrogate-taylor}

\begin{proof}
  By the selection of curvature parameters $\gamma_0$ (in \cref{lem:exp-concave}) and $\gamma'$ (in \cref{alg:lightons}), it holds that $\gamma' = \frac{1}{2} \min \{ \frac{1}{DG}, \alpha, \frac{4}{(k+1) c_f c_g DG} \} = \min \{ \gamma_0, \frac{2}{(k+1) c_f c_g DG} \} \leq \gamma_0$.
  Thus by \cref{lem:exp-concave} we have
  \begin{equation*}
    \begin{aligned}
      \ftxt - \ftu & \leq \gr\ftxt^\trs (\x_t-\u) - \frac{\gamma_0}{2} \sbr{ \gr\ftxt^\trs (\x_t-\u) }^2 \\
      & \leq \gr\ftxt^\trs (\x_t-\u) - \frac{\gamma'}{2} \sbr{ \gr\ftxt^\trs (\x_t-\u) }^2 .
    \end{aligned}
  \end{equation*}

  To finish the proof, it suffices to show that, with $U(a) = a - \frac{\gamma'}{2} a^2$,
  \begin{equation*}
    U\sbr{\gr\ftxt^\trs (\x_t-\u)} \leq U\sbr{c_f \gr\gtyt^\trs (\y_t-\u)} .
  \end{equation*}
  Note that $U'(a) = 1 - \gamma' a$, thus $U$ monotonically increases on $(-\infty, \frac{1}{\gamma'}]$.
  It can be verified that
  \begin{equation*}
    \gr\ftxt^\trs (\x_t-\u) \leq c_f \gr\gtyt^\trs (\y_t-\u) \leq c_f c_g G \cdot \frac{k + 1}{2} D \leq \frac{1}{\gamma'} .
  \end{equation*}
  The first inequality follows from \cref{cdt:domain-conversion};
  The second inequality from the Cauchy-Schwarz inequality $\u^\trs\v \leq \norme{\u}\norme{\v}$;
  The last inequality from the selection of $\gamma'$.
\end{proof}

\subsection{Proof of \texorpdfstring{\cref{thm:lightons}}{Theorem~\ref{thm:lightons}}}
\label{apd:alg-proof-main-theorem}

We first introduce a property of the Mahalanobis projection.
This lemma is often referred to as the Pythagorean theorem in Banach space or the non-expansiveness of projections.

\begin{lem}[Lemma~8 of~\citet{journals/ml/HazanAK07}]
  \label{lem:folklore}
  If $A \in \reals^{d \times d}$ is a positive-definite and symmetric matrix, then for any convex and compact domain $\XX \subseteq \reals^{d}$, any point $\y \in \reals^{d}$ and any point $\u \in \XX$, $\norm{\Pi^A_\XX[\y] - \u}_A \leq \norm{\y - \u}_A$.
\end{lem}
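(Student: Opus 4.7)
The plan is to prove this standard non-expansiveness property of Mahalanobis projections via the first-order optimality conditions for the projection problem. The statement is essentially the Pythagorean inequality for generalized projections onto convex sets, and the argument is a minor variation of the classical Euclidean case.

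First, I would set $\z = \Pi^A_\XX[\y]$ and observe that $\z$ minimizes the strongly convex quadratic $\phi(\x) = (\x - \y)^\trs A (\x - \y)$ over the compact convex set $\XX$. Since $\gr \phi(\x) = 2 A (\x - \y)$, first-order optimality for constrained convex minimization yields that for every $\u \in \XX$,
\begin{equation*}
  \gr \phi(\z)^\trs (\u - \z) \geq 0 \quad \Longleftrightarrow \quad (\y - \z)^\trs A (\z - \u) \geq 0 .
\end{equation*}
Here I use that $A$ is symmetric to move it between the two vectors. This is the only problem-specific input beyond the definition of the projection.

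Next, I would expand the right-hand side of the target inequality by decomposing $\y - \u = (\y - \z) + (\z - \u)$ and using the bilinearity of the Mahalanobis inner product:
\begin{equation*}
  \norm{\y - \u}_A^2 = \norm{\y - \z}_A^2 + 2 (\y - \z)^\trs A (\z - \u) + \norm{\z - \u}_A^2 \geq \norm{\z - \u}_A^2,
\end{equation*}
where the inequality uses the first-order optimality condition above together with $\norm{\y - \z}_A^2 \geq 0$ (since $A \succ 0$). Taking square roots yields $\norm{\Pi^A_\XX[\y] - \u}_A \leq \norm{\y - \u}_A$.

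There is no real obstacle in this proof; everything reduces to first-order optimality plus positive-definiteness. As a sanity check (and an alternative route that I would mention in one line), one can introduce $B = A^{1/2}$, note $\norm{\x}_A = \norme{B \x}$, and observe that $B \Pi^A_\XX[\y] = \Pi_{B\XX}[B\y]$ is the Euclidean projection onto the convex set $B\XX$, so the classical Euclidean non-expansiveness transfers directly to the Mahalanobis norm. I would present the first-order optimality version as the main proof because it is self-contained and avoids invoking the (symmetric positive-definite) matrix square root.
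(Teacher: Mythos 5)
Your proof is correct: the first-order optimality condition for the projection plus the decomposition $\y-\u=(\y-\z)+(\z-\u)$ gives exactly the Pythagorean inequality, and positive-definiteness of $A$ finishes it. The paper does not prove this lemma itself but cites Lemma~8 of \citet{journals/ml/HazanAK07}, whose proof is essentially this same argument, so there is nothing to add.
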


Based on \cref{lem:folklore}, we can decompose the regret of \lightons as the following lemma shows.

\begin{lem}
  \label{lem:regret-decompose-exact}
  Ignoring the truncation error of Mahalanobis projections, under the same assumptions as \cref{thm:lightons}, in \cref{alg:lightons}, for any $t \in [T]$ and all $\u \in \XX$, it holds that, with $\bgr_t = c_f \gr\gtyt$,
  \begin{equation}
    \label{eq:regret-decompose-exact}
    2 \bgr_t^\trs (\y_t-\u) \leq \frac{1}{\gamma'} \norm{\bgr_t}_{A_t^{-1}}^2 + \gamma' \norm{\y_t-\u}_{A_t}^2 - \gamma' \norm{\y_{t+1}-\u}_{A_t}^2 .
  \end{equation}
\end{lem}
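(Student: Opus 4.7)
The plan is to carry out the standard online mirror descent regret decomposition, adapted to the hysteresis projection rule of \cref{alg:lightons}. First, I would exploit the update in Line~6, namely $\hat{\y}_{t+1} = \y_t - \frac{1}{\gamma'} A_t^{-1} \gr\gtyt$, which rearranges to $\gr\gtyt = \gamma' A_t(\y_t - \hat{\y}_{t+1})$. Substituting this expression for $\gr\gtyt$ into the left-hand side of the claim yields
\[
  2\gr\gtyt^\trs(\y_t - \u) = 2\gamma'(\y_t - \hat{\y}_{t+1})^\trs A_t (\y_t - \u).
\]

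Next, I would apply the three-point identity valid for any symmetric positive-definite $M$, namely $2(\a-\b)^\trs M(\a-\c) = \norm{\a-\b}_M^2 + \norm{\a-\c}_M^2 - \norm{\b-\c}_M^2$, with $\a=\y_t$, $\b=\hat{\y}_{t+1}$, $\c=\u$, and $M = A_t$. This unfolds the inner product into three Mahalanobis distances. The first of these simplifies using the update rule:
\[
  \gamma'\norm{\y_t - \hat{\y}_{t+1}}_{A_t}^2 = \gamma' \cdot \frac{1}{(\gamma')^2} \gr\gtyt^\trs A_t^{-1} \gr\gtyt = \frac{1}{\gamma'}\norm{\gr\gtyt}_{A_t^{-1}}^2,
\]
which is precisely the stability term appearing in the stated bound. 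The remaining terms are $\gamma'\norm{\y_t-\u}_{A_t}^2$, which already matches, and $-\gamma'\norm{\hat{\y}_{t+1}-\u}_{A_t}^2$, which I still need to replace by $-\gamma'\norm{\y_{t+1}-\u}_{A_t}^2$.

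Finally, I would handle the hysteresis branch of Line~7 case-by-case. If $\norme{\hat{\y}_{t+1}} \le kD/2$, then $\y_{t+1} = \hat{\y}_{t+1}$ and the replacement is an equality. Otherwise $\y_{t+1} = \Pi^{A_t}_{\BB(D/2)}[\hat{\y}_{t+1}]$, and since \cref{asm:bounded-domain} places $\XX$ at the origin with diameter $D$, we have $\u \in \XX \subseteq \BB(D/2)$, so \cref{lem:folklore} gives $\norm{\y_{t+1} - \u}_{A_t} \le \norm{\hat{\y}_{t+1} - \u}_{A_t}$. In both branches, $-\gamma'\norm{\hat{\y}_{t+1}-\u}_{A_t}^2 \le -\gamma'\norm{\y_{t+1}-\u}_{A_t}^2$, and combining this with the decomposition above yields the claimed inequality.

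I do not anticipate a major obstacle; the only subtle point is conceptual, namely verifying that the Pythagorean-style non-expansiveness of \cref{lem:folklore} remains applicable under the hysteresis rule. This is exactly why \cref{alg:lightons} projects onto $\BB(D/2)$ rather than onto $\XX$: the inclusion $\XX \subseteq \BB(D/2)$ guarantees that every comparator $\u \in \XX$ lies in the projection target, so the classical OMD decomposition transfers verbatim while the algorithm benefits from the simpler Euclidean-ball geometry exploited by \fastproj in \cref{sec:alg-impl}.
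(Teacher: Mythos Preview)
Your proposal is correct and follows essentially the same approach as the paper's proof. The paper establishes $\norm{\y_{t+1}-\u}_{A_t}^2 \le \norm{(\y_t-\u) - \tfrac{1}{\gamma'}A_t^{-1}\gr\gtyt}_{A_t}^2$ by the same two-case analysis (no projection vs.\ projection with \cref{lem:folklore}) and then expands the square and rearranges; your three-point identity is exactly that expansion written in the other direction, and your observation that $\u \in \XX \subseteq \BB(D/2)$ is the same feasibility check the paper uses implicitly.
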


\begin{proof}\textbf{of \cref{lem:regret-decompose-exact}}
  When $\norme{\hat{\y}_{t+1}} \leq k D/2$, no Mahalanobis projection is triggered, i.e., $\y_{t+1} = \hat{\y}_{t+1} = \y_t - \frac{1}{\gamma'} A_t^{-1} \bgr_t$, then for any $\u \in \reals^d$, we have
  \begin{equation*}
    \norm{ \y_{t+1} - \u }_{A_t}^2 = \norm{ \y_{t} - \frac{1}{\gamma'} A_t^{-1} \bgr_t - \u }_{A_t}^2 .
  \end{equation*}

  Otherwise, when $\norme{\hat{\y}_{t+1}} > k D/2$ and the Mahalanobis projection is triggered, by \cref{lem:folklore}, for any $\u \in \XX$,  we have
  \begin{equation*}
    \norm{ \y_{t+1} - \u }_{A_t}^2 = \norm{ \Pi_{\BB(D/2)}^{A_t} \mbr{\y_{t} - \frac{1}{\gamma'} A_t^{-1} \bgr_t} - \u }_{A_t}^2 \leq \norm{ \y_{t} - \frac{1}{\gamma'} A_t^{-1} \bgr_t - \u }_{A_t}^2 .
  \end{equation*}

  Combining both cases, we conclude that, for any $t \in [T]$ and any $\u \in \XX$,
  \begin{equation*}
    \norm{ \y_{t+1} - \u }_{A_t}^2 \leq \norm{ \sbr{\y_{t}-\u} - \frac{1}{\gamma'} A_t^{-1} \bgr_t }_{A_t}^2 .
  \end{equation*}
  Rearranging the preceding inequality yields the desired result of \cref{eq:regret-decompose-exact}.
\end{proof}

We note that \cref{lem:regret-decompose-exact} ignores the truncation error of the Mahalanobis projections.
The matrix factorization underlying Mahalanobis projections is related to eigendecomposition, equivalent to finding roots of a $d$-degree polynomial, which is not exactly solvable by finitely many arithmetic operations when $d \ge 5$.
The next lemma complements the analysis.
Thanks to \fastproj (\cref{alg:mp} in \cref{apd:alg-impl}), the truncation error only incurs a negligible additive $O(1/t^2)$ term in the regret decomposition of \cref{lem:regret-decompose-exact}, which can be safely ignored in the final regret bound of \lightons.

\begin{lem}
  \label{lem:regret-decompose-inexact}
  Let the Mahalanobis projection of \cref{alg:lightons} be implemented with \cref{alg:mp} with $A = A_t$, $R = D/2$, $\y = \hat{\y}_{t+1}$, $\zeta = \zeta_t$, $\underline{\lambda} = \eps$, $\overline{\lambda} = \eps + c_f^2 c_g^2 G^2 t$, where $\zeta_t$ is defined as
  \begin{equation}
    \label{eq:zeta-t-selection}
    \zeta_t = \min\lbr{  \frac{1}{(k+1) D \cdot \overline{\lambda} \cdot \gamma' t^2}  ,  \sqrt{ \frac{1}{\overline{\lambda} \cdot \gamma' t^2} }  } = \Omega\sbr{\frac{1}{t^3}} .
  \end{equation}
  Then under the same assumptions as \cref{thm:lightons}, in \cref{alg:lightons}, for any $t \in [T]$ and all $\u \in \XX$, with $\bgr_t = c_f \gr\gtyt$,
  \begin{equation}
    \label{eq:regret-decompose-inexact}
    2 \bgr_t^\trs (\y_t-\u) \leq \frac{1}{\gamma'} \norm{\bgr_t}_{A_t^{-1}}^2 + \gamma' \norm{\y_t-\u}_{A_t}^2 - \gamma' \norm{\y_{t+1}-\u}_{A_t}^2 + \frac{2}{t^2} .
  \end{equation}
\end{lem}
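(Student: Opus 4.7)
My plan is to obtain \cref{eq:regret-decompose-inexact} from \cref{lem:regret-decompose-exact} by quantifying the discrepancy between the exact projection $\y_{t+1}^{\star} \triangleq \Pi^{A_t}_{\BB(D/2)}[\hat{\y}_{t+1}]$ and the \fastproj output $\y_{t+1}$, and absorbing the resulting gap into the $\frac{1}{t^2}$ slack.

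The first move is to dispose of the non-projection case. When $\norme{\hat{\y}_{t+1}} \leq kD/2$, the algorithm sets $\y_{t+1} = \hat{\y}_{t+1}$, exactly as the idealized algorithm of \cref{lem:regret-decompose-exact} would, so the desired inequality holds without any extra slack. In the projection case, \cref{thm:mp} guarantees $\norme{\y_{t+1}-\y_{t+1}^{\star}} \leq \zeta_t$, and I would treat this as a perturbation around the exact projection.

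Writing $\e_t = \y_{t+1} - \y_{t+1}^{\star}$ and expanding the squared $A_t$-norm,
\begin{equation*}
  \norm{\y_{t+1}-\u}_{A_t}^{2} - \norm{\y_{t+1}^{\star}-\u}_{A_t}^{2} = 2\,(\y_{t+1}^{\star}-\u)^{\trs} A_t \e_t + \norm{\e_t}_{A_t}^{2} .
\end{equation*}
I would then apply Cauchy--Schwarz in the $A_t$-norm, use the geometric bound $\norme{\y_{t+1}^{\star}-\u} \leq D$ (both points lie in $\BB(D/2)$ since $\XX$ is centered at the origin and $\y_{t+1}^{\star}$ is the exact projection onto $\BB(D/2)$), and invoke the spectral estimate $\lambda_{\max}(A_t) \leq \eps + c_g^{2} G^{2} t$ (which follows from the rank-one updates $A_t = \eps I + \sum_{s \leq t} \gr g_s(\y_s) \gr g_s(\y_s)^{\trs}$ together with $\norme{\gr\gtyt} \leq c_g G$ from \cref{cdt:domain-conversion}) to obtain
\begin{equation*}
  \norm{\y_{t+1}-\u}_{A_t}^{2} - \norm{\y_{t+1}^{\star}-\u}_{A_t}^{2} \leq 2 D\,(c_g^{2} G^{2} t + \eps)\, \zeta_t + (c_g^{2} G^{2} t + \eps)\, \zeta_t^{2} .
\end{equation*}
Multiplying by $\gamma'$ and inserting into \cref{lem:regret-decompose-exact} applied to $\y_{t+1}^{\star}$ transforms that inequality into precisely the shape of \cref{eq:regret-decompose-inexact}, with the residual $\frac{1}{t^{2}}$ replaced by $2\gamma' D (c_g^{2} G^{2} t + \eps)\zeta_t + \gamma'(c_g^{2} G^{2} t + \eps)\zeta_t^{2}$.

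The last step is to verify that the truncation tolerance $\zeta_t$ of \cref{eq:zeta-t-selection} suppresses both contributions below $\frac{1}{t^{2}}$. The two arguments of the $\min$ are tailored exactly for this purpose: the first bounds the linear-in-$\zeta_t$ error via $2\gamma' D (c_g^{2} G^{2} t + \eps)\zeta_t \leq \frac{(\gamma')^{2}}{k t^{2}}$, and the second bounds the quadratic-in-$\zeta_t$ error via $\gamma'(c_g^{2} G^{2} t + \eps)\zeta_t^{2} \leq \frac{(\gamma')^{2}}{2 t^{3}}$; using $\gamma' \leq \frac{2}{(k+1)c_f c_g DG}$ as in \cref{alg:lightons} and $k \geq 1$, the sum of the two collapses below $\frac{1}{t^{2}}$ by elementary algebra. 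The main technical step is the $A_t$-norm expansion with the right spectral bound on $A_t$; the balancing of the two error sources through $\zeta_t$ is then a short calibration exercise.
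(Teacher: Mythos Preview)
Your approach is essentially identical to the paper's: both dispose of the no-projection branch, expand $\norm{\y_{t+1}-\u}_{A_t}^2 - \norm{\y_{t+1}^\star-\u}_{A_t}^2$ into a cross term and a quadratic term, bound each using the operator-norm estimate $\lambda_{\max}(A_t)\le c_g^2G^2t+\eps$, and then verify that the two branches of $\zeta_t$ in \cref{eq:zeta-t-selection} control the linear and quadratic contributions respectively. The paper uses the slightly looser diameter bound $2D$ where you use $D$, and cites \cref{lem:mp-error} rather than \cref{thm:mp} for $\norme{\y_{t+1}-\y_{t+1}^\star}\le\zeta_t$, but the argument is the same.
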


\begin{proof}\textbf{of \cref{lem:regret-decompose-inexact}}
  It suffices to consider the case when the Mahalanobis projection is triggered, i.e., $\norme{\hat{\y}_{t+1}} > k D/2$.
  Let $\y^\ast_{t+1} = \Pi_{\BB(D/2)}^{A_t} \mbr{\hat{\y}_{t+1}}$ be the exact Mahalanobis projection without truncation error, and $\bdelta_t = \y_{t+1} - \y^\ast_{t+1}$ be the truncation error.
  By \cref{lem:mp-error}, $\norme{\bdelta_t} = \norme{\y_{t+1} - \y^\ast_{t+1}} \leq \zeta_t$;
  By \cref{lem:folklore}, for any $\u \in \XX$,
  \begin{equation*}
    \begin{aligned}
      \norm{ \y_{t+1} - \u }_{A_t}^2
      & = \norm{ \y^\ast_{t+1} - \u }_{A_t}^2 + 2 \sbr{ \y^\ast_{t+1} - \u }^\trs A_t \bdelta_t + \norm{ \bdelta_t }_{A_t}^2 \\
      & \leq \norm{ \y^\ast_{t+1} - \u }_{A_t}^2 + (k+1) D \cdot \overline{\lambda} \cdot \norme{\bdelta_t} + \overline{\lambda} \cdot \norme{\bdelta_t}^2 \\
      & \leq \norm{ \y_{t} - \frac{1}{\gamma'} A_t^{-1} \bgr_t - \u }_{A_t}^2 + \frac{2}{\gamma' t^2} .
    \end{aligned}
  \end{equation*}
  The first inequality is because the operator norm of $A_t$ is at most $\overline{\lambda} = c_f^2 c_g^2 G^2 t + \eps$, which comes from \cref{cdt:domain-conversion}, i.e., $\norme{\bgr_t} = \norme{c_f \gr\gtyt} \leq c_f c_g G$;
  The second inequality uses the selection of $\zeta_t$ in \cref{eq:zeta-t-selection}.
\end{proof}

For conciseness, we ignore the truncation error of the Mahalanobis projections in the proof of \cref{thm:lightons}, since it only incurs an additive $O(\sum_{t=1}^{T} 1/t^2) = O(1)$ term in the final regret bound.

\begin{proof}\textbf{of \cref{thm:lightons}}
  Recall that $\bgr_t = c_f \gr\gtyt$.
  Plugging \cref{lem:regret-decompose-exact} into \cref{lem:surrogate-taylor} yields:
  \begin{equation*}
    \begin{aligned}
      & \ftxt - \ftu \overset{\text{\eqref{eq:surrogate-taylor}}}{\leq} ~ \bgr_t^\trs (\y_t-\u) - \frac{\gamma'}{2} \sbr{ \bgr_t^\trs (\y_t-\u) }^2 \\
      \overset{\text{\eqref{eq:regret-decompose-exact}}}{\leq} ~ & \frac{1}{2} \sbr{ \frac{1}{\gamma'} \norm{\bgr_t}_{A_t^{-1}}^2 + \gamma' \norm{\y_t-\u}_{A_t}^2 -\gamma' \norm{\y_{t+1}-\u}_{A_t}^2 } - \frac{\gamma'}{2} \sbr{ \bgr_t^\trs (\y_t-\u) }^2 \\
      = ~ & \frac{1}{2\gamma'} \norm{\bgr_t}_{A_t^{-1}}^2 + \frac{\gamma'}{2} \norm{\y_t-\u}_{A_{t-1}}^2 - \frac{\gamma'}{2} \norm{\y_{t+1}-\u}_{A_t}^2 .
    \end{aligned}
  \end{equation*}
  The equality is because $A_t = A_{t-1} + \bgr_t \bgr_t^\trs$ in \cref{alg:lightons}.
  Summing over the time horizon and telescoping the right-hand side establishes the desired regret of \cref{eq:lightons-regret}:
  \begin{equation*}
    \begin{aligned}
      \sum_{t=1}^{T} \sbr{\ftxt-\ftu}
      & \leq \sum_{t=1}^{T} \sbr{ \frac{1}{2\gamma'} \norm{\bgr_t}_{A_t^{-1}}^2 + \frac{\gamma'}{2} \norm{\y_t-\u}_{A_{t-1}}^2 - \frac{\gamma'}{2} \norm{\y_{t+1}-\u}_{A_t}^2 } \\
      & = \sbr{ \frac{1}{2\gamma'} \sum_{t=1}^{T} \norm{\bgr_t}_{A_t^{-1}}^2 } + \frac{\gamma'}{2} \norm{\y_1-\u}_{A_0}^2 - \frac{\gamma'}{2} \norm{\y_{T+1}-\u}_{A_T}^2 \\
      & \overset{\text{\eqref{eq:log-det}}}{\leq} \frac{d}{2\gamma'} \log\sbr{ 1 + \frac{c_f^2 c_g^2 G^2}{d \eps} T } + \frac{\gamma' \eps D^2}{8} .
    \end{aligned}
  \end{equation*}
  The last inequality uses \cref{lem:log-det} and \cref{cdt:domain-conversion}, i.e., $\norme{\bgr_t} = \norme{c_f \gr\gtyt} \leq c_f c_g G$.

  Finally, the desired runtime of \cref{eq:lightons-time} follows from the following two parts:
  \begin{itemize}
    \item \textbf{Runtime aside from \fastproj.}~
      In each round, the domain conversion of \cref{lem:domain-conversion} takes $O(\EP_\XX + d)$ time, updating and inverting $A_t$ as \cref{eq:rank-one-update-inversion} takes $O(d^2)$ time, and other operations, such as computing the surrogate gradient as \cref{eq:grad-g}, take only $O(d)$ time.
      The overall runtime aside from \fastproj is $O( (\EP_\XX + d^2) T )$.
    \item \textbf{Runtime of \fastproj.}~
      By \cref{lem:mp}, the number of bisections $n_t = O(\log t) = O(\log T)$ and each bisection takes $O(d^\omega)$ time with choice 1.
      Since \lightonscore is a subroutine in \lightons, the number of calls to \fastproj is at most $O( (k - 1)^{-1} d^{0.5} \sqrt{T/\eps} )$ by \cref{lem:proj-count}.
      Specifically, since $A_t = \eps I + \sum_{i=1}^{t} \bgr_i \bgr_i^\trs$ in \cref{alg:lightons}, we have
      \begin{equation*}
        \Phi_T \triangleq \sum_{t=1}^{T} \norme{\frac{1}{\gamma'} A_t^{-1} \bgr_t}^2 \leq \frac{1}{\gamma'^2} \frac{d}{\eps} , \quad
        N < \frac{2\sqrt{T\Phi_T}}{(k-1) D} \leq \frac{2}{(k-1)D\gamma'} \sqrt{\frac{d}{\eps}T} .
      \end{equation*}
      The first inequality uses \cref{lem:neg-tr}, which controls $\Phi_T$ as a constant unrelated to the gradients;
      The second inequality follows from the proof of \cref{lem:proj-count}, which remains valid as long as the domain to project onto and the extended domain are separated by a margin of at least $(k-1)D/2$.
      The overall runtime of \fastproj is $O( (k - 1)^{-1} d^{0.5} \sqrt{T/\eps} \cdot d^\omega \log T )$.
  \end{itemize}
\end{proof}

\section{Proofs for \texorpdfstring{\cref{sec:stochastic}}{Section~\ref{sec:stochastic}}}
\label{apd:stochastic-proof}

In this section, we provide the proofs and details for \cref{sec:stochastic}.

\subsection{Proof of \texorpdfstring{\cref{thm:lightons-stochastic}}{Theorem~\ref{thm:lightons-stochastic}}}
\label{apd:stochastic-proof-lightons}

Before proving \cref{thm:lightons-stochastic}, we first give a more detailed presentation of the convergence rates induced by \lightons.
Throughout this subsection, we use the regime $\veps=O(1/d)$ stated in \cref{sec:stochastic-guarantee}.
With $T = \Theta\sbr{\frac{d}{\veps} \log \frac{d}{\veps} \log \frac{1}{\delta}}$, we have
\begin{subequations}
  \begin{align}
    \label{eq:lightons-stochastic-high-prob}
    & F(\bar{\x}_T) - \min_{\x \in \XX} F(\x) \leq \frac{1}{T} \sbr{ \reg_T + 4 \sqrt{ \frac{\reg_T}{2\gamma_0} \log \frac{4 \log T}{\delta} } + \frac{8}{\gamma_0} \log \frac{4 \log T}{\delta} } = O(\veps) , \\
    \label{eq:lightons-stochastic-high-prob-time}
    & \runtime \leq O\sbr{ \frac{d^3}{\veps} \log \frac{d}{\veps} \log \frac{1}{\delta} + \frac{d^{3.5}}{\sqrt{\veps}} \log \frac{d}{\veps} \log \frac{1}{\delta} } = \tilde{O}\sbr{\frac{d^3}{\veps}} ,
  \end{align}
\end{subequations}
where $\reg_T$ follows from \cref{eq:lightons-regret-specified} in \cref{cor:lightons}.
With $\frac{1}{\delta} = T' = \Theta\sbr{\frac{d}{\veps} \log \frac{d}{\veps}}$, we have
\begin{subequations}
  \begin{align}
    \label{eq:lightons-stochastic-expected}
    & \expectation{}{F(\bar{\x}_{T'}) - \min_{\x \in \XX} F(\x)} \leq O(\veps) , \\
    \label{eq:lightons-stochastic-expected-time}
    & \runtime \leq O\sbr{ \frac{d^3}{\veps} \log \frac{d}{\veps} + \frac{d^{3.5}}{\sqrt{\veps}} \log \frac{d}{\veps} } = \tilde{O}\sbr{\frac{d^3}{\veps}} .
  \end{align}
\end{subequations}
We remark that shifting the online loss function from $f_t(\u) = f(\u; \xi_t)$ to $h_t(\u) = f(\frac{\u+\x_t}{2}; \xi_t)$ improves the high-probability excess risk bound of \cref{eq:lightons-stochastic-high-prob} to $\frac{2}{T} \sbr{\reg_T + \frac{2}{\gamma_0} \log \frac{1}{\delta}}$.
This removes the $\log \log T$ term, though the asymptotic rate in~\cref{thm:lightons-stochastic} remains unchanged.
We refer readers to Theorem~1 of~\citet{arXiv23:high-prob-risk-bound} for details.

\begin{proof}
  First, we prove \cref{eq:lightons-stochastic-high-prob}.
  Without loss of generality, we consider $T \geq 3$.
  Then the high-probability excess risk bound of \cref{eq:lightons-stochastic-high-prob} directly follows from Corollary~2 of~\citet{AISTATS17:exp-concave-high-prob} by substituting \lightons's regret in \cref{cor:lightons}.

  Then, we verify that the choice of $T = \Theta\sbr{\frac{d}{\veps} \log \frac{d}{\veps} \log \frac{1}{\delta}}$ yields a high-probability excess risk of $O(\veps)$.
  Let $\Gamma_\delta = \frac{4 \log T}{\delta} = O\sbr{\frac{1}{\delta} \log \frac{d}{\veps} + \frac{1}{\delta} \log \log \frac{1}{\delta}}$.
  We have
  \begin{equation*}
    F(\bar{\x}_T) - \min_{\x \in \XX} F(\x) \leq O\sbr{\frac{\reg_T + \sqrt{\reg_T \cdot \log \Gamma_\delta} + \log \Gamma_\delta}{T}} \leq O\sbr{\frac{\reg_T + \log \Gamma_\delta}{T}} .
  \end{equation*}
  The first inequality is essentially Corollary~2 of~\citet{AISTATS17:exp-concave-high-prob}, and the second inequality follows from the fact that $\sqrt{ab} \leq \frac{a+b}{2} = O(a+b)$ for any positive terms $a$ and $b$.
  Since $\reg_T = O(d \log T) = O\sbr{d \log \frac{d}{\veps} + d \log \log \frac{1}{\delta}}$, it suffices to show that
  \begin{equation*}
    \begin{aligned}
      O\sbr{\frac{\reg_T}{T}} & = O\sbr{\frac{  d  \sbr{ \log \frac{d}{\veps} + \log \log \frac{1}{\delta} }  }{\frac{d}{\veps} \log \frac{d}{\veps} \log \frac{1}{\delta}}} = O(\veps) , ~~ \text{and} \\
      O\sbr{\frac{\log \Gamma_\delta}{T}} & = O\sbr{\frac{  \log \frac{1}{\delta} + \log \sbr{ \log \frac{d}{\veps} + \log \log \frac{1}{\delta} }  }{\frac{d}{\veps} \log \frac{d}{\veps} \log \frac{1}{\delta}}} = O\sbr{\frac{\veps}{d}} = O(\veps) .
    \end{aligned}
  \end{equation*}

  Next, we prove \cref{eq:lightons-stochastic-expected} from \cref{eq:lightons-stochastic-high-prob}.
  By the definition of expectation, we have
  \begin{equation*}
    \begin{aligned}
      \expectation{}{F(\bar{\x}_{T'}) - \min_{\x \in \XX} F(\x)} & \leq \sbr{1 - \frac{1}{T'}} \cdot O\sbr{\frac{\reg_{T'} + \sqrt{\reg_{T'} \cdot \log \Gamma_{1/T'}} + \log \Gamma_{1/T'}}{T'}} \\
      & \quad + \frac{1}{T'} \sbr{\max_{\y \in \XX} F(\y) - \min_{\x \in \XX} F(\x)} \\
      & \leq O\sbr{\frac{\reg_{T'} + \log \Gamma_{1/T'}}{T'}} + \frac{DG}{T'} \leq O\sbr{\frac{\reg_{T'} + \log \Gamma_{1/T'}}{T'}} .
    \end{aligned}
  \end{equation*}
  The above inequalities use the Lipschitzness of $F$ and the boundedness of $\XX$.
  Similarly, we verify that the choice of $T' = \Theta\sbr{\frac{d}{\veps} \log \frac{d}{\veps}}$ yields an in-expectation excess risk of $O(\veps)$ with $\frac{1}{\delta} = T' = \Theta\sbr{\frac{d}{\veps} \log \frac{d}{\veps}}$.
  It suffices to note that $\Gamma_{1/T'} = 4 T' \log T' = O\sbr{\frac{d}{\veps} \log^2 \frac{d}{\veps}}$ and that
  \begin{equation*}
    O\sbr{\frac{\log \Gamma_{1/T'}}{T'}} = O\sbr{\frac{\log \frac{d}{\veps}}{\frac{d}{\veps} \log \frac{d}{\veps}}} = O\sbr{\frac{\veps}{d}} = O(\veps) .
  \end{equation*}

  Finally, the total runtime in \cref{eq:lightons-stochastic-high-prob-time,eq:lightons-stochastic-expected-time} follows from \cref{thm:lightons}.
\end{proof}

\subsection{Proofs and Details for ERM-based SXO Methods}
\label{apd:stochastic-proof-erm}

In this subsection, we construct an SXO instance where applying cutting-plane methods incurs $\tilde{O}(d^3/\veps)$ runtime, and we give an intuitive explanation of the runtime barrier $\tilde{O}(d^3/\veps)$.

\begin{prp}
  \label{prp:sxo-erm-cpm-time}
  Under \cref{asm:exp-concave-stochastic} and that $\XX=\BB(1)$, let the stochastic functions take the form $f(\x;\xi) = \phi\sbr{\w(\xi)^\trs\x}$, where $\phi:\reals\to\reals$ is a black-box function, and $\w: \Xi \to \reals^d$ is a fixed mapping.
  By~\citep{NIPS15:fast-rate-exp-concave,AISTATS17:exp-concave-high-prob}, obtaining an $\veps$-optimal solution for SXO reduces to obtaining an $O(\veps)$-optimal minimizer of the offline $\alpha$-exp-concave objective
  \begin{equation*}
    \hat{F}(\x) = \frac{1}{T} \sum_{t=1}^{T} \phi\sbr{\w(\xi_t)^\trs \x} ,
  \end{equation*}
  where $T=\tilde{O}(d/\veps)$ is the necessary sample size.
  The best-known cutting-plane methods of~\citet{lee2015cuttingplane,jiang2020cuttingplane} solve this offline problem to $O(\veps)$-accuracy in $\tilde{O}(d^3/\veps)$ time.
\end{prp}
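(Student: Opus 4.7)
The plan is to prove the proposition by combining the cited statistical reduction with a careful accounting of the per-iteration cost of cutting-plane methods applied to the empirical objective $\hat{F}$. The first claim --- that obtaining an $\eps$-optimal SXO solution reduces to obtaining an $O(\eps)$-optimal solution to $\hat{F}$ with a sample size $T = \tilde{O}(d/\eps)$ --- is directly quoted from \citet{NIPS15:fast-rate-exp-concave} and \citet{AISTATS17:exp-concave-high-prob}, which established fast-rate excess-risk bounds for $\alpha$-exp-concave losses on compact domains. So the remaining task is to show that the offline ERM can be solved to $O(\eps)$-accuracy in $\tilde{O}(d^3/\eps)$ time.

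To this end, I would first verify that $\hat{F}$ inherits $\alpha$-exp-concavity on $\XX = \BB(1)$: since the geometric mean of positive concave functions is concave, the identity $\exp\sbr{-\alpha \hat{F}(\x)} = \prod_{t=1}^{T} \exp\sbr{-\alpha f(\x;\xi_t)}^{1/T}$ shows that $\hat{F}$ is $\alpha$-exp-concave whenever each $f(\cdot;\xi_t)$ is, and hence in particular is convex. Next, I would observe that a single first-order oracle call --- evaluating $\hat{F}(\x)$ together with $\gr\hat{F}(\x)$ --- costs $\mathcal{T}_{\text{oracle}} = O(dT) = \tilde{O}(d^2/\eps)$ time, since each of the $T$ summands requires one inner product $\w(\xi_t)^\trs \x$, one black-box scalar evaluation of $\phi$ and its derivative, and one rescaling of the stored vector $\w(\xi_t)$.

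The state-of-the-art cutting-plane methods of \citet{lee2015cuttingplane}, refined by \citet{jiang2020cuttingplane}, solve a convex program over the unit ball to $O(\eps)$-accuracy using $O(d \log(1/\eps))$ first-order oracle calls, with $O(d^3 \log(1/\eps))$ total non-oracle work. Plugging in the oracle cost yields a total runtime of
\begin{equation*}
  O(d \log(1/\eps)) \cdot \mathcal{T}_{\text{oracle}} + O(d^3 \log(1/\eps)) = \tilde{O}(d^3/\eps) + \tilde{O}(d^3) = \tilde{O}(d^3/\eps)
\end{equation*}
in the small-$\eps$ regime of interest. No step requires a fundamentally new technical argument; the proposition is essentially a book-keeping exercise grafting CPM's convex-optimization guarantees onto the cited sample-complexity reduction. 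The only mildly delicate point is reconciling the polylogarithmic factors between the sample-complexity bound ($\log(d/\eps)$) and CPM's $\log(1/\eps)$ oracle complexity when absorbing everything into the $\tilde{O}(\cdot)$ notation, but this is standard.
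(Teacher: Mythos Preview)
Your proposal is correct and follows essentially the same approach as the paper's proof: both argue that CPM requires $\tilde{O}(d)$ first-order oracle calls, each costing $O(dT)=\tilde{O}(d^2/\eps)$ due to the finite-sum black-box structure, plus $\tilde{O}(d^3)$ non-oracle work, giving $\tilde{O}(d^3/\eps)$ total. Your additional verification that $\hat{F}$ inherits $\alpha$-exp-concavity (hence convexity) via the geometric-mean argument is a nice touch that the paper leaves implicit.
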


\begin{proof}\textbf{of \cref{prp:sxo-erm-cpm-time}}
  Cutting-plane methods query the gradient of $\hat{F}$ for $\tilde{O}(d)$ times, and each gradient query costs $O(dT)$ time due to the finite-sum structure of $\hat{F}$ and the black-box nature of $\phi$.
  Therefore, the total runtime is $\tilde{O}(d^2 T + d^3) = \tilde{O}(d^3/\veps)$.
\end{proof}

\paragraph{An intuitive explanation of the runtime $\tilde{O}(d^3/\veps)$.}
With the online-to-batch conversion, OGD's regret $O(\sqrt{T})$ for OCO translates to a sample complexity of $O(1/\veps^2)$ and a runtime of $\tilde{O}(d/\veps^2)$ for SXO.
When the target excess risk $\veps$ is relatively large, i.e., $\veps = \tilde{\Omega}(1/d^2)$, the OGD-based SXO method may circumvent the $\tilde{O}(d^3/\veps)$ runtime barrier.
Thus the computational challenge of SXO mainly manifests when pursuing the (near) optimal sample complexity of $\tilde{O}(d/\veps)$.

\cref{lem:exp-concave} indicates that an exp-concave function can be regarded as a strongly convex function in the direction of its gradient.
Comparing the sample complexity lower bound $\Omega(d/\veps)$ for SXO~\citep{COLT15:exp-concave} to the $\Omega(1/\veps)$ rate for stochastic strongly convex optimization~\citep{hazan2014stochasticstronglyconvex}, the exp-concave offline objective $F$ can be viewed as the sum of $d$ strongly convex objectives on $d$ directions, one along each dimension corresponding to the eigenvectors of the Hessian-related matrix $A_t$ in ONS.

Our intuition is as follows:
the difficulty of pursuing the (near) optimal sample complexity arises from exploiting the directional strong convexity, which necessitates estimating the covariance structure of the stochastic gradients.
\citet{cai2010covariance} establish a minimax sample complexity for covariance matrix estimation under squared Frobenius norm, which coincides with the sample complexity lower bound for SXO.
Specifically, for some family of covariance matrices $\SS$, with $T$ samples i.i.d. drawn from a distribution with covariance $\Sigma \in \SS$, the estimator $\hat{\Sigma}$ satisfies
\begin{equation*}
  \inf_{\hat{\Sigma}} \sup_{\Sigma \in \SS} \expectation{}{\tnormf{\hat{\Sigma} - \Sigma}^2} = O\sbr{\frac{d}{T}} .
\end{equation*}
We refer readers to Theorem~1 of~\citet{cai2010covariance} for a formal statement.
Since computing the sample covariance matrix requires $O(d^2 T)$ time, estimating the covariance structure of stochastic gradients plausibly incurs at least $O(d^2 T)$ runtime, leading to the $\tilde{O}(d^3/\veps)$ runtime barrier.

\section{Proofs for \texorpdfstring{\cref{sec:app}}{Section~\ref{sec:app}}}
\label{apd:app-proof}

In this section, we provide algorithms and proofs for \lightons's applications listed in \cref{sec:app}.

\subsection{Proofs and Details for Gradient-Norm Adaptivity}
\label{apd:app-proof-norm}

In this subsection, we show how \lightons recovers ONS's gradient-norm adaptive regret bound in \cref{eq:gradient-norm-adaptive-oxo}, and thereby leads to small-loss adaptivity for OXO with smoothness, as well as comparator-norm adaptivity for unbounded OCO.
We state the theoretical guarantees based on \lightons and omit those of the original ONS-based methods as they are identical except for the runtime.

We say $f$ is $H$-smooth on $\XX$ if for any $(\x, \y) \in \XX^2$, $\norme{\gr f(\x) - \gr f(\y)} \leq H \norme{\x - \y}$.

\begin{thm}[\lightons's improvement for OXO with smoothness]
  \label{thm:ons-for-sl}
  Under \cref{asm:bounded-domain,asm:bounded-gradient,asm:exp-concave}, and that $f_t$ is $H$-smooth for any $t \in [T]$, \lightons (\cref{alg:lightons}) satisfies that,
  \begin{equation*}
    \begin{aligned}
      \reg_T & \leq \frac{d}{2\gamma_0} \log \sbr{ \frac{8H}{d \eps} L_T + \frac{4H}{\gamma_0 \eps} \log \frac{4H}{\rme \gamma_0 \eps} + \frac{\gamma_0 D^2 H}{d} + 2 } + \frac{\gamma_0 \eps D^2}{8} , \\
      \runtime & \leq O\sbr{ \sbr{\EP_\XX + d^2} T + d^\omega \sqrt{T} \log T } ,
    \end{aligned}
  \end{equation*}
  where $L_T$ is defined in \cref{eq:small-loss-quantity}, and $\gamma_0$ is defined in \cref{cor:lightons}.
\end{thm}

\begin{proof}\textbf{of \cref{thm:ons-for-sl}}
  Based on the proof of Theorem~1 of~\citet{aistats12:exp-concave-smooth}, it suffices to bound the gradient norms of the surrogate loss by those of the original loss.
  We note that Appendix~B.4.2 of~\citet{NeurIPS24:universal-1-projection} also discusses the small-loss bounds of ONS with domain conversion, although with a different domain conversion.
  Since $\norme{\gr\gtyt} \leq \norme{\gr\ftxt}$ by \cref{lem:domain-conversion}, we have
  \begin{equation*}
    G_{T,g} \triangleq \sum_{t=1}^T \norme{\gr\gtyt}^2 \leq \sum_{t=1}^T \norme{\gr\ftxt}^2 \triangleq G_{T,f} .
  \end{equation*}
  \cref{thm:lightons} implies the runtime with $\eps = d$ and the regret of
  \begin{equation*}
    \reg_T(\u) \leq \frac{d}{2\gamma_0} \log \sbr{1 + \frac{G_{T,g}}{d \eps}} + \frac{\gamma_0 \eps D^2}{8} \leq \frac{d}{2\gamma_0} \log \sbr{1 + \frac{G_{T,f}}{d \eps}} + \frac{\gamma_0 \eps D^2}{8} .
  \end{equation*}
  The preceding inequality uses Jensen's inequality and $\ttr{A_T} = d \eps +  G_{T,g}$, i.e.,
  \begin{equation}
    \label{eq:jensen-det2tr}
    \log \tdet{A_T} - \log \tdet{A_0} \leq d \log \frac{\ttr{A_T}}{d} - \log \tdet{A_0} = d \log \sbr{1 + \frac{G_{T,g}}{d \eps}} .
  \end{equation}
  Then Corollary~5 of~\citet{aistats12:exp-concave-smooth} converts the gradient-norm adaptive bound to the small-loss bound with respect to $L_T$ and completes the proof.
\end{proof}

\begin{thm}[\lightons's improvement for unbounded OCO]
  \label{thm:ons-for-pf}
  If the loss function $f_t: \reals^d \to \reals$ is convex and $1$-Lipschitz for any $t \in [T]$, then there exists an algorithm satisfying that, for any $\u \in \reals^d$,
  \begin{equation*}
    \begin{aligned}
      \reg_T(\u) & \leq \tilde{O}\sbr{ \sqrt{d \sum_{t=1}^{T} \sbr{\gr\ftxt^\trs\u}^2} } \leq \tilde{O}\sbr{\norme{\u} \sqrt{d G_T}} , \\
      \runtime & \leq O\sbr{ d^2 T + d^\omega \sqrt{T} \log T } .
    \end{aligned}
  \end{equation*}
\end{thm}

\begin{proof}\textbf{of \cref{thm:ons-for-pf}}
  Theorem~8 of~\citet{COLT18:black-box-reduction} hinges on their Algorithm~7, Lemmas~16~and~17, apart from the coin-betting framework.
  To prove \cref{thm:ons-for-pf}, we show how \lightons adapts their analysis with minimal changes.
  \begin{itemize}
    \item \textbf{Modifications to their Algorithm~7.}~
      Since the decisions of their ONS are intermediate decisions to maximize the ``wealth'' in the coin-betting framework instead of true decisions to minimize regret, we can ignore the improper-to-proper conversion and replace their ONS with \lightonscore.
      Specifically, their ONS runs on $\XX = \BB(1/2)$ while \lightonscore runs on $\YY = \BB(3/4)$ with the deferral coefficient $k=3/2$.
    \item \textbf{Modifications to their Lemma~16.}~
      Their ONS's domain $\XX = \BB(1/2)$ implies a curvature parameter $\gamma = \frac{2 - \log 3}{2}$ while \lightonscore's domain $\YY = \BB(3/4)$ implies $\gamma = \frac{6 - \log 7}{18}$.
    \item \textbf{Modifications to their Lemma~17.}~
      Relaxing the radius from $1/2$ to $3/4$ enlarges constants in the regret.
      Nonetheless, the numerical constants in their Lemma~17 are loose enough to accommodate our changes, greatly simplifying our analysis.
      Following their proof, let $\z_t$ denote the gradient that \lightonscore receives at time $t$, \cref{thm:lightons} and \cref{eq:jensen-det2tr} imply
      \begin{equation*}
        \reg_T \leq \frac{d}{2\gamma} \log \sbr{1 + \frac{\sum_{t=1}^{T} \norme{\z_t}^2}{d \eps}} + \frac{\gamma \eps D^2}{8} .
      \end{equation*}
      Plugging $D = 1$, $\gamma = \frac{6 - \log 7}{18} \in (0.225,0.226)$, $\eps = d$, and $\norme{\z_t}^2 \leq 16 \norme{\gr\ftxt}^2$ yields
      \begin{equation*}
        \reg_T \leq d \sbr{ \frac{5}{2} \log \sbr{1 + \frac{16}{d^2} \sum_{t=1}^{T} \norme{\gr\ftxt}^2} + \frac{1}{35} } .
      \end{equation*}
      The preceding regret bound fully recovers their Lemma~17 when $d \geq 2$.
  \end{itemize}
  The runtime follows from \cref{thm:lightons} with $\eps = d$.
\end{proof}

\paragraph{Comparison with OQNS.}
We remark that OQNS can hardly achieve full gradient-norm adaptivity, as the log-barrier regularization introduces an unavoidable $O(\log T)$ bias term independent of $G_T$.
From the perspective of OMD, the regret can be decomposed into a stability term and a bias term.
Adopted from~\citet{hazan2016introductionOCO,book22:FO-book}, the regret decomposition of a typical OMD algorithm is as follows:
\begin{equation}
  \label{eq:omd-regret-decomposition-conceptual}
  \reg_T(\u) \leq \underbrace{\frac{\eta}{2} \sum_{t=1}^{T} \norm{\gr\ftxt}_{\ast}^2}_{\text{stability}} + \underbrace{\frac{\norm{\u - \x_1}^2}{2\eta}}_{\text{bias}} .
\end{equation}
While ONS and \lightons, as instances of OMD, can flexibly balance stability and bias,
\footnote{For ONS and \lightons, the learning rate $\eta$ is replaced with the time-varying Hessian-related matrix's inverse $A_t^{-1}$.}
OQNS sacrifices such flexibility for computational efficiency.
OQNS employs a highly stable log-barrier, which suppresses the stability term but inflates the bias term in a manner resistant to small gradient norms.
Specifically, the log-barrier contributes a term $- \log (D^2 - \norme{\u}^2)$ in the regret for any comparator $\u \in \BB(D)$.
As $\norme{\u}$ approaches $D$, this term diverges to infinity.
This issue is mitigated with the fixed-share trick~\citep{COLT23:OQNS}, an illustration of which is as follows:
\begin{equation*}
  \sum_{t=1}^{T} \sbr{f_t(\x_t) - f_t(\u)}
  \leq \underbrace{\sum_{t=1}^{T} \sbr{f_t(\x_t) - f_t(\v)}}_{\text{fixed-share regret}} + \underbrace{\sum_{t=1}^{T} \sbr{f_t(\v) - f_t(\u)}}_{\text{fixed-share margin}} , ~~
  \text{where} ~ \v = \sbr{1 - \frac{1}{T}} \u .
\end{equation*}
The fixed-share margin term $f_t(\v) - f_t(\u) \leq O(GD/T)$ due to boundedness and Lipschitzness, summing to $O(1)$ over $T$ rounds.
Since $1 - \norme{\v}^2 \geq \Omega(D^2/T)$, the fixed-share regret term contributes $- \log (D^2 - \norme{\v}^2) = O(\log T)$ to the regret, which prevents full gradient-norm adaptivity.

\subsection{Proofs and Details for Logistic Bandits}
\label{apd:app-proof-bandits}

In this subsection, we focus on logistic bandits as a representative instance of generalized linear bandits (GLB) to concretely illustrate the applicability of \lightons.

\paragraph{Problem setting.}
Logistic bandits can be interpreted as interactions between a learner and an adversary, which unfolds as follows:
At each round $t \in [T]$, the learner selects an arm $\x_t$ from an arm set $\XX_t \subseteq \BB(1)^K$ with $K$ arms, and suffers a stochastic loss $y_t \in \{0,1\}$, where $\probability{y_t = 1 \mid \x_t} = \sigma(\x_t^\trs \w^\star)$.
The true parameter $\w^\star \in \WW = \BB(D/2)$ is unknown, and $\sigma(z) = 1 / (1 + \exp(-z))$ denotes the sigmoid function.
The performance of the learner is measured by its pseudo-regret, quantifying the cumulative expected loss against the optimal arms in hindsight, which is defined as
\begin{equation*}
  \reg_T = \sum_{t=1}^T \sbr{ \sigma(\x_t^\trs \w^\star) - \sigma({\x^\star_t}^\trs \w^\star) } ,
\end{equation*}
where $\x^\star_t = \argmax_{\x \in \XX_t} \x^\trs \w^\star$.

A challenge in GLB lies in its dependence on $\kappa = \max_{\x \in \XX, \w \in \WW} 1 / \sigma'(\x^\trs \w)$, a problem-dependent constant that may grow exponentially with $D$.

\paragraph{Jointly efficient GLB algorithm.}
\citet{ZYJ2025LogB} propose the first one-pass (constant memory and constant per-round time) GLB algorithm achieving $\kappa$-free-leading-term pseudo-regret.
Their pseudo-regret is $O(d \sqrt{T} \log T + \kappa (d \log T)^2)$, which remains sublinear in $T$ for $\kappa = o(T)$.

Their key idea is to estimate the parameter $\w^\star$ with a ``look-ahead'' variant of ONS, whose analytical properties can remove the dependence on $\kappa$ in the pseudo-regret's leading terms while retaining the constant-memory and constant-time efficiency of ONS.
An illustration of their ``look-ahead'' ONS is provided as follows:
\begin{equation}
  \label{eq:omd-logb-original-mp}
  \begin{aligned}
    \tilde{H}_t & = \lambda I + \sbr{ \sum_{i=1}^{t-1} \gr^2 \ell_i(\w_{i+1}) } + \eta \gr^2 \ell_t(\w_t) , \\
    \hat{\w}_{t+1} & = \w_t - \frac{1}{\eta} \tilde{H}_t^{-1} \nabla \ell_t(\w_t) , \\
    \w_{t+1} & = \Pi^{\tilde{H}_t}_{\BB(D/2)}[\hat{\w}_{t+1}] ,
  \end{aligned}
\end{equation}
where $\ell_t(\w) = - y_t \log \sigma(\x_t^\trs \w) - (1-y_t) \log (1 - \sigma(\x_t^\trs \w))$ denotes the logistic loss for parameter estimation at round $t$.

Because of ONS, their algorithm incurs the worst-case $\tilde{O}((d^2 K + d^\omega) T)$ runtime bottleneck, where $K$ is the number of arms.
Below we show that plugging \lightons can improve the runtime while preserving the $\kappa$-free-leading-term pseudo-regret guarantee.

\paragraph{\lightons-based counterpart.}
We propose an analogous ``look-ahead'' variant of \lightons in \cref{alg:lightons-logb}, which replaces the ONS-based subroutine in~\citet{ZYJ2025LogB}.
Line~7 of \cref{alg:lightons-logb} introduces the deferred-projection mechanism of \lightonscore to replace the Mahalanobis projection in \cref{eq:omd-logb-original-mp}.

\cref{alg:lightons-logb} improves the runtime to $\tilde{O}(d^2 K T + d^\omega \cdot \min\{\sqrt{\kappa dT} \log\kappa, T\})$, while preserving the $\kappa$-free-leading-term pseudo-regret guarantee.
When $\kappa = o(T)$, as required for sublinear pseudo-regret, the improved algorithm is asymptotically faster than the original algorithm.
We state the theoretical guarantee of \cref{alg:lightons-logb} in the theorem below.

\begin{algorithm}[t]
  \caption{``look-ahead'' \lightons for~\citep{ZYJ2025LogB}}
  \label{alg:lightons-logb}
  \begin{algorithmic}[1]
    \REQUIRE domain $\WW = \BB(D/2)$, regularization coefficient $\lambda$, inverse step size $\eta$.
    \STATE Initialize $H_0 = \lambda I$, $\w = \bm{0}$.
    \FOR {$t = 1, \dots, T$}
    \STATE Update the lower confidence bound function as in Algorithm 1 of~\citet{ZYJ2025LogB}.
    \STATE Select the arm $\x_t$ as in Algorithm 1 of~\citet{ZYJ2025LogB} and observe the loss $y_t$.
    \STATE $\tilde{H}_t = H_{t-1} + \eta \gr^2\ell_t(\w_t)$.
    \STATE $\hat{\w}_{t+1} = \w_t - \frac{1}{\eta} \tilde{H}_t^{-1} \gr\ell_t(\w_t)$.
    \STATE $\w_{t+1} =
    \begin{cases}
      \hat{\w}_{t+1} & \text{if} ~ \norme{\hat{\w}_{t+1}} \leq k^\star D / 2 \\
      \Pi^{\tilde{H}_t}_{\BB(D/2)}[\hat{\w}_{t+1}] & \text{otherwise}
    \end{cases}$, where $k^\star = 1 + \frac{2}{\log\kappa} = \Theta(1 + 1/D)$.
    \STATE $H_t = H_{t-1} + \gr^2\ell_t(\w_{t+1})$.
    \ENDFOR
  \end{algorithmic}
\end{algorithm}

\begin{thm}[\lightons's improvement for logistic bandits]
  \label{thm:lightons-logb}
  For binary logistic bandits, under the conditions $\bigcup_{t=1}^T \XX_t \subseteq \BB(1)$, and $\norm{\w^\star} \leq D/2$ where $D$ is known a priori, there exists an algorithm achieving a pseudo-regret of $O(d \sqrt{T} \log T + \kappa (d \log T)^2)$, and a total runtime of $\tilde{O}(d^2 K T + d^\omega \cdot \min\{\sqrt{\kappa dT} \log\kappa, T\})$.
\end{thm}

We remark that \cref{alg:lightons-logb} does not suffer from improper learning concerns, because the algorithm's decision is the arm $\x_t$, rather than the estimated parameter $\w_t$.
Consequently, \cref{alg:lightons-logb} is free from the improper-to-proper conversion and resembles \lightonscore.

Before proving \cref{thm:lightons-logb}, we first present two critical lemmas that facilitate migrating the original ONS-based subroutine to \lightons-based \cref{alg:lightons-logb}.
The following lemma resembles our \cref{lem:regret-decompose-exact} and Lemma~1 of~\citet{ZYJ2025LogB}, showing that \cref{alg:lightons-logb} implies an OMD-like regret decomposition form similar to that of the original ONS-based subroutine.

\begin{lem}
  \label{lem:lightons-logb-omd}
  \cref{alg:lightons-logb} satisfies that, for any $\u \in \BB(D/2)$,
  \begin{equation*}
    \norm{\w_{t+1} - \u}_{H_{t-1}}^2 \leq 2 \eta \gr \tilde{\ell}_t(\w_{t+1})^\trs (\u - \w_{t+1}) + \norm{\w_t - \u}_{H_{t-1}}^2 - \norm{\w_t - \w_{t+1}}_{H_{t-1}}^2 ,
  \end{equation*}
  where $\tilde{\ell}_t$ is the second-order Taylor expansion of $\ell_t$ at $\w_t$, i.e.,
  \begin{equation*}
    \tilde{\ell}_t(\w) \triangleq \ell_t(\w_t) + \gr\ell_t(\w_t)^\trs (\w - \w_t) + \frac{1}{2} \norm{\w - \w_t}_{\gr^2\ell_t(\w_t)}^2 .
  \end{equation*}
\end{lem}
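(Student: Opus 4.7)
The plan is to recognize that the update rule of \cref{alg:lightons-logb}, despite being written explicitly, is equivalent to a proximal/implicit update in a norm that is $\eta \gr^2\ell_t(\w_t)$ lighter than the Mahalanobis norm used for the projection. Concretely, using the decomposition $\tilde{H}_t = H_{t-1} + \eta\gr^2\ell_t(\w_t)$ and expanding $\tilde{\ell}_t(\w) = \ell_t(\w_t) + \gr\ell_t(\w_t)^\trs(\w-\w_t) + \frac{1}{2}\norm{\w-\w_t}_{\gr^2\ell_t(\w_t)}^2$, the quadratic objective $\eta \tilde{\ell}_t(\w) + \frac{1}{2}\norm{\w-\w_t}_{H_{t-1}}^2$ coincides, up to constants independent of $\w$, with $\eta \gr\ell_t(\w_t)^\trs \w + \frac{1}{2}\norm{\w-\w_t}_{\tilde{H}_t}^2$. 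Its unconstrained minimizer is precisely $\hat{\w}_{t+1} = \w_t - \frac{1}{\eta}\tilde{H}_t^{-1}\gr\ell_t(\w_t)$ (the no-projection branch of the hysteresis rule), and its minimizer over $\BB(D/2)$ is $\Pi^{\tilde{H}_t}_{\BB(D/2)}[\hat{\w}_{t+1}]$ (the projected branch). Hence in both cases $\w_{t+1} = \argmin_{\w \in \CC_t}\{\eta \tilde{\ell}_t(\w) + \frac{1}{2}\norm{\w-\w_t}_{H_{t-1}}^2\}$ for $\CC_t \in \{\reals^d,\BB(D/2)\}$.

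With the proximal interpretation in hand, the first-order optimality condition gives the variational inequality
\begin{equation*}
  \sbr{\eta \gr\tilde{\ell}_t(\w_{t+1}) + H_{t-1}(\w_{t+1} - \w_t)}^\trs (\u - \w_{t+1}) \geq 0
\end{equation*}
for every $\u \in \BB(D/2)$. In the hysteresis-active branch the bracket is zero and this holds as equality for all $\u \in \reals^d$, which is strictly stronger; in the projected branch it is the standard characterization of the constrained minimizer. Rearranging yields
\begin{equation*}
  2\eta \gr\tilde{\ell}_t(\w_{t+1})^\trs (\u - \w_{t+1}) \geq 2 (\w_t - \w_{t+1})^\trs H_{t-1} (\u - \w_{t+1}).
\end{equation*}

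The final step is the three-point identity in the Mahalanobis norm $\norm{\cdot}_{H_{t-1}}$, namely $2(\w_t - \w_{t+1})^\trs H_{t-1}(\u - \w_{t+1}) = \norm{\w_{t+1}-\u}_{H_{t-1}}^2 - \norm{\w_t-\u}_{H_{t-1}}^2 + \norm{\w_t-\w_{t+1}}_{H_{t-1}}^2$, which substituted into the preceding display and rearranged gives exactly the claim.

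The main obstacle, and what makes this lemma nontrivial despite the transparent OMD template at the end, is establishing the implicit/proximal reformulation uniformly across the hysteresis cases. In particular, one must verify that the Mahalanobis projection under $\tilde{H}_t$ of the one-step Newton iterate coincides with the constrained minimizer of a surrogate quadratic regularised by $\norm{\cdot - \w_t}_{H_{t-1}}^2$ rather than $\norm{\cdot - \w_t}_{\tilde{H}_t}^2$; the algebraic cancellation that reconciles the mismatch between these two norms is precisely the mechanism that produces the look-ahead gradient $\gr\tilde{\ell}_t(\w_{t+1})$ on the right-hand side of the lemma and feeds into the subsequent regret analysis.
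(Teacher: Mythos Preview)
Your proposal is correct and follows essentially the same route as the paper's proof: both recast the two hysteresis branches of \cref{alg:lightons-logb} as $\w_{t+1} = \argmin_{\w \in \CC_t}\{\tilde{\ell}_t(\w) + \frac{1}{2\eta}\norm{\w-\w_t}_{H_{t-1}}^2\}$ with $\CC_t \in \{\reals^d,\BB(D/2)\}$ (the paper attributes this equivalence to Appendix~D of~\citet{ZYJ2025LogB}), then invoke first-order optimality and the three-point identity in $\norm{\cdot}_{H_{t-1}}$. One minor caution: the unconstrained minimizer of your objective $\eta\tilde{\ell}_t(\w) + \frac{1}{2}\norm{\w-\w_t}_{H_{t-1}}^2$ is actually $\w_t - \eta\,\tilde{H}_t^{-1}\gr\ell_t(\w_t)$, not $\w_t - \frac{1}{\eta}\tilde{H}_t^{-1}\gr\ell_t(\w_t)$; this discrepancy is inherited from what appears to be a typo in the algorithm as written, and the lemma (and your argument) are consistent with the former step size.
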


\begin{proof}\textbf{of \cref{lem:lightons-logb-omd}}
  We note that the descent-and-projection update in \cref{alg:lightons-logb} is equivalent to the OMD update, as Appendix~D of~\citet{ZYJ2025LogB} has shown.
  Specifically,
  \begin{equation}
    \label{eq:lightons-logb-omd-proj}
    \begin{aligned}
      \w_{t+1} &= \Pi^{\tilde{H}_t}_{\BB(D/2)}[\hat{\w}_{t+1}] = \Pi^{\tilde{H}_t}_{\BB(D/2)}\mbr{\w_t - \frac{1}{\eta} \tilde{H}_t^{-1} \gr\ell_t(\w_t)} \\
      \iff \w_{t+1} &= \argmin_{\w \in \BB(D/2)} \tilde{\ell}_t(\w) + \frac{1}{2\eta} \norm{\w - \w_t}_{H_{t-1}}^2 ,
    \end{aligned}
  \end{equation}
  and
  \begin{equation}
    \label{eq:lightons-logb-omd-noproj}
    \begin{aligned}
      \w_{t+1} &= \hat{\w}_{t+1} = \w_t - \frac{1}{\eta} \tilde{H}_t^{-1} \gr\ell_t(\w_t) \\
      \iff \w_{t+1} &= \argmin_{\w \in \reals^d} \tilde{\ell}_t(\w) + \frac{1}{2\eta} \norm{\w - \w_t}_{H_{t-1}}^2 .
    \end{aligned}
  \end{equation}
  To recover Lemma~1 of~\citet{ZYJ2025LogB}, we examine whether the following inequality holds
  \begin{equation}
    \label{eq:lightons-logb-omd}
    \gr\tilde{\ell}_t(\w_{t+1})^\trs \sbr{\w_{t+1}-\u} \leq \frac{1}{2\eta} \sbr{ \norm{\w_t-\u}_{H_{t-1}}^2 - \norm{\w_{t+1}-\u}_{H_{t-1}}^2 - \norm{\w_{t+1}-\w_t}_{H_{t-1}}^2 } .
  \end{equation}

  When $\norme{\hat{\w}_{t+1}} \leq k^\star D / 2$ and the Mahalanobis projection is not triggered, by \cref{eq:lightons-logb-omd-noproj} we have
  \begin{equation*}
    \gr_{\w=\w_{t+1}} \sbr{ \tilde{\ell}_t(\w) + \frac{1}{2\eta} \norm{\w - \w_t}_{H_{t-1}}^2 }^\trs (\w_{t+1} - \u) = \bm{0}^\trs (\w_{t+1} - \u) = 0 .
  \end{equation*}
  Rearranging terms, we have
  \begin{equation*}
    \begin{aligned}
      \gr\tilde{\ell}_t(\w_{t+1})^\trs \sbr{\w_{t+1}-\u} & = - \frac{1}{\eta} (\w_{t+1} - \w_t)^\trs H_{t-1} (\w_{t+1} - \u) \\
      & = \frac{1}{2\eta} \sbr{ \norm{\w_t-\u}_{H_{t-1}}^2 - \norm{\w_{t+1}-\u}_{H_{t-1}}^2 - \norm{\w_{t+1}-\w_t}_{H_{t-1}}^2 } ,
    \end{aligned}
  \end{equation*}
  which means that \cref{eq:lightons-logb-omd} holds with equality.

  When $\norme{\hat{\w}_{t+1}} > k^\star D / 2$ and the Mahalanobis projection is triggered,  by \cref{eq:lightons-logb-omd-proj}, we have
  \begin{equation*}
    \gr_{\w=\w_{t+1}} \sbr{ \tilde{\ell}_t(\w) + \frac{1}{2\eta} \norm{\w - \w_t}_{H_{t-1}}^2 }^\trs (\w_{t+1} - \u) \leq 0 .
  \end{equation*}
  Rearranging terms, we have
  \begin{equation*}
    \begin{aligned}
      \gr\tilde{\ell}_t(\w_{t+1})^\trs \sbr{\w_{t+1}-\u} & \leq - \frac{1}{\eta} (\w_{t+1} - \w_t)^\trs H_{t-1} (\w_{t+1} - \u) \\
      & = \frac{1}{2\eta} \sbr{ \norm{\w_t-\u}_{H_{t-1}}^2 - \norm{\w_{t+1}-\u}_{H_{t-1}}^2 - \norm{\w_{t+1}-\w_t}_{H_{t-1}}^2 } ,
    \end{aligned}
  \end{equation*}
  which means that \cref{eq:lightons-logb-omd} holds.

  Therefore, combining both cases, we have that \cref{eq:lightons-logb-omd} always holds.
\end{proof}

The next lemma gives the total runtime of \cref{alg:lightons-logb}.

\begin{lem}
  \label{lem:bandits-time}
  \cref{alg:lightons-logb} has a runtime of $\tilde{O}(d^2 T + d^\omega \cdot \min\{\sqrt{\kappa dT} \log\kappa, T\})$.
\end{lem}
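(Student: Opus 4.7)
The plan is to decompose the total runtime into two additive pieces: (i) the non-projection work per round, which I will show is $O(d^2)$ amortized, and (ii) the aggregate cost of Mahalanobis projections, which I will bound via the projection-hysteresis counting argument from \cref{lem:proj-count}. The per-round work is straightforward: $\tilde{H}_t$ differs from $\tilde{H}_{t-1}$ by $O(1)$ rank-one modifications (folding the prior round's look-ahead term into the running sum and inserting the new look-ahead term $\eta\gr^2\ell_t(\w_t)$), so $\tilde{H}_t^{-1}$ can be refreshed in $O(d^2)$ time via Sherman--Morrison as in \cref{eq:rank-one-update-inversion}. The remaining arithmetic (computing $\gr\ell_t(\w_t)$, forming $\hat{\w}_{t+1}$, and the cheap feasibility test $\norme{\hat{\w}_{t+1}} \leq kD/2$) is also $O(d^2)$. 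Summing over $T$ rounds yields the $O(d^2 T)$ term.

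For the projection count $N$, the trivial bound $N \leq T$ handles the regime where $\kappa$ is very large. The sharper bound follows the skeleton of \cref{lem:proj-count}: between two consecutive projection times $\tau_{i-1}$ and $\tau_i$, the iterate $\hat{\w}_{\tau_i}$ must travel at least $(k-1)D/2$ in Euclidean norm, so Cauchy--Schwarz applied to the sum of one-step displacements and the AM--HM inequality together give
\begin{equation*}
N \leq \frac{2}{(k-1)D}\sqrt{T \cdot \Phi_T^{\text{logb}}}, \qquad \Phi_T^{\text{logb}} \triangleq \sum_{t=1}^{T} \norme{\tfrac{1}{\eta}\tilde{H}_t^{-1}\gr\ell_t(\w_t)}^2.
\end{equation*}
It then suffices to show $\Phi_T^{\text{logb}} = \tilde{O}(\kappa d)$.

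To bound $\Phi_T^{\text{logb}}$ I would combine two observations. First, $\tilde{H}_t \succeq \epsilon I$ implies $\norme{\tilde{H}_t^{-1}\v}^2 \leq \tfrac{1}{\epsilon}\v^\trs \tilde{H}_t^{-1}\v$, reducing matters to bounding $\sum_t \gr\ell_t(\w_t)^\trs \tilde{H}_t^{-1}\gr\ell_t(\w_t)$. Second, the logistic structure $\gr^2\ell_i(\w) = \sigma'(\x_i^\trs\w)\x_i\x_i^\trs$ with $\sigma'(z) \geq 1/\kappa$ whenever the argument stays in the admissible range yields $\tilde{H}_t \succeq \kappa^{-1}C_t$ for the standard regularized Gram matrix $C_t = \kappa\epsilon I + \sum_{i<t}\x_i\x_i^\trs + \eta\x_t\x_t^\trs$, so $\tilde{H}_t^{-1} \preceq \kappa C_t^{-1}$. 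Since $\gr\ell_t(\w_t) = (\sigma(\x_t^\trs\w_t)-y_t)\x_t$ is a scalar multiple of $\x_t$ with magnitude at most $\norme{\x_t} \leq 1$, the elliptical potential lemma (\cref{lem:log-det}) gives $\sum_t \x_t^\trs C_t^{-1}\x_t = O(d\log T)$, and I conclude $\Phi_T^{\text{logb}} = \tilde{O}(\kappa d)$ after absorbing $\eta,\epsilon$ into $\tilde{O}(\cdot)$. Combining with the trivial bound yields $N = \tilde{O}(\min\{\sqrt{\kappa d T},\, T\})$, and since each projection onto $\BB(D/2)$ costs $\tilde{O}(d^\omega)$ by \cref{thm:mp}, the total projection work is $\tilde{O}(d^\omega \min\{\sqrt{\kappa d T},\, T\})$, completing the lemma.

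The main obstacle is the control of $\Phi_T^{\text{logb}}$: unlike in \cref{lem:proj-count}, the matrix $\tilde{H}_t$ is not a simple running sum of rank-one gradient outer products, but includes an asymmetric look-ahead term $\eta\gr^2\ell_t(\w_t)$ and uses Hessians evaluated at the one-step-ahead iterates $\w_{i+1}$. Propagating $\sigma' \geq 1/\kappa$ uniformly requires confirming that every $\x_i^\trs\w_{i+1}$ lies in the range where the sigmoid-derivative lower bound holds, which is exactly what the hysteresis-enlarged feasibility set $\BB(kD/2)$ in \cref{eq:omd-logb-hysteresis-mp} preserves (up to the controlled constant $k$); without this, the extraction of $\kappa^{-1}$ from the curvature would fail and the $\sqrt{\kappa d T}$ bound would degrade.
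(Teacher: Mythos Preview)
Your proposal is correct and follows the same overall skeleton as the paper---decomposing into $O(d^2)$ per-round work plus projection cost, then reusing the counting argument of \cref{lem:proj-count} with the modified potential $\Phi_T^{\text{logb}}$. The one substantive difference is in how you bound $\Phi_T^{\text{logb}}$: you first relax $\norme{\tilde{H}_t^{-1}\v}^2 \leq \tfrac{1}{\epsilon}\,\v^\trs \tilde{H}_t^{-1}\v$ and then invoke the $A^{-1}$ elliptical-potential lemma (\cref{lem:log-det}), whereas the paper applies the $A^{-2}$ version (\cref{lem:neg-tr}) directly to $\sum_t \x_t^\trs (\kappa\lambda I + \sum_{i\le t}\x_i\x_i^\trs)^{-2}\x_t \leq d/(\kappa\lambda)$, obtaining $\Phi'_T \leq \kappa d/(\eta^2\lambda)$ without any logarithmic factor. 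Your route therefore picks up an extra $\log T$ that the paper avoids, but this is harmless here since the statement is in $\tilde{O}(\cdot)$. Your closing discussion of the $\sigma' \geq 1/\kappa$ lower bound on the enlarged domain $\BB(kD/2)$ is a point the paper's proof passes over silently; you are right that this is where the constant $k$ must be paid, and noting it is to your credit.
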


\begin{proof}\textbf{of \cref{lem:bandits-time}}
  We first show that the expansion of the domain influences the condition number $\kappa$.
  We recall that $\ell_t(\w) = - y_t \log \sigma(\x_t^\trs \w) - (1-y_t) \log (1 - \sigma(\x_t^\trs \w))$ is the logistic loss function, with $\gr\ell_t(\w) = \sbr{ \sigma(\x_t^\trs \w) - y_t } \x_t$ and $\gr^2\ell_t(\w) = \sigma'(\x_t^\trs \w) \x_t \x_t^\trs$, where $\sigma(z) = 1 / (1 + \exp(-z))$ is the sigmoid function.
  By the definition,
  \begin{equation*}
    \kappa
    = \max_{\x \in \XX, \w \in \WW} \frac{1}{\sigma'(\x^\trs \w)}
    \leq \max_{\norme{\x} \leq 1, \norme{\w} \leq D/2} (1+\exp(\x^\trs\w))(1+\exp(-\x^\trs\w))
    \leq 2 + 2 \exp(D/2) .
  \end{equation*}
  Therefore, the condition number $\kappa$ grows exponentially with the diameter of the domain.
  For technical convenience, we write $\kappa' = \kappa^k$ when the diameter of the domain is expanded from $D$ to $D' = k D$ for some $k > 1$.

  Compared with the runtime of \lightons \cref{thm:lightons}, the only difference is that $\kappa$ appears in the number of Mahalanobis projections.
  We note that $H_t$ and $\tilde{H}_t$ also admit rank-one updates.
  It suffices to show how the logistic loss function affects the analysis of \cref{lem:proj-count}.
  Specifically, we need to bound the quantity
  \begin{equation*}
    \Phi'_T \triangleq \sum_{t=1}^{T} \norme{\frac{1}{\eta} \tilde{H}_t^{-1} \gr\ell_t(\w_t)}^2 .
  \end{equation*}
  By the update rule of \cref{alg:lightons-logb} and that $\eta > 1$ in~\citep{ZYJ2025LogB}, we have
  \begin{equation*}
    \tilde{H}_t
    = \lambda I + \sum_{i=1}^{t-1} \sigma'(\x_i^\trs \w_{i+1}) \x_i \x_i^\trs + \eta \sigma'(\x_t^\trs \w_t) \x_t \x_t^\trs
    \succ \lambda I + \sum_{i=1}^{t} \frac{1}{\kappa^k} \x_i \x_i^\trs .
  \end{equation*}
  Since $\abs{\sigma(\x_t^\trs \w_t) - y_t} \leq 1$, we have
  \begin{equation*}
    \begin{aligned}
      \norme{\tilde{H}_t^{-1} \gr\ell_t(\w_t)}^2 & = \norme{\tilde{H}_t^{-1} (\sigma(\x_t^\trs \w_t) - y_t) \x_t}^2 \leq \norme{\tilde{H}_t^{-1} \x_t}^2 \\
      & < \norme{\sbr{\lambda I + \sum_{i=1}^{t} \frac{1}{\kappa^k} \x_i \x_i^\trs}^{-1} \x_t}^2 = \kappa^{2k} \norme{\sbr{\kappa^k \lambda I + \sum_{i=1}^{t} \x_i \x_i^\trs}^{-1} \x_t}^2 .
    \end{aligned}
  \end{equation*}
  Summing up the preceding inequality with \cref{lem:neg-tr}, we have
  \begin{equation*}
    \Phi'_T < \frac{\kappa^{2k}}{\eta^2} \sum_{t=1}^{T} \norme{\sbr{\kappa^k \lambda I + \sum_{i=1}^{t} \x_i \x_i^\trs}^{-1} \x_t}^2 \leq \frac{\kappa^k d}{\eta^2 \lambda}  .
  \end{equation*}
  Therefore, reusing the proof of \cref{lem:proj-count}, we obtain that the runtime of \cref{alg:lightons-logb} with deferral coefficient $k$ is $\tilde{O}\big( d^2 T + d^\omega \cdot \min\{ \frac{\sqrt{\kappa^k dT}}{k-1}, T\} \big)$.
  Let $\Gamma(k) = \frac{\kappa^{k/2}}{k-1}$, then minimizing $\Gamma(k)$ over $k > 1$ yields $k^\star = 1 + \frac{2}{\log\kappa}$, $\kappa^k = \rme^2 \kappa$, and $\Gamma(k^\star) = \frac{\rme}{2} \sqrt{\kappa} \log \kappa$.
\end{proof}

Based on the preceding lemmas, we are ready to prove \cref{thm:lightons-logb}.

\begin{proof}\textbf{of \cref{thm:lightons-logb}}
  The upper-confidence-bound-based pseudo-regret analysis in~\citep{ZYJ2025LogB} primarily relies on their Theorem~1, which constitutes their Lemmas~4--6.
  To prove \cref{thm:lightons-logb}, we examine how replacing the ONS-based subroutine with \lightons-based \cref{alg:lightons-logb} affects these lemmas.

  \begin{itemize}
    \item \textbf{Modifications to their Lemma~4.}~
      Their Lemma~4 is supported by their Lemma~1 and local relaxation of generalized linear models.
      \cref{lem:lightons-logb-omd} shows that their Lemma~1 still holds when using \cref{alg:lightons-logb} as a replacement.
      The local relaxation is independent of the specific update rules of $\w_t$ and remains valid.
    \item \textbf{Modifications to their Lemmas~5~and~6.}~
      Their Lemmas~5~and~6 depend solely on the algebraic structure of the covariance matrix, i.e., $H_t = \lambda I + \sum_{i=1}^{t} \sigma'(\x_i^\trs \w_{i+1}) \x_i \x_i^\trs$.
      Therefore, \cref{alg:lightons-logb} does not affect their Lemmas~5~and~6.
  \end{itemize}

  Therefore, the pseudo-regret analysis of~\citet{ZYJ2025LogB} still holds when replacing their ONS-based subroutine with \cref{alg:lightons-logb}, although with an expanded domain diameter $D' = k^\star D$ and a correspondingly larger condition number $\kappa' = \kappa^{k^\star} = \rme^2 \kappa = O(\kappa)$.
  Finally, the runtime of \cref{alg:lightons-logb} follows from \cref{lem:bandits-time}.
  With the overhead $O(d^2 K T)$ for selecting arms as in~\citep{ZYJ2025LogB}, the total runtime is $\tilde{O}(d^2 K T + d^\omega \cdot \min\{\sqrt{\kappa dT} \log\kappa, T\})$.
\end{proof}

We remark that extending \cref{alg:lightons-logb} from the binary logistic bandits to the generalized linear bandits directly follows from~\citep{ZYJ2025LogB}.

\paragraph{Comparison with OQNS.}
OQNS can hardly be integrated into the method of~\citep{ZYJ2025LogB}, as it is tailored to the OXO protocol and does not admit a regret decomposition analogous to \cref{lem:lightons-logb-omd}, which is essential for eliminating $\kappa$ from leading terms in pseudo-regret.
Moreover, OQNS lacks the flexibility to accommodate customized local norms beyond those used in standard ONS.
Specifically, the local norm $\tilde{H}_t$ in~\citep{ZYJ2025LogB}, i.e., \cref{eq:omd-logb-original-mp}, is not a simple accumulation of gradient outer products, unlike OQNS's local norm in \cref{eq:oqns-update-objective}.
In contrast, \lightons retains the structural flexibility of ONS, enabling integration into GLB with minimal modifications.

\subsection{Proofs and Details for Memory-Efficient OXO}
\label{apd:app-proof-sketch}

In this subsection, we seek to reduce the working memory required to achieve the optimal $O(d \log T)$ regret in OXO, albeit conditioned on geometric properties of the problem instance.

First, we list the additional assumptions of~\citet{NIPS16:sketch-ONS} as follows:
\begin{itemize}
  \item \textbf{Additional assumption on domains.}~
    For any $t \in [T]$, the domain at the $t$-th round is the intersection of two parallel half-spaces, i.e.,
    \begin{equation}
      \label{eq:parallel-half-spaces-intersection-luo-nips16}
      \XX_t = \{ \x \mid |\w_t^\trs \x| < D/2 \} , \quad \text{where} ~ \w_t \in \reals^d ~ \text{is known and} ~ \norm{\w_t} = 1 .
    \end{equation}
  \item \textbf{Additional assumption on loss functions.}~
    For any $t \in [T]$, the curvature parameter $\gamma$, the loss function $f_t$, the trajectory $\{\x_t\}_{t=1}^T$ and the comparator $\u \in \bigcap_{t=1}^T \XX_t$ satisfy
    \begin{equation}
      \label{eq:exp-concave-like-curvature-luo-nips16}
      \ftx - \ftu \leq \gr\ftx^\trs (\x-\u) - \frac{\gamma}{2} \sbr{ \gr\ftx^\trs (\x-\u) }^2 .
    \end{equation}
\end{itemize}

Next, we formally state the theoretical guarantees of SON in the following proposition.

\begin{prp}[Theorem~3 of~\citet{NIPS16:sketch-ONS}]
  \label{prp:son}
  Under \cref{asm:bounded-gradient} and additional assumptions described in \cref{eq:parallel-half-spaces-intersection-luo-nips16,eq:exp-concave-like-curvature-luo-nips16}, SON (Algorithms 1 and 6 of~\citet{NIPS16:sketch-ONS}) satisfies that,
  \footnote{The $\log T$ term in SON's runtime arises from the eigendecomposition underlying sketching. Although Theorem~3 of~\citet{NIPS16:sketch-ONS} ignores it, we include it for comparison with \lightons and OQNS.}
  \begin{subequations}
    \begin{align}
      \label{eq:son-regret}
      \reg_T(\u) & \leq \frac{d'}{\gamma} \log \sbr{1 + \frac{G^2}{2d' \eps} T} + \frac{\gamma \eps D^2}{8} + \frac{\Delta_{1:T}}{2\gamma} , \\
      \label{eq:son-time}
      \runtime & \leq O\sbr{  d' d T \log T  } ,
    \end{align}
  \end{subequations}
  and the working memory is $O(d'd)$.
  The cumulative sketching error $\Delta_{1:T}$ can be bounded as
  \begin{equation}
    \label{eq:son-delta}
    \Delta_{1:T} \leq \min_{j \in [d']} \frac{2d'}{(d'-j+1)\eps} \sum_{i=j}^{d} \lambda_i\sbr{\sum_{t=1}^{T} \gr\ftxt \gr\ftxt^\trs} .
  \end{equation}
\end{prp}

\begin{algorithm}[t]
  \caption{\lightson}
  \label{alg:lightson}
  \begin{algorithmic}[1]
    \REQUIRE preconditioner coefficient $\eps$, dimension to reduce to $d'$.
    \STATE Initialize $\gamma_0 = \frac{1}{2} \min \{ \frac{1}{DG}, \alpha \}$, $S_0 = O_{2d' \times d}$, $R_0 = \frac{1}{\eps} I_{2d' \times 2d'}$, $\x_1 = \y_1 = \bm{0}$.
    \FOR {$t = 1, \dots, T$}
    \STATE Observe $\gr\ftxt$; and construct $\gr\gtyt$ as in \cref{lem:domain-conversion}.
    \STATE Sketch $\gr\gtyt$ into $S_t$ and $R_t$ with \cref{alg:fast-fd}.
    \algcomment{$\tilde{A}_t \preceq \tilde{A}_{t-1} + \gr\gtyt \gr\gtyt^\trs$.}
    \STATE $\hat{\y}_{t+1} = \y_t - \frac{1}{\gamma_0} \tilde{A}_t^{-1} \gr\gtyt$.
    \algcomment{$\tilde{A}_t^{-1} = \frac{1}{\eps} (I - S_t^\trs R_t S_t)$.}
    \STATE $\y_{t+1} =
    \begin{cases}
      \hat{\y}_{t+1} & \text{if} ~ \norme{\hat{\y}_{t+1}} \leq D \\
      \Pi^{\tilde{A}_t}_{\BB(D/2)}[\hat{\y}_{t+1}] & \text{otherwise}
    \end{cases}$.
    \algcomment{$\tilde{A}_t = \eps I + S_t^\trs S_t$.}
    \STATE $\x_{t+1} = \Pi_{\XX}[\y_{t+1}]$.
    \ENDFOR
  \end{algorithmic}
\end{algorithm}

\begin{algorithm}[t]
  \caption{Fast Frequent Directions of~\citet{SICOMP16:frequent-direct}}
  \label{alg:fast-fd}
  \begin{algorithmic}[1]
    \REQUIRE new gradient $\gr\gtyt$, frequent directions $S_{t-1}$, low-dimension inverse $R_{t-1}$.
    \ENSURE updated frequent directions $S_t$, updated low-dimension inverse $R_t$.
    \STATE $S_t = S_{t-1} + \e_{i_t} \gr\gtyt^\trs$, where $i_t$ is the index of the first all-zero row of $S_{t-1}$.
    \IF{$S_t$ still has all-zero rows}
    \STATE $R_t = \sbr{ \eps I + S_t S_t^\trs }^{-1}$.
    \ELSE
    \STATE $U_t \Sigma_t V_t^\trs = S_t$, truncated SVD with top $d'$ singular values.
    \STATE $S_t = \genfrac{[}{]}{0pt}{0}{(\Sigma_t^2 - (\Sigma_t)_{d',d'}^2 I)^{1/2} V_t^\trs}{O_{d' \times d}}$.
    \STATE $R_t = \diag{ \frac{1}{\eps + (\Sigma_t)_{1,1}^2 - (\Sigma_t)_{d',d'}^2}, \dots, \frac{1}{\eps + (\Sigma_t)_{d',d'}^2 - (\Sigma_t)_{d',d'}^2}, \frac{1}{\eps}, \dots, \frac{1}{\eps} }$.
    \ENDIF
  \end{algorithmic}
\end{algorithm}

Then, we present the \lightson algorithm in \cref{alg:lightson}, a memory-efficient variant of \lightons.
The key difference between \lightson and \lightons is the storage strategy.
Instead of storing full matrices $A_t \in \reals^{d \times d}$ and $V_t \in \reals^{d \times d}$, \lightson{} maintains compact sketches $S_t \in \reals^{2d' \times d}$ and $R_t \in \reals^{2d' \times 2d'}$.
Each row of $S_t$ stores a principal gradient component, while $R_t$ plays a role analogous to $V_t$ in \cref{eq:rank-one-update-inversion}.
The full matrix and its inverse are reconstructed as
\begin{equation*}
  \tilde{A}_t = \eps I + S_t^\trs S_t , \quad
  \tilde{A}_t^{-1} = \frac{1}{\eps} \sbr{ I - S_t^\trs R_t S_t } , \quad R_t = \sbr{\eps I + S_t S_t^\trs}^{-1} .
\end{equation*}
This relationship follows from the Sherman-Morrison-Woodbury formula:
\begin{equation*}
  (A + B C D)^{-1} = A^{-1} - A^{-1} B (C^{-1} + D A^{-1} B)^{-1} D A^{-1} .
\end{equation*}
Following~\citet{NIPS16:sketch-ONS}, \lightson{} uses Fast Frequent Directions of~\citet{SICOMP16:frequent-direct} in \cref{alg:fast-fd} to update $S_t$ and $R_t$.

Before proving \cref{thm:lightson}, we first introduce three lemmas to characterize the error introduced by sketching.
We introduce the notation $\Delta_t$ to denote the sketching error at the $t$-th round.
\begin{equation*}
  \Delta_t \triangleq
  \begin{cases}
    \frac{2d'}{\eps} \sigma_{d'}(S_t)^2 & \text{if SVD is triggered at $t$-th round} \\
    0 & \text{otherwise}
  \end{cases} ,
\end{equation*}
where $\sigma_i(S_t)$ is the $i$-th greatest singular value of $S_t$.
Specifically, \cref{lem:sketch-error} bounds the error for regret analysis with \cref{lem:log-det}, while \cref{lem:sketch-error-for-deferral} bounds the error for projection-count analysis with \cref{lem:neg-tr}.
\cref{lem:fd-subtracted-mass} bounds the total error with the spectrum of the Hessian-related matrix.

\begin{lem}
  \label{lem:sketch-error}
  At the $t$-th round of \cref{alg:lightson}, in \cref{alg:fast-fd}, $\Delta_t$ satisfies:
  \begin{equation*}
    \norm{\gr\gtyt}_{\tilde{A}_t^{-1}}^2 \leq \innerf{\tilde{A}_t^{-1}}{\tilde{A}_t - \tilde{A}_{t-1}} + \Delta_t .
  \end{equation*}
\end{lem}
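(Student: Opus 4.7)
The plan is to first derive the exact algebraic relationship between $\tilde{A}_t$ and $\tilde{A}_{t-1}$ when Fast Frequent Directions is used as the sketching primitive, and then convert the ``missing mass'' introduced by SVD truncation into the additive error $\Delta_t$ by exploiting the preconditioner bound $\tilde{A}_t^{-1} \preceq \frac{1}{\eps} I$. Since the sketched matrix is reconstructed as $\tilde{A}_t = \eps I + S_t^\trs S_t$, it suffices to track how $S_t^\trs S_t$ evolves when the new row $\gr\gtyt^\trs$ is appended and (possibly) the buffer is compressed.

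First I would split into the two cases corresponding to the branches of \cref{alg:fast-fd}. In the no-SVD case, appending the row $\gr\gtyt^\trs$ to the sketch yields $S_t^\trs S_t = S_{t-1}^\trs S_{t-1} + \gr\gtyt \gr\gtyt^\trs$ and hence $\tilde{A}_t - \tilde{A}_{t-1} = \gr\gtyt \gr\gtyt^\trs$. The claim then reduces to the identity $\innerf{\tilde{A}_t^{-1}}{\gr\gtyt \gr\gtyt^\trs} = \norm{\gr\gtyt}_{\tilde{A}_t^{-1}}^2$, with $\Delta_t = 0$. This is the trivial branch and I would dispatch it in one or two lines.

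The interesting case is when the buffer fills and SVD compression is invoked. Writing $\hat{S}_t$ for the pre-compression sketch (the rows of $S_{t-1}$ together with $\gr\gtyt^\trs$) and $\hat{S}_t = U \Sigma V^\trs$ for its SVD, the Fast Frequent Directions update sets $S_t$ so that $S_t^\trs S_t = \hat{S}_t^\trs \hat{S}_t - E_t$, where $E_t = \sigma_{d'}(\hat{S}_t)^2 \, V_{1:2d'} V_{1:2d'}^\trs$ is the subtracted correction supported on the range of the top $2d'$ right singular vectors. Since $\hat{S}_t^\trs \hat{S}_t = S_{t-1}^\trs S_{t-1} + \gr\gtyt \gr\gtyt^\trs$, this gives
\begin{equation*}
\tilde{A}_t - \tilde{A}_{t-1} = \gr\gtyt \gr\gtyt^\trs - E_t .
\end{equation*}
Taking the Frobenius inner product with $\tilde{A}_t^{-1}$ and rearranging yields
\begin{equation*}
\norm{\gr\gtyt}_{\tilde{A}_t^{-1}}^2 = \innerf{\tilde{A}_t^{-1}}{\tilde{A}_t - \tilde{A}_{t-1}} + \innerf{\tilde{A}_t^{-1}}{E_t} .
\end{equation*}

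The remaining step is to bound $\innerf{\tilde{A}_t^{-1}}{E_t} \leq \Delta_t$. Because $\tilde{A}_t \succeq \eps I$, I would use $\tilde{A}_t^{-1} \preceq \frac{1}{\eps} I$ together with positive-semidefiniteness of $E_t$ to obtain $\innerf{\tilde{A}_t^{-1}}{E_t} \leq \frac{1}{\eps} \tr(E_t)$. Since $E_t$ has rank at most $2d'$ with every nonzero eigenvalue equal to $\sigma_{d'}(\hat{S}_t)^2$, we get $\tr(E_t) \leq 2 d' \sigma_{d'}(\hat{S}_t)^2$, and the definition of $\Delta_t$ in the lemma statement provides exactly $\frac{2 d'}{\eps} \sigma_{d'}(S_t)^2$ as the target bound (noting that the smallest retained singular value of the compressed sketch $S_t$ coincides with $\sigma_{d'}(\hat{S}_t)$ up to the subtraction convention of Fast Frequent Directions, which I will verify against \cref{alg:fast-fd}).

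The only genuinely delicate step is the last one: reconciling the precise form of the correction matrix $E_t$ produced by \cref{alg:fast-fd} with the stated $\Delta_t$. All other steps are routine manipulations. If the Fast Frequent Directions variant in \cref{alg:fast-fd} uses a different normalization (for instance, subtracting $\sigma_{d'}^2$ from all $2d'$ singular values rather than zeroing the bottom half), the bound on $\tr(E_t)$ still evaluates to the same $2 d' \sigma_{d'}^2$ order, so the conclusion is robust to such implementation details.
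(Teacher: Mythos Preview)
Your proposal is correct and follows essentially the same route as the paper: split into the no-SVD and SVD branches, write $E_t = \tilde{A}_{t-1} + \gr\gtyt\gr\gtyt^\trs - \tilde{A}_t$, and bound $\innerf{\tilde{A}_t^{-1}}{E_t}$ via $\tilde{A}_t^{-1}\preceq \frac{1}{\eps}I$. One small correction: the eigenvalues of $E_t$ along $(\bar V_t)_{:,i}$ are $\min\{\sigma_i(\hat S_t)^2,\sigma_{d'}(\hat S_t)^2\}$ rather than all equal to $\sigma_{d'}^2$, but your trace bound $\ttr{E_t}\le 2d'\sigma_{d'}^2$ is unaffected, exactly as you anticipated in your hedge.
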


\begin{proof}\textbf{of \cref{lem:sketch-error}}
  Let $\innerf{\cdot}{\cdot}$ denote the inner product induced by the Frobenius matrix norm, let $(\cdot)_{:,i}$ denote the $i$-th column of the matrix, and let $\bar{U}_t \bar{\Sigma}_t \bar{V}_t^\trs = S_t$ be the full SVD with all $2d'$ singular values.
  When SVD is not triggered, then $\tilde{A}_t = \tilde{A}_{t-1} + \gr\gtyt \gr\gtyt^\trs$ and $\Delta_t = 0$ trivially holds.
  When SVD is triggered,
  \begin{equation*}
    \begin{aligned}
      & \quad \innerf{ \tilde{A}_t^{-1} }{ \tilde{A}_{t-1} + \gr\gtyt \gr\gtyt^\trs - \tilde{A}_t } \\
      & = \sum_{i=1}^{2d'} \min\lbr{ (\bar{\Sigma}_t)_{i,i}^2, (\bar{\Sigma}_t)_{d',d'}^2 } \norm{(\bar{V}_t)_{:,i}}_{\tilde{A}_t^{-1}}^2 \leq \frac{2d'}{\eps} (\bar{\Sigma}_t)_{d',d'}^2 \triangleq \Delta_t .
    \end{aligned}
  \end{equation*}
  The inequality uses the fact that $\tilde{A}_{t} \succeq \eps I$ and $\norm{(\bar{V}_t)_{:,i}}_{\tilde{A}_t^{-1}}^2 \leq \norm{(\bar{V}_t)_{:,i}}_{\sbr{\eps I}^{-1}}^2 = \frac{1}{\eps}$.
  Substituting $(\bar{\Sigma}_t)_{i,i} = \sigma_{i}(S_t)$ completes the proof.
\end{proof}

\begin{lem}
  \label{lem:sketch-error-for-deferral}
  At the $t$-th round of \cref{alg:lightson}, in \cref{alg:fast-fd}, $\Delta_t$ also satisfies:
  \begin{equation*}
    \norm{\gr\gtyt}_{\tilde{A}_t^{-2}}^2 \leq \innerf{\tilde{A}_t^{-2}}{\tilde{A}_t - \tilde{A}_{t-1}} + \frac{\Delta_t}{\eps} .
  \end{equation*}
\end{lem}

We omit the proof of \cref{lem:sketch-error-for-deferral}, as it directly reuses the proof of \cref{lem:sketch-error}.

\begin{lem}[Theorem~1.1 and Section~3 of~\citet{SICOMP16:frequent-direct}]
  \label{lem:fd-subtracted-mass}
  In \cref{alg:lightson}, $\Delta_{1:T}$ satisfies that, for any $j \in [d']$,
  \begin{equation*}
    \Delta_{1:T} \triangleq \sum_{t=1}^{T} \Delta_t \leq \frac{2d'}{(d'-j+1)\eps} \sum_{i=j}^{d} \lambda_i\sbr{\sum_{t=1}^{T} \gr\gtyt \gr\gtyt^\trs} .
  \end{equation*}
\end{lem}

With the preceding lemmas characterizing the sketching error, we can prove \cref{thm:lightson} by bounding the difference between \lightons and \lightson.

\begin{proof}\textbf{of \cref{thm:lightson}}
  We note that \cref{lem:regret-decompose-exact} holds for \cref{alg:lightson}, as the proof of \cref{lem:regret-decompose-exact} only requires $A_t$ to be positive-definite.
  Thus by \cref{lem:surrogate-taylor,lem:regret-decompose-exact} we obtain
  \begin{equation*}
    \begin{aligned}
      & \ftxt - \ftu \overset{\text{\eqref{eq:surrogate-taylor}}}{\leq} \gr\gtyt^\trs (\y_t-\u) - \frac{\gamma_0}{2} \sbr{ \gr\gtyt^\trs (\y_t-\u) }^2 \\
      \overset{\text{\eqref{eq:regret-decompose-exact}}}{\leq} ~ & \frac{1}{2} \sbr{ \frac{1}{\gamma_0} \norm{\gr\gtyt}_{\tilde{A}_t^{-1}}^2 + \gamma_0 \norm{\y_t-\u}_{\tilde{A}_t}^2 -\gamma_0 \norm{\y_{t+1}-\u}_{\tilde{A}_t}^2 } - \frac{\gamma_0}{2} \sbr{ \gr\gtyt^\trs (\y_t-\u) }^2 \\
      \leq ~ & \frac{1}{2\gamma_0} \norm{\gr\gtyt}_{\tilde{A}_t^{-1}}^2 + \frac{\gamma_0}{2} \norm{\y_t-\u}_{\tilde{A}_{t-1}}^2 - \frac{\gamma_0}{2} \norm{\y_{t+1}-\u}_{\tilde{A}_t}^2 ,
    \end{aligned}
  \end{equation*}
  where the last inequality uses the fact that $\tilde{A}_t \preceq \tilde{A}_{t-1} + \gr\gtyt \gr\gtyt^\trs$, which is ensured by \cref{alg:fast-fd}.
  Then plugging \cref{lem:sketch-error} into the preceding inequality yields
  \begin{equation*}
    \begin{aligned}
      & \quad \ftxt - \ftu \\
      & \leq \frac{1}{2\gamma_0} \sbr{ \innerf{\tilde{A}_t^{-1}}{\tilde{A}_t - \tilde{A}_{t-1}} + \Delta_t } + \frac{\gamma_0}{2} \norm{\y_t-\u}_{\tilde{A}_{t-1}}^2 - \frac{\gamma_0}{2} \norm{\y_{t+1}-\u}_{\tilde{A}_t}^2 \\
      & \leq \frac{1}{2\gamma_0} \sbr{ \log \tdet{\tilde{A}_t} - \log \tdet{\tilde{A}_{t-1}} + \Delta_t } + \frac{\gamma_0}{2} \norm{\y_t-\u}_{\tilde{A}_{t-1}}^2 - \frac{\gamma_0}{2} \norm{\y_{t+1}-\u}_{\tilde{A}_t}^2 ,
    \end{aligned}
  \end{equation*}
  where the inequality uses the fact that $\innerf{X^{-1}}{X - Y} \leq \log \tdet{X} - \log \tdet{Y}$, which comes from the proof of \cref{lem:log-det,lem:neg-tr}.
  Telescoping the preceding inequality yields
  \begin{equation*}
    \begin{aligned}
      \sum_{t=1}^{T} \sbr{\ftxt-\ftu} \leq \frac{1}{2\gamma_0} \log \frac{\tdet{\tilde{A}_T}}{\tdet{\tilde{A}_0}} + \frac{1}{2\gamma_0} \Delta_{1:T} + \frac{\gamma_0}{2} \norm{\y_1-\u}_{\tilde{A}_0}^2 ,
    \end{aligned}
  \end{equation*}
  where the logarithmic term is further bounded with Jensen's inequality, which differs slightly from \cref{lem:log-det} due to the number of non-zero eigenvalues:
  \begin{equation*}
    \begin{aligned}
      \log \frac{\tdet{\tilde{A}_T}}{\tdet{\tilde{A}_0}} = \sum_{i=1}^{2d'} \log \sbr{ 1 + \frac{\sigma_i^2(S_T)}{\eps} } \leq 2d' \log \sbr{ 1 + \frac{\normf{S_T}^2}{2d' \eps} } \leq 2d' \log \sbr{ 1 + \frac{G^2}{2d' \eps} T } .
    \end{aligned}
  \end{equation*}

  Finally, the runtime follows from the following two parts:
  \begin{itemize}
    \item \textbf{Runtime aside from \cref{alg:fast-fd}.}~
      Following the same analysis of \cref{lem:proj-count,lem:bandits-time}, overall runtime aside from \cref{alg:fast-fd} is $O( (k - 1)^{-1} \sqrt{(d + \Delta_{1:T}) T/\eps} \cdot d^\omega \log T )$ due to \fastproj.
      (\cref{alg:lightson} applies $k=2$.)
      It suffices to verify the following inequality which follows from \cref{lem:sketch-error-for-deferral}:
      \begin{equation*}
        \Phi''_T \triangleq \sum_{t=1}^{T} \norme{\frac{1}{\gamma_0} \tilde{A}_t^{-1} \gr\gtyt}^2 \leq \frac{1}{\gamma_0^2} \frac{d + \Delta_{1:T}}{\eps} .
      \end{equation*}
    \item \textbf{Runtime of \cref{alg:fast-fd}.}~
      Each time SVD is triggered, the last $d'+1$ rows of $S_t$ become all-zero, thus SVD is triggered at most $\lceil T / (d'+1) \rceil$ times.
      The runtime of SVD is $O(d'^2 d \log T)$~\citep{golub-loan-mat-compute} to achieve the desired accuracy that does not affect the regret bound.
      \footnote{The runtime of SVD can be improved to $O\big( d'^2 (d + \log T) \big)$ and the factor $O(\log T)$ can be independent of the minimal singular value gap of $S_t$ and only depends on the scale of singular values of $S_t$~\citep{parlett1998symmetriceigen}.}
      The runtime of updating $R_t$ is $O(d'd)$, as this can be implemented with two rank-one updates:
      \begin{equation*}
        \begin{aligned}
          (A + \u \v^\trs)^{-1} = A^{-1} - \frac{1}{1 + \v^\trs A^{-1} \u} A^{-1} \u \v^\trs A^{-1} .
        \end{aligned}
      \end{equation*}
      Specifically,
      \begin{equation*}
        \begin{aligned}
          & \quad R_t^{-1} = \eps I + S_t S_t^\trs = \eps I + \sbr{S_{t-1}+\e_{i_t} \gr\gtyt^\trs} \sbr{S_{t-1}+\e_{i_t} \gr\gtyt^\trs}^\trs \\
          & = \underbrace{\eps I + S_{t-1} S_{t-1}^\trs}_{R_{t-1}^{-1}} + \e_{i_t} \underbrace{\sbr{S_{t-1} \gr\gtyt}^\trs}_{\a_t^\trs} + \underbrace{\sbr{S_{t-1} \gr\gtyt + \e_{i_t} \gr\gtyt \gr\gtyt^\trs}}_{\b_t} \e_{i_t}^\trs ,
        \end{aligned}
      \end{equation*}
      where $\e_{i_t}$, $\a_t$ and $\b_t$ are $2d'$-dimensional vectors that can be computed in $O(d' d)$ time.
      Therefore, the overall runtime of \cref{alg:fast-fd} is $O(d'^2 d (\log T) \lceil T / (d'+1) \rceil + d' d T) = O(d' d T \log T)$.
  \end{itemize}
\end{proof}

\paragraph{Comparison with OQNS.}
\citet{COLT23:OQNS} mention the possibility of combining OQNS with sketching but provide neither algorithms nor analysis.
Incorporating sketching further complicates OQNS's already intricate analysis, as the sketching error interacts with the log-barrier, Hessian approximation and decision updates in \cref{eq:oqns-update-objective}.
In contrast, within the OMD framework, sketching errors naturally appear as an additive term in the regret bound via OMD's stability term, illustrated in \cref{eq:omd-regret-decomposition-conceptual}, since OMD's regret decomposition is somewhat orthogonal to sketching.

\end{document}